\theoremstyle{plain}
\newtheorem{theorem}{Theorem}[section]
\newtheorem{proposition}[theorem]{Proposition}
\newtheorem{lemma}[theorem]{Lemma}
\theoremstyle{definition}
\newtheorem{definition}[theorem]{Definition}
\newtheorem{assumption}[]{Assumption}
\theoremstyle{remark}
\def\twofig{0.5\textwidth}
\def\threefig{0.3\textwidth}
\newcommand\revv[1]{{#1}}
\author[1]{Martin Rouault\thanks{Corresponding author: \href{mailto: rouault.martin@protonmail.com}{rouault.martin@protonmail.com}}}
\author[1]{Rémi Bardenet}
\author[2]{Mylène Ma\"ida}
\affil[1]{\small Univ. Lille, CNRS, Centrale Lille, UMR 9189 -- CRIStAL, 59651 Villeneuve d'Ascq, France}
\affil[2]{\small Univ. Lille, CNRS, UMR 8524 -- Laboratoire Paul Painlevé, F-59000 Lille, France}
\begin{document}

\onecolumn
\title{Quenched large deviations for \\Monte Carlo integration with Coulomb gases}

\maketitle

\begin{abstract}
    Gibbs measures, such as Coulomb gases, are popular in modelling systems of interacting particles. 
Recently, we proposed to use Gibbs measures as randomized numerical integration algorithms with respect to a target measure $\pi$ on $\mathbb R^d$, following the heuristics that repulsiveness between particles should help reduce integration errors. 
A major issue in this approach is to tune the interaction kernel and confining potential of the Gibbs measure, so that the equilibrium measure of the system is the target distribution $\pi$. 
Doing so usually requires another Monte Carlo approximation of the \emph{potential}, i.e. the integral of the interaction kernel with respect to $\pi$. 
Using the methodology of large deviations from Garcia--Zelada (2019), we show that a random approximation of the potential preserves the fast large deviation principle that guarantees the proposed integration algorithm to outperform independent or Markov quadratures.
For non-singular interaction kernels, we make minimal assumptions on this random approximation, which can be the result of a computationally cheap Monte Carlo preprocessing. 
For the Coulomb interaction kernel, we need the approximation to be based on another Gibbs measure, and we prove in passing a control on the uniform convergence of the approximation of the potential.

\end{abstract}

    \emph{Keywords:} Gibbs measures, Large deviations, Monte Carlo, maximum mean discrepancy.


\section{Introduction}
\label{s:introduction}

Numerical integration, or \emph{quadrature}, is of routine use in applied mathematics, such as in Bayesian inference \citep{robert_bayesian_2007}.
For a target measure $\pi$ on $\mathbb{R}^d$, a quadrature is a measure $\mu$ supported on a finite number $n$ of atoms such that one can bound the worst-case integration error
\begin{equation}
    \label{e:worst_case_error}
    \mathsf{E}_\mathcal{F}(\mu) = \sup_{f\in\mathcal{F}} \left\vert \int f\mathrm{d}(\pi-\mu)\right\vert 
\end{equation}
for a class of functions $\mathcal{F} \subset L^1(\pi)$.
A wealth of constructions have been investigated in moderate and large dimension. For instance, Monte Carlo methods correspond to taking $\mu$ to be a random measure; they come with strong probabilistic controls on \eqref{e:worst_case_error} when $\mathcal{F}$ is a singleton \citep{robert_monte_2004}.
On the other hand, quasi-Monte Carlo methods correspond to deterministic quadratures that come with uniform controls on a larger set $\mathcal{F}$ of smooth functions \citep{DiPi10}.

More recently, an intermediate string of works has been approximating $\pi$ by a minimizer of the so-called \emph{maximum mean discrepancy} (MMD)
\begin{align}
    \mu \mapsto I_{K}(\mu - \pi):= \iint K(x, y)\,\mathrm{d}(\mu -\pi)(x)\,\mathrm{d}(\mu - \pi)(y), 
    \label{e:mmd}
\end{align}
where $K$ is a suitable bivariate function called an \emph{interaction kernel}, and the minimization is carried over measures supported on $n$ points.
The motivation is twofold. 
First, when $K$ is bounded, one can show that \eqref{e:mmd} is the worst-case error \eqref{e:worst_case_error} where the set $\mathcal{F}$ of integrands is the unit ball of the reproducing kernel Hilbert space (RKHS) $\mathcal{H}_K$ of kernel $K$ \citep{sriperumbudur_hilbert_2009}, see \cref{p:sriperumbudur}.
Throughout the paper, we will abusively denote by $\mathrm{E}_{\mathcal{H}_K}$ the worst--case error in \cref{e:worst_case_error} when $\mathcal{F}$ is the unit ball of $\mathcal{H}_K$.
Second, as long as the \emph{kernel mean embedding} (also known as \emph{potential})
\begin{equation}
    \label{e:potential}
    U_K^\pi:z \mapsto \int K(z, x)\mathrm{d}\pi(x)
\end{equation}
can be numerically evaluated at any point of $\mathbb{R}^d$, \eqref{e:mmd} can be evaluated as well, which makes its minimization amenable to optimization algorithms. 
At the price of the evaluation of \eqref{e:potential}, MMD minimization seems to offer a compromise between fast scalable algorithms and small integration errors over an RKHS \citep{ghahramani_bayesian_2002, chen_super_samples_2012,bach_equivalence_2012,pronzato_bayesian_2020,mak_support_2018,chen_stein_2018,chen_stein_2020,korba_kernel_2021,dwivedi_kernel_2021,chatalic_efficient_2025}; see \citep[Table 1]{chatalic_efficient_2025} for an overview of existing methods.
    Very recently, \citet{chen_stationary_2025} have also shown that, while minimizing the MMD is a nonconvex problem, stationary points (instead of minimizers) of the MMD actually already lead to efficient quadrature, while being more computationally tractable.

Randomized quadratures based on determinantal point processes \citep{bardenet_monte_2020,MaCoAm20} have also been found to control the MMD \citep{BeBaCh19,BeBaCh20}.
In the same spirit, we proposed in \citep{rouault_monte_2024} a direct probabilistic relaxation of MMD minimization, where we draw the atoms $y_1, \dots, y_n$ of our quadrature from the (Gibbs) measure on $\left(\mathbb{R}^{d}\right)^{n}$ defined as
\begin{align}
\label{e:gibbs_measure}
    \mathrm{d}\mathbb{P}_{n, \beta_n}^{V}(y_1, \dots, y_n) = \frac{1}{Z_{n, \beta_n}^{V}}\exp\left(-\beta_n H_n(y_1, \dots, y_n)\right)\,\mathrm{d}y_1\,\dots\mathrm{d}y_n,
\end{align}
where $\beta_n>0$ is a parameter called the \emph{inverse temperature}, 
\begin{align}
\label{e:H_n}
    H_n(y_1, \dots, y_n) = \frac{1}{2 n^2}\sum_{i \neq j} K(y_i, y_j) + \frac{1}{n}\sum_{i = 1}^{n}V(y_i),
\end{align}
includes an interaction term that is an empirical proxy for \eqref{e:mmd}, $V:\mathbb{R}^d\mapsto \mathbb{R}$ is a \emph{potential} that goes to infinity as $\Vert y\Vert\rightarrow \infty$ sufficiently fast to \emph{confine} the nodes $y_1, \dots, y_n$, i.e. avoid the density in \eqref{e:gibbs_measure} to be maximal when the $y_i$s diverge to $\infty$, and $Z_{n, \beta_n}^{V}$ is a normalizing constant.

The family of distributions \eqref{e:gibbs_measure} encompasses well-known point processes. 
For instance, when
 \[ K(x,y) = g(x,y) := \left\{ \begin{array}{ll}
                                \frac{1}{\lvert x -y\rvert^{d-2}}, & \textrm{ if } d\ge 3,\\
                                -\log|x-y|, & \textrm{ if } d=2,
                               \end{array}
\right.\]
the Gibbs measure \eqref{e:gibbs_measure} is called a \emph{Coulomb gas}, or \emph{one-component plasma} \citep{serfaty_systems_2018,serfaty_lectures_2024}.
It models the electrostatic interaction of unit-charge elementary particles in $\mathbb{R}^{d}$ that are further confined by an external field $V$.
\revv{Alternatively}, when $d=1$ and $K(x,y) = -\log|x-y|$, the measure \eqref{e:gibbs_measure} is called \emph{one-dimensional log-gas}.
It appears naturally as the distribution of the eigenvalues of some classical ensembles of random matrices \citep{AnGuZe10}.

In general, the potential $V$ in \eqref{e:gibbs_measure} has to be confining enough to make the probability measure well-defined, i.e. to guarantee that $0 < Z_{n,\beta_n}^V<\infty.$ 
A particularly interesting choice is
\begin{align}
\label{e:jellium}
    V(z) = V^{\pi}(z) :=  - U_{K}^{\pi}(z) + \Phi(z), \quad z\in\mathbb{R}^d,
\end{align}
where $\Phi$ is a simple confining term that vanishes on the (known) support of $\pi$, and $U_{K}^{\pi}$ is the potential of $\pi$ with respect to $K$ given by~\eqref{e:potential}.
In that case, it is easy to check through the Euler-Lagrange equations that $\pi$ is the equilibrium measure of the system, in the sense that the empirical measure $\mu_n$ of the \emph{particles} $y_1, \dots, y_n$ converges to $\pi$ as $n$ goes to infinity, making $\mu_n$ a candidate quadrature rule in numerical integration w.r.t. $\pi$.
Additionally, for a smooth function $f$, the Monte Carlo estimator $\mu_n(f)$ is expected to have low mean-square error because the particles spread very regularly through $\mathbb{R}^d$, avoiding the quadrature to miss any relevant oscillation by leaving an uncovered hole in $\mathbb{R}^d$. 
Relatedly, in physics, when $K=g$ and $V=V^\pi$, a random configuration drawn from \eqref{e:gibbs_measure} is called \emph{jellium}, and one can imagine positive electric charges interacting with a negatively charged background distribution $\pi$.

%
%

One way to quantify the fast convergence to $\pi$ of the empirical measure of $y_1, \dots, y_n$ drawn from $\eqref{e:gibbs_measure}$ with $V=V^\pi$ in \eqref{e:jellium} is through large deviations principles (LDPs). 
In particular, \cite{chafai_first-order_2014} and \cite{garcia-zelada_large_2019} established that for a very general class of bounded interaction kernels $K$, the worst-case integration error on the unit ball of the RKHS $\mathcal{H}_K$ is small with overwhelming probability, i.e. for any $r > 0$ and $\epsilon > 0$, for large enough $n$,
\begin{align}
    \mathbb{P}_{n, \beta_n}^{V^{\pi}}\left(\mathsf{E}_{\mathcal{H}_K}\left(\frac{1}{n}\sum_{i = 1}^{n}\delta_{y_i}\right) > r\right) \leq \exp\left(- \left(\frac{1}{2}- \epsilon\right) \beta_n r^2\right),
    \label{e:ldp_gibbs_bounded}
\end{align} 
Similarly, taking for $K$ the (unbounded) Coulomb interaction \eqref{e:coulomb_kernel}, they prove
\begin{align}
    \mathbb{P}_{n, \beta_n}^{V^{\pi}}\left(\mathsf{E}_{\mathcal{BL}}\left(\frac{1}{n}\sum_{i = 1}^{n}\delta_{y_i}\right) > r\right) \leq \exp\left(- c \beta_n r^2\right),
    \label{e:ldp_gibbs}
\end{align}
for some constant $c>0$, where $\mathcal{BL}$ is the space of functions that are $1$-Lipschitz and bounded by $1$. 
Throughout the paper, when the pushforward of the Gibbs measure $\mathbb{P}_{n, \beta_n}^{V^{\pi}}$ by the application
$$
    y_1, \dots, y_n \mapsto \frac{1}{n}\sum_{i = 1}^{n}\delta_{y_i}
$$ 
satisfies a LDP, we will abusively say that the sequence of empirical measures $\xi_n = \frac{1}{n}\sum_{i = 1}^{n}\delta_{y_i}$ satisfies the same LDP.
The results~\eqref{e:ldp_gibbs_bounded} and \eqref{e:ldp_gibbs} are to compare to classical results for i.i.d. quadrature nodes, such as Sanov's theorem, where $\beta_n$ in the exponential is replaced by $n$. 
The same rate $n$ appears for Markov chain quadratures \citep{dembo_large_2010}.
As soon as $\beta_n$ grows faster than $n$, the right-hand sides of \eqref{e:ldp_gibbs_bounded} and \eqref{e:ldp_gibbs} thus decrease faster than i.i.d. or Markov chain quadratures, justifying our use of the terms \emph{overwhelming probability}.
This fast convergence motivated our study of numerical quadratures based on the Gibbs measure $\mathbb{P}_{n, \beta_n}^{V^{\pi}}$ in \citep{rouault_monte_2024}.

There are two main issues with this strategy, though.
First, we do not know of any exact sampling algorithm for the Gibbs measure \eqref{e:gibbs_measure}, so that one has to rely on approximate samplers like Markov chain Monte Carlo (MCMC) methods.
Second, and maybe more fundamentally, computing the potential \eqref{e:jellium} requires evaluating $U_{K}^{\pi}$, which requires integrating w.r.t. $\pi$, seemingly defeating our initial purpose.   
In this paper, we address this second issue, completely removing the need to evaluate $U_K^\pi$ while preserving the better-than-Monte-Carlo integration guarantees of Gibbs measures. 

Because the problem of evaluating $U_K^\pi$ is common to all algorithms that minimize the MMD \eqref{e:mmd}, workarounds have naturally already been investigated.
They come in two clusters.
First, one can rely on Stein's trick \citep{anastasiou_stein_2023}, which defines a kernel such that $U_{K}^{\pi} \equiv 0$, but modifies the set of functions $f$ that are efficiently integrated in an unclear way, see related work below.
\revv{Alternatively}, the majority of MMD minimization algorithms \citep{chen_parametric_2010,chen_super_samples_2012,bach_equivalence_2012,pronzato_bayesian_2020} builds a Monte Carlo approximation to \eqref{e:potential}, using a single draw of a random measure that approximates $\pi$, e.g. using MCMC.
If that random measure has a large number of atoms compared to the cardinality of the final quadrature $n$, this two-level Monte Carlo technique reduces the range of potential applications by forcing the user to spend a significant part of their computational budget on preparing the distribution from which they will draw their final quadrature. 
Apart from \cite{dwivedi_kernel_2021} and \cite{chatalic_efficient_2025}, who bypass the issue by controlling the difference in MMD of their $n$-point quadrature to an $n^2$-point MCMC reference, to our knowledge there is no guarantee on the worst-case integration error \eqref{e:worst_case_error} of any MMD minimization algorithm that includes this second level of approximation of \eqref{e:potential}.
Our contribution is to provide such a guarantee for Gibbs measures.

Formally, let $\nu_n$ be a random $n$-atom measure on $\mathbb{R}^d$, called the \emph{background}. 
Conditionally on $\nu_n$, let $y_1, \dots, y_n$ be drawn from the Gibbs measure \eqref{e:gibbs_measure} with confining potential 
\begin{equation}
    \label{e:random_potential}
    V^{\nu_n} = - U_K^{\nu_n} + \Phi,
\end{equation}
that is \eqref{e:potential} where we replace the target measure $\pi$ by the atomic measure $\nu_n$.
Following the vocabulary of large deviations theory \citep{garcia-zelada_large_2019}, we call the conditional distribution $\mathbb{P}_{n, \beta_n}^{\mathrm{Q}, \pi}$ of $y_1, \dots, y_n$ given $\nu_n$  a \emph{quenched} Gibbs measure. 
In this paper, we show large deviations principles for this quenched Gibbs measure.

Our first result deals with a bounded interaction kernel $K$ in the Gibbs measure.
We show in \cref{t:ldp_quenched_bounded} that, on any event such that $\nu_n$ converges to $\pi$, the distribution $\mathbb{P}_{n, \beta_n}^{\mathrm{Q}, \pi}$ of the quadrature $y_1, \dots, y_n$ almost surely satisfies the fast LDP \eqref{e:ldp_gibbs_bounded}, just like the Gibbs measure with intractable potential \eqref{e:potential}. 
Consequently, we avoid the evaluation of \eqref{e:potential} --the bottleneck of most MMD minimization methods-- at no statistical cost on the LDP.
Note that it is computationally easy to build a $\nu_n$ that converges almost surely to $\pi$ using standard Monte Carlo techniques, such as self-normalized importance sampling.

Our second result covers the more difficult case of the (singular) Coulomb kernel $K=g$ in \eqref{e:gibbs_measure}.
This time, we show that the quenched Gibbs measure almost surely preserves the LDP \eqref{e:ldp_gibbs} if the atoms of $\nu_n$ are themselves drawn from a Gibbs measure with Coulomb interaction, under weak requirements on the equilibrium measure, and with importance weights compensating for the mismatch between that equilibrium measure and the target $\pi$. 
Here again, we obtain the same statistical guarantee on the worst-case integration error without ever needing to evaluate the intractable potential \eqref{e:potential}. 
Seeing Gibbs measures as an avatar of MMD minimization, we believe that ours is the first work that formally includes the approximation of the kernel mean embedding $U_{K}^{\pi}$ in the analysis of its quadrature.
We hope that this can inspire the analysis of non-Gibbs-based MMD minimization algorithms.

As a side remark for the reader versed into the literature on the Coulomb interaction, we provide \emph{en route} a control on the uniform convergence of the approximation of the potential.
For a well-chosen regularization $K_\zeta$ of the Coulomb kernel, we show in \cref{p:max_pot} and \cref{p:max_pot_weight} that 
\[\underset{z \in \mathbb{R}^{d}}{\sup\,} \left\vert\nu_n(K_{\zeta}(z, \cdot)) - U_{g}^{\pi}(z)\right\vert \rightarrow 0\]   
almost surely as $n \rightarrow + \infty$.
We believe this intermediate result is of independent interest since the extrema of the potential field generated by a Coulomb gas have recently drawn attention, with exact scalings obtained at first order in dimension $2$ \citep{lambert_law_2024,peilen_maximum_2024}, more on this below.

The paper is organized as follows. In the end of this section, we quickly survey related work.
In \cref{s:definitions}, we rigorously introduce the objects of interest and our assumptions, and we state our main results.
Sections~\ref{s:proof_ldp} and \ref{s:proof_pot} are devoted to the proofs of the main results.
Numerical experiments, discussion on remaining practical issues and proofs of auxiliary lemmas are given in appendix.

\subsection*{Related work}
\label{s:introduction:related_work}

\subsubsection*{Large deviations}
Large deviations for Gibbs measures with pairwise interactions as in \cref{e:gibbs_measure} have been thoroughly studied by \citet{chafai_first-order_2014,garcia-zelada_large_2019}.
In our work, we need to further include a random environment, i.e. an empirical approximation to the potential.

Actually, \citep[Theorem 4.1 and Theorem 4.2]{garcia-zelada_large_2019} give a list of sufficient conditions to derive large deviations principles in varying environments, with applications to Gibbs measures.
Yet, the latter applications deal with Gibbs measures for which we cannot pre-specify the equilibrium measure, an important requirement for Monte Carlo integration where the equilibrium measure should be (at least an approximation of) the target integration measure.
Our proofs thus consist in checking the sufficient conditions of \cite{garcia-zelada_large_2019} for the potential \eqref{e:random_potential}.
Checking these conditions in our setting is however no sinecure, especially in the case of the Coulomb interaction kernel. 

\subsubsection*{Extrema of the potential of a Coulomb gas}
In dimension $1$ and $2$, the two-dimensional Coulomb interaction kernel $g(x, y) = -\log \lvert x - y\rvert$,  along with $V(x) = \lvert x \rvert^2 /2$ and $\beta_n=2n^2$, yields a Gibbs measure that also describes the distribution of the eigenvalues of random matrices, respectively the Gaussian unitary and Ginibre ensembles \citep{AnGuZe10}.
In this case, the interaction potential $\sum_{i = 1}^{n} g(z, x_i)$ generated at point $z$ by points $x_1, \dots, x_n$ drawn from \eqref{e:gibbs_measure} coincides with the log of the modulus of the characteristic polynomial of the random matrix, evaluated in $z$.
The behavior of the maximum of this quantity has been a topic of interest, following the famous conjectures of \citet{fyodorov_freezing_2012} in the one-dimensional case.
In dimension $2$, for general $V$ and $\beta_n = \beta n^2$, it has been for instance shown by \citet{lambert_maximum_2020,lambert_law_2024,peilen_maximum_2024} that
\begin{align}
    \underset{z \in D(x, r)}{\sup\,} \frac{n}{\log n}\left\{ \frac{1}{n}\sum_{i = 1}^{n}g(z, x_i)- \int g(z, x)\,\mathrm{d}\mu_V(x) \right\} \underset{n \rightarrow + \infty}{\rightarrow} \frac{1}{\sqrt{\beta}} \text{\, in probability,}\label{e:lln_max_pot}
\end{align}
where $\mu_V$ is the compactly supported measure that satisfies \cref{e:jellium} for a fixed $V$ and $D(x, r)$ is any disk strictly contained in the support of $\mu_V$.
Obtaining a similar behavior in dimension $d\geq3$ is still an open question up to our knowledge, but we show as a byproduct of our proof in \cref{p:max_pot} that for general $V$ and $\beta_n$ growing faster than $n$,
\[\underset{z \in \mathbb{R}^{d}}{\sup\,}\left\lvert \frac{1}{n}\sum_{i = 1}^{n} K_{\zeta}(z, x_i) - \int g(z, x)\,\mathrm{d}\mu_V(x)\right\rvert \underset{n \rightarrow + \infty}{\rightarrow} 0, \quad\text{$\mathbb{P}$-almost surely,}\]
where $K_{\zeta}$ is a regularized version of $g$.
Regularization is indeed needed to keep the supremum of the absolute value finite.
Obtaining the exact scaling for the leading order as in \cref{e:lln_max_pot} requires fine estimates on the fluctuations of the system at the second order, a hot topic in the field for $d \geq 3$.

\subsubsection*{Monte Carlo integration and KSDs}

Computing $U_{K}^{\pi}(z)$ has been up to our knowledge one of main limitations of MMD minimization algorithms.
One standard way to bypass this computation is to rely on a large MCMC chain targeting $\pi$ \citep{pronzato_bayesian_2020}, which is fair if the computational bottleneck is the number $n$ of evaluations of the test function $f$.
This can  be the case for instance in computational biology when each evaluation of the likelihood of the model can range from a few minutes in cardiac electrophysiology models \citep{johnstone_uncertainty_2016} to a few hours for population models in ecology \citep{purves_time_2013}.
Yet, this comes at the price of losing theoretical integration guarantees for Monte Carlo integration with respect to $\pi$.

Kernel Stein Discrepancies (KSDs, \citet{anastasiou_stein_2023}) have been introduced as a way to bypass this computation.
Indeed, given a kernel $K$ and a target measure $\pi$, one can define a new kernel $K_{\pi}$ with closed form such that $U_{K_{\pi}}^{\pi}(z) = 0$ for all $z$ under a mild regularity assumption on $K$ when $\pi$ vanishes.
The latter can always be ensured even when $\pi$ does not vanish up to multiplying the kernel $K$ by a vanishing weight function at the boundary of the integration domain \citep{oates_convergence_2017}.
The kernel Stein discrepancy is then defined as the square root of the MMD in \cref{e:mmd} with kernel $K_{\pi}$.
Minimizing this quantity has led to fast and practical sampling algorithms \citep{chen_stein_2018,chen_stein_2020,korba_kernel_2021} along with decay rates in $n^{-1/2}$ for the KSD with target $\pi$ in some cases.
Yet, replacing $K$ by $K_{\pi}$ replaces the set of test functions $\mathcal{H}_K$ by $\mathcal{H}_{K_\pi}$, and there is little knowledge about the kind of functions contained in $\mathcal{H}_{K_\pi}$.
There is evidence that the quadrature is influenced in that case by pathological test functions that oscillate wildly in regions where $\pi$ puts little mass \citep{benard_kernel_2023}.
Even though one is able to show \citep{gorham_measuring_2020,kanagawa_controlling_2024} that under some assumptions on $K$, $\mathcal{H}_{K_\pi}$ is still large enough to imply convergence in Wassertein or bounded Lipschitz distance, we only have access so far to estimates of the form
\[\mathsf{E}_{\mathcal{BL}}(\mu) \in \mathcal{O}\left(\left(\log \frac{1}{\mathrm{KSD}(\mu, \pi)}\right)^{-1/2}\right)\]
as the $\mathrm{KSD}$ decreases.
Thus, the decay rates in $\mathcal{H}_{K_{\pi}}$ only transfer as slow logarithmic decay rates for meaningful distances for Monte Carlo integration.

\section{Definitions and main results}
\label{s:definitions}

Let $d \geq 1$ and $\pi$ be a probability measure on $\mathbb{R}^{d}$, which we call the \emph{target} distribution.
Recall that our goal is to get a computationally tractable empirical measure supported on $n$ points that approximates $\pi$.

\begin{assumption}
\label{assum:pi}
    The support $S_{\pi}$ of $\pi$ is compact.
    In particular, there exists $R>0$ such that $S_{\pi} \subset B(0, R)$.
    Furthermore, we assume that $\pi$ has a continuous density $\pi'$ on the interior of $S_{\pi}$ and that it is bounded on $S_{\pi}$.
\end{assumption}

As explained in \cref{s:introduction}, we will build our quadrature by balancing two effects: a repulsive pairwise interaction preventing points to be too close from one another, and a confinement term favoring regions of large mass under $\pi$.
These two effects are modeled by quantities known as energies and interaction kernels, and are combined in a Gibbs measure.

\begin{definition}[Energies]
\label{d:energies}
Let $K : \mathbb{R}^{d} \times \mathbb{R}^{d} \rightarrow \mathbb{R}\cup\{\pm \infty\}$ be a symmetric (interaction) kernel.
For any finite signed Borel measures $\mu, \nu$ on $\mathbb{R}^{d}$, whenever the following quantities are well defined, we denote by
$$
    U_{K}^{\mu}(z) = \int K(z, x)\,\mathrm{d}\mu(x)
$$ 
the interaction potential at point $z \in \mathbb{R}^{d}$, and by
$$
    I_K(\mu, \nu) = \iint K(x, y)\,\mathrm{d}\mu(x)\,\mathrm{d}\nu(y)
$$ 
the interaction energy; we also set $I_{K}(\mu) = I_{K}(\mu, \mu)$ for convenience.
For any probability measure $\mu$ on $\mathbb{R}^{d}$ and any confinement $ V : \mathbb{R}^{d} \rightarrow \mathbb{R}$, we define
\begin{equation}
    I_{K}^{V}(\mu) = \frac{1}{2}\iint \left\{ K(x, y) + V(x) + V(y)\right\}\,\mathrm{d}\mu(x)\,\mathrm{d}\mu(y),
    \label{e:energy_with_external_potential}
\end{equation}
again whenever it is well defined.
\end{definition}

Throughout the paper, some particular kernels will play an important role.
\begin{definition}
    \label{d:coulomb_kernel}
        For $x, y \in \mathbb{R}^{d}$, when $d \geq 3$, the Coulomb interaction kernel is given by 
        \begin{align}
        \label{e:coulomb_kernel}
            g(x, y) = \frac{1}{\lvert x - y\rvert^{d-2}}.
        \end{align}
    \end{definition}

\begin{assumption}[Non-singular kernel]
\label{assum:non_singular}

Let $K : \mathbb{R}^{d} \times \mathbb{R}^{d} \rightarrow \mathbb{R}$ be a non-negative continuous, symmetric interaction kernel, such that $C:= {\sup}_x\, K(x, x) < +\infty$.
Furthermore, $K$ is integrally strictly positive definite; i.e. $I_{K}(\mu - \nu) > 0$ for any finite signed Borel measures $\mu$ and $\nu$ \revv{such that $\mu \neq \nu.$}
\end{assumption}

For a kernel $K$ satisfying \cref{assum:non_singular}, the interaction energy between two distributions coincides with the squared worst-case integration error for test functions in the reproducing kernel Hilbert space $\mathcal{H}_K$ of kernel $K$.

 \begin{proposition}[\cite{sriperumbudur_hilbert_2009}] \label{p:sriperumbudur}
    Let $K$ satisfy \cref{assum:non_singular}, then the reproducing kernel Hilbert space $(\mathcal{H}_{K}, \langle .\,, \,.\rangle_{\mathcal{H}_K})$ induced by $K$ is well-defined (see for instance \cite{pronzato_bayesian_2020}).
    The boundedness on the diagonal of $K$ implies that $0 \leq K(x, y) \leq C$ for all $x, y$\footnote{e.g. by Cauchy-Schwarz in $\mathcal{H}_{K}$.}, and we have the ``duality" formula
    \begin{align}
    \label{e:duality_formula}
        I_{K}(\mu - \nu) = \left(\underset{\lVert f \rVert_{\mathcal{H}_{K}} \leq 1}{\sup} \left\lvert \int f\,\mathrm{d}(\mu - \nu)\right\rvert\right)^2.
    \end{align}
 \end{proposition}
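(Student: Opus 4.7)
The statement bundles together three assertions, and I would address them in order, each via a classical RKHS manipulation.

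\emph{Existence of $\mathcal{H}_K$.} Integral strict positive definiteness, applied to signed measures supported on finitely many points $x_1,\dots,x_m$ (i.e.\ linear combinations of Dirac masses), implies in particular that the matrix $(K(x_i,x_j))_{1\le i,j\le m}$ is positive semi-definite. Combined with the symmetry of $K$ assumed in \cref{assum:non_singular}, the Moore--Aronszajn theorem thus produces a unique Hilbert space $\mathcal{H}_K$ of real-valued functions on $\mathbb R^d$ such that $K(\cdot,x)\in\mathcal{H}_K$ for every $x$ and the reproducing identity $\langle f,K(\cdot,x)\rangle_{\mathcal{H}_K}=f(x)$ holds for all $f\in\mathcal{H}_K$.

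\emph{Pointwise bound on $K$.} The lower bound $K(x,y)\ge 0$ is part of \cref{assum:non_singular}. For the upper bound, apply Cauchy--Schwarz in $\mathcal{H}_K$ together with the reproducing property:
\[
K(x,y)=\langle K(\cdot,x),K(\cdot,y)\rangle_{\mathcal{H}_K}\le \|K(\cdot,x)\|_{\mathcal{H}_K}\|K(\cdot,y)\|_{\mathcal{H}_K}=\sqrt{K(x,x)\,K(y,y)}\le C.
\]

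\emph{Duality formula.} The key object is the \emph{mean embedding} $m_\mu:=\int K(\cdot,x)\,\mathrm d\mu(x)\in\mathcal{H}_K$, defined as a Bochner integral; since $\|K(\cdot,x)\|_{\mathcal{H}_K}=\sqrt{K(x,x)}\le\sqrt C$ uniformly, Bochner integrability holds for any probability measure, and one checks via the reproducing property and Fubini that $\langle f,m_\mu\rangle_{\mathcal{H}_K}=\int f\,\mathrm d\mu$ for every $f\in\mathcal{H}_K$. By linearity,
\[
\sup_{\|f\|_{\mathcal{H}_K}\le 1}\left|\int f\,\mathrm d(\mu-\nu)\right|=\sup_{\|f\|_{\mathcal{H}_K}\le 1}\bigl|\langle f,m_\mu-m_\nu\rangle_{\mathcal{H}_K}\bigr|=\|m_\mu-m_\nu\|_{\mathcal{H}_K},
\]
where the last equality is Cauchy--Schwarz, achieved at $f=(m_\mu-m_\nu)/\|m_\mu-m_\nu\|_{\mathcal{H}_K}$ when the latter is nonzero, and otherwise both sides are trivially zero. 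Squaring and expanding $\|m_\mu-m_\nu\|_{\mathcal{H}_K}^2$ with the reproducing property (and Fubini once more) gives
\[
\|m_\mu-m_\nu\|_{\mathcal{H}_K}^2=\iint K(x,y)\,\mathrm d(\mu-\nu)(x)\,\mathrm d(\mu-\nu)(y)=I_K(\mu-\nu),
\]
which is the claimed identity.

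The only subtlety to watch for is the rigorous justification of Bochner integrability of $x\mapsto K(\cdot,x)$ as an $\mathcal{H}_K$-valued map, and of the two applications of Fubini (for $\langle f,m_\mu\rangle_{\mathcal{H}_K}$ and for $\|m_\mu-m_\nu\|_{\mathcal{H}_K}^2$). Both follow at once from the uniform bound $\|K(\cdot,x)\|_{\mathcal{H}_K}\le\sqrt C$ supplied by the boundedness of $K$ on the diagonal, so no extra assumption is needed beyond \cref{assum:non_singular}. The integral strict positive definiteness, although not strictly needed to derive the duality formula, ensures that $\mu\mapsto m_\mu$ is injective and thus that $I_K(\mu-\nu)$ is a bona fide squared metric on probability measures, consistent with its identification as the squared worst-case integration error.
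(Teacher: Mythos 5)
Your proof is correct and is exactly the standard argument that the paper delegates to the cited reference (the paper itself offers no proof beyond the citation and the Cauchy--Schwarz hint in its footnote): Moore--Aronszajn via positive definiteness of the Gram matrices, Cauchy--Schwarz for the pointwise bound, and the mean-embedding identity $\sup_{\lVert f\rVert_{\mathcal{H}_K}\le 1}\lvert\int f\,\mathrm{d}(\mu-\nu)\rvert=\lVert m_\mu-m_\nu\rVert_{\mathcal{H}_K}$ for the duality formula. The only cosmetic remark is that you justify Bochner integrability for probability measures, whereas \eqref{e:duality_formula} is invoked for general finite signed measures; the same uniform bound $\lVert K(\cdot,x)\rVert_{\mathcal{H}_K}\le\sqrt{C}$ handles that case verbatim.
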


Non-singular kernels such as the Gaussian or Matern kernels are standard in kernel regression \citep{rasmussen_gaussian_2005}.
Other natural examples come from regularizing singular kernels. 
For instance, for $\epsilon > 0$ and $s > 0$, consider the regularized Riesz interaction
\begin{align}
    \label{e:riesz_kernel_reg}
    K_{s, \epsilon}(x, y) = \frac{1}{\left(\lvert x -y \rvert^2 + \epsilon^2\right)^{s/2}}.
\end{align}
In particular, for $n \geq 1$ and $\zeta > 0$, we will also use the following non-singular regularization of the Coulomb interaction $g$ in \eqref{e:coulomb_kernel},
\begin{align}
\label{e:coulomb_kernel_reg}
K_{\zeta}(x, y) = \frac{1}{\left(\lvert x -y \rvert^2 + n^{-2\zeta}\right)^{\frac{d-2}{2}}}.
\end{align}

We will work with the following assumptions on the confinement $V$.

\begin{assumption}
    \label{assum:confinement}
    Let $V : \mathbb{R}^{d} \mapsto \mathbb{R}$ be a continuous function such that
    $V(x) \rightarrow + \infty$ as $\lvert x \rvert \rightarrow +\infty$.
    Without loss of generality, we will always assume that $V \geq 0$. 
    Furthermore, we assume that there exists $c' > 0$ such that $\int \exp\left(-c' V(x)\right)\,\mathrm{d}x < +\infty$.
\end{assumption}

We quickly give basic facts concerning $V$ and its link with the minimizers of the energies in \cref{d:energies}, see for instance \citep{pronzato_bayesian_2020,chafai_first-order_2014} for references. 
Under \cref{assum:confinement},
the energies of \cref{d:energies} are well defined when $K$ either satisfies \cref{assum:non_singular} or is the Coulomb kernel \eqref{e:coulomb_kernel}, though the energy is possibly infinite in the Coulomb case.
In the same way, $I_{K}^{V}$ in \eqref{e:energy_with_external_potential} is known to have a unique minimizer over the set of probability distributions, called the equilibrium measure $\mu_V$.
Furthermore, $\mu_V$ then has finite energy and compact support.

Although the so-called Euler-Lagrange equations characterize the equilibrium measure for general interaction kernels, the computation of the equilibrium measure is not explicit in general. 
Yet there are exceptions.
First, when $K = g$, a few choices of potential $V$ do lead to explicit expressions. 
For instance, for $V(x) = \frac{d-2}{2 R^d}\lvert x \rvert^{2}$, $\mu_V$ is the uniform measure on $B(0, R)$.
This is the main reason why the Coulomb kernel $g$ will play an important role in the upcoming results, although it involves some additional theoretical difficulties.
Second, and of paramount important for our Monte Carlo ambitions, we can force the equilibrium measure to be a given distribution $\pi$.
Indeed, under \cref{assum:pi} and when $K$ either satisfies \cref{assum:non_singular} or is the Coulomb kernel \eqref{e:coulomb_kernel}, setting 
\begin{equation}
    V^{\pi}(z) := 
        \begin{cases}
                    - U_{K}^{\pi}(z) \text{ for } z \in B(0, R)\\
        - U_{K}^{\pi}(z) + \Phi(z) \text{ otherwise}
         \end{cases},
\label{e:euler_lagrange}
\end{equation}
with for instance $\Phi(z) = \left[\lvert z\rvert^{2} - R^2\right]_{+},$  yields that $V^{\pi}$ satisfies \cref{assum:confinement} and that $\mu_{V^{\pi}} = \pi.$
We could make a more general choice of $\Phi$ to confine the points in $B(0, R)$, but the quadratic choice is quite canonical.
Additionally, note that with $V=V^\pi$, the energy $I_{K}^{V^{\pi}}(\mu)$ in \cref{d:energies} writes as
\[I_{K}^{V^{\pi}}(\mu) = \frac{1}{2}I_{K}(\mu - \pi) - \frac{1}{2}I_{K}(\pi) + \int_{z \notin B(0, R)} \Phi(z)\,\mathrm{d}\mu(z)\]
when it is finite, so that minimizing this quantity is indeed natural if one wishes the distribution $\mu$ to be close to $\pi$.
The reader should keep in mind that evaluating $V^\pi$ in \eqref{e:euler_lagrange} requires evaluating $U_K^\pi$, which is itself an integral w.r.t. $\pi$; this \emph{a priori} forbids the use of $V^\pi$ in a Monte Carlo integration algorithm with target distribution $\pi$.

We now define the probability measure from which we want the points to be drawn from, promoting configurations of points with small energy $I_{K}^{V}$ up to diagonal terms.
\begin{definition}[Gibbs measure]
    \label{d:gibbs}
    Let $K = g$ or be a non-singular kernel $K$ satisfying \cref{assum:non_singular}.
    Let $V$ satisfy \cref{assum:confinement} and $\beta_n \geq c' n$ for all $n \in \mathbb{N}$.
    Let $n \geq 2$.
    For $x_1, \dots, x_n \in \mathbb{R}^{d}$, define 
    \[
        H_n(x_1, \dots, x_n) = \frac{1}{2 n^2}\sum_{i \neq j} K(x_i, x_j) + \frac{1}{n}\sum_{i = 1}^{n} V(x_i)
    \]
    and
    \[
        \mathrm{d}\mathbb{P}_{n, \beta_n}^{V}(x_1, \dots, x_n) = \frac{1}{Z_{n, \beta_n}^{V}}\exp\left(-\beta_n H_n(x_1, \dots, x_n)\right)\,\mathrm{d}x_1\,\dots\,\mathrm{d}x_n,
    \]
    where $Z_{n, \beta_n}^{V}$ is the partition function normalizing $\mathbb{P}_{n, \beta_n}^{V}$.
    When $x_1, \dots, x_n \sim \mathbb{P}_{n, \beta_n}^{V}$ and $\xi_n = \frac{1}{n} \sum_{i=1}^n \delta_{x_i}$ and when talking about the asymptotics of the sequence of random measures $(\xi_n)$, we will assume they are all defined on the same probability space $(\Omega, \mathcal{F}, \mathbb{P})$.
\end{definition}

We recall a few results from \cite{chafai_first-order_2014}.
Under the assumptions of \cref{d:gibbs}, $Z_{n, \beta_n}^{V}$ is positive and finite, making the Gibbs measure $\mathbb{P}_{n, \beta_n}^{V}$ a well-defined probability distribution.\footnote{
    When $K=g$, it has been called a \emph{Coulomb gas}, \emph{one-component plasma}, or \emph{jellium}.
}
Moreover, if $x_1, \dots, x_n \sim \mathbb{P}_{n, \beta_n}^{V}$ and $\xi_n = \frac{1}{n} \sum_{i=1}^n \delta_{x_i}$, it is known that in the low-temperature regime $\beta_n / n \longrightarrow + \infty$ which we denote by $\beta_n \gg n$, $\xi_n$ converges in law to $\mu_V$, $\mathbb{P}$-almost surely, as $n$ grows.
Actually, in the same low-temperature regime, $\xi_n$ satisfies a large deviations principle (LDP) with speed $\beta_n$ and rate function $\mu \mapsto I_{K}^{V}(\mu) - I_{K}^{V}(\mu_V)$, meaning in particular that for any $r > 0$, there exists $n_0$ such that for any $n \geq n_0$,
\begin{align}
    \mathbb{P}_{n, \beta_n}^{V}\left(\xi_n \notin B_{\mathbf{d}}(\mu_V, r)\right) \leq \exp\left(-\beta_n \underset{\mu \not\in B_{\mathbf{d}}(\mu_V, r)}{\inf\,} \{I_{K}^{V}(\mu) - I_{K}^{V}(\mu_V)\}\right)\label{e:bound_gibbs}
\end{align}
where $ B_{\mathbf{d}}(\mu_V, r) := \{ \nu \in \mathcal P(\mathbb R^d), \mathbf{d}(\mu_V, \nu) < r\}$, and $\mathbf{d}$ is any distance on probability distributions compatible with the convergence in distribution.
In particular this implies \cref{e:ldp_gibbs} and \cref{e:ldp_gibbs_bounded}.

Recall that our motivation of numerical integration w.r.t. $\pi$ hints at the choice $V = V^{\pi}$ as in \cref{e:euler_lagrange}, which however involves an intractable integral with respect to $\pi$.
We will replace this integral itself by an easier-to-draw Monte Carlo estimator $V_n$ of $V^\pi$, using a quadrature $\nu_n$.
Conditionally on $V_n$, we call the Gibbs measure with potential $V_n$ the \emph{quenched} Gibbs measure.

\begin{definition}[Quenched Gibbs measure]
\label{d:gibbs_quenched}
    Under Assumptions~\ref{assum:pi} and \ref{assum:confinement}, further assume that $\beta_n \gg n$, and let $n\geq 2$.
    Let $\nu_n$ be any random finitely supported measure on $\mathbb{R}^d$.
    We define a function $V_n: \mathbb{R}^d\rightarrow \mathbb{R}$ as follows.
    When $K$ satisfies \cref{assum:non_singular}, we let
    \begin{equation}
        V_n(z) := 
        \begin{cases}
                    - U_{K}^{\nu_n}(z) \text{ for } z \in B(0, R)\\
        - U_{K}^{\nu_n}(z) + \Phi(z) \text{ otherwise}
         \end{cases},
    \label{e:quenched_potential_bounded}
    \end{equation}
    where $\Phi(z) = [\lvert z \rvert^2 - R^2]_{+}$.
    Similarly, when $d\geq 3$ and $K = g$ in \cref{d:coulomb_kernel}, let $\zeta > 0$ and
    \begin{equation}
        V_n(z) := 
            \begin{cases}
                        - U_{K_{\zeta}}^{\nu_n}(z) \text{ for } z \in B(0, R)\\
            - U_{K_{\zeta}}^{\nu_n}(z) + \Phi(z) \text{ otherwise}
            \end{cases},
    \label{e:quenched_potential_coulomb}
    \end{equation}
    where $K_{\zeta}$ is the regularized interaction kernel defined in~\eqref{e:coulomb_kernel_reg}.
    We are now ready to define the quenched Gibbs measure as the Gibbs measure with random confinement $V_n$, i.e.
    \[\mathrm{d}\mathbb{P}_{n, \beta_n}^{\mathrm{Q}}(y_1, \dots, y_n ; \nu_n) = \mathrm{d}\mathbb{P}_{n, \beta_n}^{V_n}(y_1, \dots, y_n).\]
    We denote by $H_{n}^{\mathrm{Q}}$ the associated discrete energy i.e.
       \begin{equation}
        \label{e:discrete_energy}
      H_{n}^{\mathrm{Q}}(y_1, \dots, y_n) = \frac{1}{2n^2}\sum_{i \neq j} K(y_i, y_j) + \frac{1}{n}\sum_{i = 1}^{n} V_n(y_i),
       \end{equation}
    and by $Z_{n, \beta_n}^{V_n}$ the associated partition.
    If $y_1, \dots, y_n \sim \mathbb{P}_{n, \beta_n}^{\mathrm{Q}, \nu_n}$, we denote by $\mu_{n}^{\mathrm{Q}} = \frac{1}{n}\sum_{i = 1}^{n} \delta_{y_i}$ the quenched empirical measure.
\end{definition}

\begin{proposition}
    Under the assumptions of \cref{d:gibbs_quenched}, the quenched Gibbs measure $\mathbb{P}_{n, \beta_n}^{\mathrm{Q}}(\,\cdot\,; \nu_n(\omega))$ is well defined, for $\mathbb{P}$-almost all $\omega$.
\end{proposition}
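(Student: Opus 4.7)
The plan is to argue deterministically, realization by realization: for any fixed finitely supported non-negative measure $\nu_n$ with finite total mass (which $\nu_n(\omega)$ is $\mathbb{P}$-almost surely by assumption), I will show that $Z_{n, \beta_n}^{V_n} \in (0, +\infty)$. The $\mathbb{P}$-almost sure conclusion then follows by measurability in $\nu_n$.

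First, I would check that $V_n$ is continuous on $\mathbb{R}^d$ and bounded from below. Writing $\nu_n = \sum_k w_k \delta_{x_k}$, the potential $U_K^{\nu_n}(z) = \sum_k w_k K(z, x_k)$ (with $K_\zeta$ replacing $K$ in the Coulomb case) is continuous in $z$ since the kernel is. Moreover $K$ is uniformly bounded by $C = \sup_x K(x,x)$ in the non-singular case (\cref{p:sriperumbudur}), and $K_\zeta$ is bounded by $n^{\zeta(d-2)}$ in the Coulomb case by construction, so $U_K^{\nu_n}$ (resp.\ $U_{K_\zeta}^{\nu_n}$) is uniformly bounded on $\mathbb{R}^d$ by some $\tilde C_n < +\infty$ depending on $n$ and on $\nu_n(\mathbb{R}^d)$. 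Combined with $\Phi \geq 0$, this yields $V_n(z) \geq -\tilde C_n + \Phi(z)\mathbf{1}_{\{|z|>R\}}$. For finiteness of the partition function, I would then exploit that the pairwise interaction is non-negative in both settings ($K \geq 0$ by \cref{assum:non_singular}, and $g \geq 0$ by \cref{d:coulomb_kernel}), so the integrand decouples as an upper bound:
\[
Z_{n, \beta_n}^{V_n} \leq \left(\int_{\mathbb{R}^d} \exp\left(-\tfrac{\beta_n}{n} V_n(y)\right) dy\right)^{n} \leq e^{\beta_n \tilde C_n} \left(\mathrm{Vol}(B(0,R)) + \int_{|y|>R} e^{-\frac{\beta_n}{n}(|y|^2 - R^2)} dy\right)^{n},
\]
and the last integral is finite for any $\beta_n > 0$ thanks to the Gaussian tail produced by $\Phi$ outside $B(0,R)$.

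Positivity of $Z_{n, \beta_n}^{V_n}$ follows from the continuity and strict positivity of the integrand on the open subset of $(\mathbb{R}^d)^n$ where all coordinates are pairwise distinct (the restriction is needed only in the singular Coulomb case; in the non-singular case the integrand is continuous everywhere). Any small open ball around an $n$-tuple of distinct points lying inside $B(0,R)$ yields a strictly positive lower bound on the integral. I do not anticipate any genuine obstacle here: although $\tilde C_n$ (and the $n^{\zeta(d-2)}$ factor in the Coulomb case) may grow with $n$, this is harmless for well-posedness at fixed $n$, which is what the proposition demands.
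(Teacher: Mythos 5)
Your proof is correct and rests on the same key observation as the paper's own: $U_K^{\nu_n}$ (resp.\ $U_{K_\zeta}^{\nu_n}$) is continuous and uniformly bounded, so $V_n$ inherits continuity, divergence at infinity, and the integrability of $\exp(-c'V_n)$ from $\Phi$, i.e.\ it satisfies the confinement assumption realization by realization. The only difference is that the paper then cites the standard result (from Chafa\"i et al.) that a confining potential together with a non-negative interaction yields $0 < Z_{n,\beta_n}^{V_n} < \infty$, whereas you re-derive that implication explicitly via the decoupling upper bound and the positivity-on-an-open-set argument; both of these steps are sound, so your write-up is simply a self-contained version of the same proof.
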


\begin{proof}
    It is enough to show that $V_n$ satisfies \cref{assum:confinement}, $\mathbb{P}$-almost surely, since this implies that $Z_{n, \beta_n}^{V_n}$ is positive and finite $\mathbb{P}$-almost surely;  see \citep{chafai_first-order_2014}.
    In both cases of \cref{e:quenched_potential_bounded} and \cref{e:quenched_potential_coulomb}, $V_n$ is clearly continuous and satisfies $V_n(z) \rightarrow +\infty$ as $\lvert z \rvert \rightarrow +\infty$, since $K$ is either bounded or decays to $0$.
    Moreover, as the kernel mean embedding term $U_{K}^{\nu_n}$ is continuous and bounded in each case, we clearly have
    \[
        \int \exp\left(-c' V_n(z)\right)\,\mathrm{d}z  \leq C \int \exp\left(- c' \Phi(z)\right)\,\mathrm{d}z < +\infty
    \]
    for all $c' > 0$, so that \cref{assum:confinement} is satisfied.
\end{proof}

The atoms $x_1, \dots, x_n$ of $\nu_n$ in \cref{d:gibbs_quenched} act as a random environment (or \emph{background}) that approximates $\pi$.
Then, the new points $y_1, \dots, y_n \sim \mathbb{P}_{n, \beta_n}^{\mathrm{Q}, \pi}$ repel each other, while the random environment acts as a confinement, attracting the nodes $y_1, \dots y_n$ towards regions of high probability under the target distribution $\pi$.
Crucially, all of this happens without ever needing to evaluate the intractable potential $U_{K}^{\pi}$ in \eqref{e:euler_lagrange}.
Our goal is now to show that for \emph{$\mathbb{P}$-almost-all background realization}, the quenched empirical measure still converges to the target distribution $\pi$ at the same speed as \eqref{e:bound_gibbs}. 
Without further assumption, we can already state our main result for bounded kernels.

\begin{theorem}
\label{t:ldp_quenched_bounded}

Under \cref{assum:pi} and \cref{assum:confinement}, further assume $\beta_n \gg n$.
Let $K$ be a non-singular kernel satisfying \cref{assum:non_singular}.
Let $(\nu_n)_n$ be any family of discrete probability measures on $(\Omega, \mathcal{F}, \mathbb{P})$ such that $\nu_n \rightarrow \pi$, $\mathbb{P}$-almost surely.
Then for $\mathbb{P}$-almost all $\omega$, the quenched empirical measure $\mu_{n}^{\mathrm{Q}} = \frac{1}{n}\sum_{i = 1}^{n}\delta_{y_i}$, where $y_1, \dots, y_n \sim \mathbb{P}_{n, \beta_n}^{\mathrm{Q}}(.\, ; \nu_n(\omega))$, satisfies a large deviations principle as $n \rightarrow +\infty$, at speed $\beta_n$ and with good rate function $\mu \mapsto I_{K}^{V^{\pi}}(\mu) - I_{K}^{V^{\pi}}(\pi)$, whose minimizer is $\pi$.
More precisely, for $\mathbb{P}$-almost all $\omega$, for any Borel set $A \subset \mathcal{P}(\mathbb{R}^{d})$,

\begin{align}
   - \underset{\mu \in \overset{\circ}{A}}{\inf\,} \{I_{K}^{V^{\pi}}(\mu) - I_{K}^{V^{\pi}}(\pi)\} &\leq \underset{n \rightarrow +\infty}{\underline{\lim}}\,\frac{1}{\beta_n}\log\mathbb{P}_{n, \beta_n}^{\mathrm{Q}}\left(\mu_{n}^{\mathrm{Q}} \in A; \nu_n(\omega)\right) \nonumber\\
   &\revv{\leq} \underset{n \rightarrow +\infty}{\overline{\lim}}\,\frac{1}{\beta_n}\log\mathbb{P}_{n, \beta_n}^{\mathrm{Q}}\left(\mu_{n}^{\mathrm{Q}} \in A; \nu_n(\omega)\right) \leq - \underset{\mu \in \overline{A}}{\inf\,} \{I_{K}^{V^{\pi}}(\mu) - I_{K}^{V^{\pi}}(\pi)\}.
\end{align}

As a consequence, for $\mathbb{P}$-almost all $\omega$,
\begin{enumerate}
    \item $\mu_{n}^{\mathrm{Q}}$ converges in law to $\pi$,
    \item For all $\epsilon > 0$, for any $r > 0$ and $n$ large enough,
    \begin{align*}
     \mathbb{P}_{n, \beta_n}^{\mathrm{Q}}\left( \mathsf{E}_{\mathcal{H}_K}\left(\frac{1}{n}\sum_{i = 1}^{n}\delta_{y_i}\right)> r; \nu_n(\omega)\right)  & = \mathbb{P}_{n, \beta_n}^{\mathrm{Q}}\left( \sqrt{I_{K}(\mu_{n}^{\mathrm{Q}} - \pi)} > r; \nu_n(\omega)\right) \\
     & \leq \exp\left(-\left(\frac{1}{2}-\epsilon\right) \beta_n  r^2\right),
      \end{align*}
\end{enumerate}
where $I_K$ and $\mathsf{E}_{\mathcal{H}_K}$ are respectively defined in \eqref{d:energies} and \eqref{e:worst_case_error}.
\end{theorem}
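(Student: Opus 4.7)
The plan is to treat \cref{t:ldp_quenched_bounded} as a \emph{quenched} instance of the classical LDP for Gibbs measures with deterministic potential (\citet{chafai_first-order_2014,garcia-zelada_large_2019}): on a full-probability event on which $\nu_n \to \pi$, I freeze $\omega$, regard $V_n(\omega)$ as a deterministic perturbation of the intractable $V^\pi$, and invoke the varying-environment LDP of Theorems~4.1--4.2 in \citet{garcia-zelada_large_2019}. The entire argument thus reduces to a single uniform control on $V_n - V^\pi$.

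First, I would prove that $\|V_n - V^\pi\|_\infty \to 0$, $\mathbb{P}$-a.s. Fix $\omega$ in the event $\{\nu_n \to \pi\}$. Because $K$ is bounded and continuous, the Portmanteau theorem applied to $\nu_n \otimes \nu_n \to \pi \otimes \pi$ and $\nu_n \otimes \pi \to \pi \otimes \pi$ yields
\[I_K(\nu_n - \pi) = I_K(\nu_n) - 2 I_K(\nu_n, \pi) + I_K(\pi) \longrightarrow 0.\]
For any $z \in \mathbb{R}^d$, $K(z,\cdot)/\sqrt{C}$ lies in the closed unit ball of $\mathcal{H}_K$ since $\|K(z,\cdot)\|_{\mathcal{H}_K}^2 = K(z,z) \leq C$, so the duality formula of \cref{p:sriperumbudur} gives
\[\sup_{z \in \mathbb{R}^d} |U_K^{\nu_n}(z) - U_K^\pi(z)| \leq \sqrt{C}\,\sqrt{I_K(\nu_n - \pi)} \longrightarrow 0.\]
Since the tail $\Phi$ enters identically in $V_n$ and $V^\pi$, this upgrades to $\|V_n - V^\pi\|_\infty \to 0$.

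Next, I would transfer the standard LDP. A direct comparison of Hamiltonians gives $|H_n^\mathrm{Q} - H_n^{V^\pi}| \leq \|V_n - V^\pi\|_\infty$, hence
\[\Big|\tfrac{1}{\beta_n} \log \tfrac{Z_{n, \beta_n}^{V_n}}{Z_{n, \beta_n}^{V^\pi}}\Big| \leq \|V_n - V^\pi\|_\infty \longrightarrow 0,\]
while the functional perturbation $I_K^{V_n}(\mu) = I_K^{V^\pi}(\mu) + \int(V_n - V^\pi)\,\mathrm{d}\mu$ converges uniformly over $\mathcal{P}(\mathbb{R}^d)$ to $I_K^{V^\pi}$. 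The sufficient conditions in Theorems~4.1--4.2 of \citet{garcia-zelada_large_2019} for a varying-environment LDP are thus met, and the known LDP for $\mathbb{P}_{n,\beta_n}^{V^\pi}$ transfers to the quenched measure at the same speed $\beta_n$ with rate function $I_K^{V^\pi}(\cdot) - I_K^{V^\pi}(\pi)$; its minimizer is $\pi$ by strict positive definiteness of $I_K$.

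Consequence~(1) then follows by Borel--Cantelli applied to the LDP upper bound on the complement of any weak neighborhood of $\pi$. For consequence~(2), the identity
\[I_K^{V^\pi}(\mu) - I_K^{V^\pi}(\pi) = \tfrac{1}{2} I_K(\mu - \pi) + \int \Phi\,\mathbf{1}_{\{\cdot \notin B(0,R)\}}\,\mathrm{d}\mu \geq \tfrac{1}{2} I_K(\mu - \pi),\]
together with $\mathsf{E}_{\mathcal{H}_K}(\mu)^2 = I_K(\mu - \pi)$, reduces the exponential bound to the LDP upper bound on the closed set $\{\mu : I_K(\mu - \pi) \geq r^2\}$. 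The main obstacle I expect is not the (soft) uniform convergence of $V_n \to V^\pi$ but verifying the varying-environment hypotheses of \citet{garcia-zelada_large_2019}: one must ensure that the speed $\beta_n$ does not amplify the approximation error, which works cleanly here thanks to the $L^\infty$ bound. The companion Coulomb case, where $K_\zeta$ itself depends on $n$, requires replacing this duality bound by a quantitative uniform estimate and is substantially more delicate.
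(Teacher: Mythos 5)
Your proposal is correct in substance but follows a genuinely different route from the paper. The paper does \emph{not} transfer the LDP by perturbation: it goes back to the abstract Corollary~1.3 of \citet{garcia-zelada_large_2019} (restated as \cref{t:general_ldp_gibbs}) and re-verifies its four conditions (stability, confinement, lower limit, regularity) directly for the random functional $W_n^{\mathrm{Q}} = W_n - \frac{1}{n}\sum_i U_K^{\nu_n}(y_i)$, using only that $K$ is bounded and continuous and that $\nu_n\to\pi$ weakly (e.g.\ the lower-limit condition only needs $\iint K\,\mathrm{d}(\nu_n\otimes\mu_n)\to\int U_K^\pi\,\mathrm{d}\mu$, by Fubini and joint weak convergence); no uniform convergence of potentials is ever invoked there. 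You instead first establish $\lVert V_n - V^\pi\rVert_\infty\to 0$ via the RKHS duality bound $\sup_z\lvert U_K^{\nu_n}(z)-U_K^\pi(z)\rvert\le\sqrt{C}\sqrt{I_K(\nu_n-\pi)}$, and then sandwich the quenched density between $e^{\pm 2\beta_n\lVert V_n-V^\pi\rVert_\infty}$ times the deterministic one, which transfers both LDP bounds with a vanishing additive error at scale $\beta_n^{-1}\log$. This is a valid and arguably more quantitative argument; it buys you exactly the kind of uniform potential control that the paper only needs (and must fight hard for, via \cref{p:max_pot_weight}) in the Coulomb case, as you correctly anticipate. One correction, though: you should not attribute the final transfer to Theorems~4.1--4.2 of \citet{garcia-zelada_large_2019}. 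The paper explicitly notes that those varying-environment results do not cover this setting (they restrict to compactly supported particles or use the opposite sign convention for the background), and this is precisely why the authors re-check the conditions of Corollary~1.3 by hand. Your $L^\infty$ sandwich is self-contained and does not need those theorems; the only external input it requires is the LDP for the deterministic Gibbs measure $\mathbb{P}_{n,\beta_n}^{V^\pi}$ (available since $V^\pi$ satisfies \cref{assum:confinement}, cf.\ \cref{e:bound_gibbs} and \cref{p:ldp_usual_gibbs}), so you should present it as such rather than as an application of the varying-environment theorems.
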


The proof of the LDP is in \cref{s:proof_ldp}.
The first consequence is immediate, and the second one follows from \cref{e:duality_formula}, along with the fact that $I_{K}(\mu_{n}^{\mathrm{Q}} - \pi) \leq 2(I_{K}^{V_{\pi}}(\mu_{n}^{\mathrm{Q}}) -  I_{K}^{V^{\pi}}(\pi))$; see \citep{rouault_monte_2024}.
Taking a step back, \cref{t:ldp_quenched_bounded} yields tighter simultaneous confidence intervals for Monte Carlo integration of functions in the unit ball of $\mathcal{H}_{K}$ than with points drawn i.i.d from $\pi$ or using MCMC, yet only asymptotically.
We stress the fact that we put no condition on the background $\nu_n$, other than the almost sure convergence to $\pi$. 
Our LDP is thus valid even in the case where $\nu_n$ is the empirical measure of the history of a suitable MCMC chain up to time $n$.
This justifies a common practical heuristics of replacing the intractable potential by a Monte Carlo estimate obtained using a long MCMC run.

From \cref{t:ldp_quenched_bounded}, one might consider taking an inverse temperature $\beta_n$ as large as possible in order to maximize the gain in the tail bound.
Yet, one still has to produce samples from the quenched Gibbs measure with inverse temperature $\beta_n$ in order to provide Monte Carlo estimators.
Heuristically, this becomes more and more difficult as $\beta_n$ becomes large as particles drawn from the Gibbs measure then try to get close to the deterministic minimizers of the energy $H_{n}^{\mathrm{Q}}$ in \cref{d:gibbs_quenched}.
We thus expect a tradeoff between sampling accuracy or complexity and tail bound guarantees governed by $\beta_n$.
However, there is to our knowledge no algorithm to sample approximately from \cref{e:gibbs_measure} that comes with theoretical considerations on this tradeoff, so we give only practical considerations in \cref{s:sampling_and_mala}.

Motivated by simultaneous confidence intervals for the wider class of bounded Lipschitz functions, we now turn to our second main result, covering the case when  $K=g$ is the Coulomb kernel and $d\geq 3$.
We shall need another set of assumptions, always in conjunction with \cref{assum:pi} that guarantees that $\pi$ is compactly supported.



\begin{assumption}
\label{assum:density_equilibrium}
    $\mu_V$ has a continuous and lower bounded density $\mu_V' \geq c > 0$ on the support $S_{\pi}$ of $\pi$.
\end{assumption}

\begin{assumption}
\label{assum:derivatives}
$V$ is $\mathcal{C}^2$ with uniformly bounded second-order derivatives and $\mu_V$ has finite classical entropy.
\end{assumption}

\begin{assumption}
\label{assum:vanishing_density}
The density $\pi'$ of $\pi$ vanishes on the boundary $\partial S_{\pi}$.
\end{assumption}

When $V$ is the quadratic potential $V(z) = \frac{d-2}{2R^d}\lvert z\rvert^2$, for instance, \cref{assum:density_equilibrium} and \cref{assum:derivatives} are satisfied as soon as $S_{\pi} \subset B(0, R)$ since $\mu_V$ is the uniform measure on $B(0, R)$.
We shall also need weights in our Monte Carlo estimator of the potential in the case $K =g$.

\begin{definition}
\label{def:weights}
    Under \cref{assum:pi} and \cref{assum:density_equilibrium}, we define
    \begin{equation}
        \label{e:unnormalized_weights}
        W(x) :=
            \begin{cases}
                \frac{\pi'(x)}{\mu_{V}'(x)} \text{ if $x$ is in the interior of } S_{\pi}\\
                0 \text{ otherwise}
            \end{cases}.
    \end{equation}
    For $x_1, \dots, x_n \in \mathbb{R}^{d},$  we set, for $1\le i \le n,$
    \begin{equation}
        \label{e:weights}
        w(x_i) :=
            \begin{cases}
                    \frac{W(x_i)}{\sum_{i = 1}^{n} W(x_i)} \text{ if there exists } j \text{ such that } \pi'(x_j) >0\\
                    0 \text{ otherwise}
            \end{cases}.
    \end{equation}
\end{definition}
We have the following convergence for the weighted empirical measures.

\begin{proposition}
\label{p:weighted_emp_convergence}
    Under \cref{assum:pi}, \cref{assum:density_equilibrium} and \cref{assum:vanishing_density},
    consider any $x_1, \dots, x_n \in \mathbb{R}^{d}$ such that $\mu_n := \frac{1}{n}\sum_{i = 1}^{n} \delta_{x_i} \rightarrow \mu_V$.
    Then, for $n$ large enough, there exists $j$ s.t. $\pi'(x_j) > 0$, and
    \begin{align}
        \nu_n := \sum_{i = 1}^{n} w(x_i) \delta_{x_i}
    \label{e:weighted_empirical_measure}
    \end{align} 
    is a well defined probability measure for $n$ large enough and $\nu_n {\rightarrow} \pi$ as $n\rightarrow\infty$.
\end{proposition}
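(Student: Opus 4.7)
The plan is to reduce $\int f \, \mathrm{d}\nu_n$ for a continuous bounded $f$ to a ratio of integrals against the unweighted empirical measure $\mu_n$, then transfer the weak convergence $\mu_n \to \mu_V$ to $\nu_n \to \pi$. The crucial preparatory step is to show that $W$ from \eqref{e:unnormalized_weights} extends to a continuous bounded function on all of $\mathbb{R}^d$. On the interior of $S_\pi$, continuity of $W = \pi'/\mu_V'$ and the bound $W \leq (\sup_{S_\pi} \pi')/c$ follow from \cref{assum:pi} and \cref{assum:density_equilibrium}. Off $S_\pi$, $W \equiv 0$. At the boundary, \cref{assum:vanishing_density} ensures $\pi'(x) \to 0$ while $\mu_V' \geq c > 0$, so $W$ extends by $0$ continuously across $\partial S_\pi$.

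Next, I would check that $\sum_{i=1}^{n} W(x_i) > 0$ eventually, so that the weights $w(x_i)$ in \eqref{e:weights} are well-defined. Since $\pi$ is a probability measure with continuous density on the interior of $S_\pi$, there exists an open ball $B$ contained in the interior of $S_\pi$ with $\inf_B \pi' > 0$, on which $W$ is bounded below by some $\epsilon > 0$. By \cref{assum:density_equilibrium}, $\mu_V(B) \geq c\, \mathrm{Leb}(B) > 0$, and the Portmanteau theorem applied to the open set $B$ yields $\liminf_n \mu_n(B) \geq \mu_V(B) > 0$; hence $B$ contains at least one particle $x_j$ for $n$ large, and $\pi'(x_j) > 0$. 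Writing then, for continuous bounded $f$,
\[
\int f \, \mathrm{d}\nu_n = \frac{\frac{1}{n}\sum_{i=1}^{n} W(x_i) f(x_i)}{\frac{1}{n}\sum_{i=1}^{n} W(x_i)} = \frac{\int Wf \, \mathrm{d}\mu_n}{\int W \, \mathrm{d}\mu_n},
\]
the continuity and boundedness of $W$ and $Wf$ combined with $\mu_n \to \mu_V$ weakly yield $\int W \, \mathrm{d}\mu_n \to \int W \, \mathrm{d}\mu_V = \int \pi'(x)\, \mathrm{d}x = 1$ and $\int Wf \, \mathrm{d}\mu_n \to \int f\pi'\,\mathrm{d}x = \int f\, \mathrm{d}\pi$, and taking the ratio concludes the argument.

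The only real obstacle is continuity of $W$ at $\partial S_\pi$: without \cref{assum:vanishing_density}, $W$ would in general have a jump across the boundary, and the weak convergence $\mu_n \to \mu_V$ would no longer directly transfer to convergence of $\int W\, \mathrm{d}\mu_n$ unless one went through more delicate sets-of-continuity arguments relying on $\mu_V(\partial S_\pi) = 0$. \cref{assum:vanishing_density} is exactly what makes $W$ genuinely continuous, so that the rest of the argument is a piece of elementary weak-convergence bookkeeping.
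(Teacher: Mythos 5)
Your proof is correct and follows essentially the same route as the paper's: establish that $W$ extends to a bounded continuous function (the vanishing of $\pi'$ on $\partial S_\pi$ being exactly what guarantees continuity there), then write $\int f\,\mathrm{d}\nu_n$ as the ratio $\int Wf\,\mathrm{d}\mu_n / \int W\,\mathrm{d}\mu_n$ and pass to the limit using $\mu_n\to\mu_V$ and $\int W\,\mathrm{d}\mu_V=1$. The only cosmetic difference is that the paper obtains the eventual existence of a particle with $W(x_j)>0$ directly from $\tfrac1n\sum_i W(x_i)\to 1>0$, whereas you take a slightly longer but equally valid detour through the Portmanteau theorem on an open ball.
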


\begin{proof}
$W$ is clearly continuous and compactly supported under those assumptions.
$\mu_n = \frac{1}{n}\sum_{i = 1}^{n}\delta_{x_i}$ converges in law to $\mu_V$, so that
\[\frac{1}{n}\sum_{i = 1}^{n} W(x_i) \underset{n \rightarrow + \infty}{\rightarrow} \int W(x)\,\mathrm{d}\mu_V(x) = 1.\] 
Hence, for $n$ large enough, if $x_1, \dots, x_n \sim \mathbb{P}_{n, \beta_n}^{V}$, there exists $i$ s.t $W(x_i) > 0$ (or equivalently $\pi'(x_i) > 0$) and $\nu_n$ is well defined.

For any bounded and continuous test function $f$, we have 
\[\int f\,\mathrm{d}\nu_n = \frac{\frac{1}{n}\sum_{i = 1}^{n} f(x_i) W(x_i)}{\frac{1}{n}\sum_{i = 1}^{n} W(x_i)}.\]
Since $f W$ is bounded and continuous, Slutsky's lemma then implies that $\nu_n$ converges in law to $\pi$.

\end{proof}

Note that since computing $w$ is invariant by multiplication of $\pi'$ by a constant, $\nu_n$ does not depend on the normalization constant of the target $\pi$, and so neither does any linear statistic w.r.t $\nu_n$.
In the following theorem, we chose $x_1, \dots, x_n$ as a realization of a Coulomb gas with external potential $V$, and we show that for this choice of $\nu_n$ in \cref{e:weighted_empirical_measure}, we have the desired LDP for the quenched Gibbs measure.

\begin{theorem}
\label{t:ldp_quenched_coulomb}
Let $d\geq 3$ and $g$ be the Coulomb kernel as in \cref{d:coulomb_kernel}.
Under \cref{assum:pi}, \cref{assum:confinement}, \cref{assum:density_equilibrium}, \cref{assum:derivatives} and \cref{assum:vanishing_density}, consider $\beta_n \geq u n^{1+\delta}$ for $\delta > 0$ and $u > 0$ (in particular $\beta_n \gg n$).
For any $n \geq 2$, let $x_1, \dots, x_n \sim \mathbb{P}_{n, \beta_n}^{V}$ and let $\nu_n$ be as in \cref{e:weighted_empirical_measure}.

Consider the quenched empirical measure $\mu_{n}^{\mathrm{Q}} = \frac{1}{n}\sum_{i = 1}^{n}\delta_{y_i}$, where $y_1, \dots, y_n \sim \mathrm{d}\mathbb{P}_{n, \beta_n}^{\mathrm{Q}, \pi}(.\, ; \nu_n(\omega))$ in \cref{d:gibbs_quenched} with \[0 < \zeta < \frac{\delta \wedge \frac{2}{d}}{2d}.\]
Then for $\mathbb{P}$-almost all $\omega$, $\mu_{n}^{\mathrm{Q}}$ satisfies a large deviations principle as $n\rightarrow + \infty$, at speed $\beta_n$ and with good rate function $\mu \mapsto I_{g}^{V^{\pi}}(\mu) - I_{g}^{V^{\pi}}(\pi)$, whose minimizer is $\pi$.
More precisely, for $\mathbb{P}$-almost all $\omega$,

\begin{align}
   - \underset{\mu \in \overset{\circ}{A}}{\inf\,} \{I_{g}^{V^{\pi}}(\mu) - I_{g}^{V^{\pi}}(\pi)\} &\leq \underset{n \rightarrow +\infty}{\underline{\lim}}\,\frac{1}{\beta_n}\log\mathbb{P}_{n, \beta_n}^{\mathrm{Q}}\left(\mu_{n}^{\mathrm{Q}} \in A; \nu_n(\omega)\right)  \nonumber\\
   &\revv{\leq}\underset{n \rightarrow +\infty}{\overline{\lim}}\,\frac{1}{\beta_n}\log\mathbb{P}_{n, \beta_n}^{\mathrm{Q}}\left(\mu_{n}^{\mathrm{Q}} \in A; \nu_n(\omega)\right) \leq - \underset{\mu \in \overline{A}}{\inf\,} \{I_{g}^{V^{\pi}}(\mu) - I_{g}^{V^{\pi}}(\pi)\}.
\end{align}

As a consequence, for $\mathbb{P}$-almost all $\omega$:
\begin{enumerate}
    \item $\mu_{n}^{\mathrm{Q}}$ converges in law to $\pi$,
    \item There exists $ c > 0 $ such that for any $r > 0$ and $n$ large enough, \[\mathbb{P}_{n, \beta_n}^{\mathrm{Q}}\left( \mathsf{E}_{\mathcal{BL}}\left(\frac{1}{n}\sum_{i = 1}^{n}\delta_{y_i}\right) > r;\, \nu_n(\omega)\right) \leq  \exp(-c \beta_n r^2).\]
\end{enumerate}
\end{theorem}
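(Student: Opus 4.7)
The plan is to realize the quenched Gibbs measure $\mathbb{P}_{n,\beta_n}^{\mathrm{Q}}(\,\cdot\,;\nu_n(\omega))$ as a standard Gibbs measure with Coulomb interaction $g$ and random confinement $V_n$ given by \eqref{e:quenched_potential_coulomb}, and then to verify the sufficient conditions of \cite{garcia-zelada_large_2019} ensuring that, for $\mathbb{P}$-almost all $\omega$, the sequence $(\mu_n^{\mathrm{Q}})_n$ satisfies an LDP at speed $\beta_n$ with the deterministic good rate function $\mu \mapsto I_g^{V^\pi}(\mu) - I_g^{V^\pi}(\pi)$. Compared with Theorem~\ref{t:ldp_quenched_bounded}, the two extra difficulties are that the Coulomb kernel is singular and that $V_n$ is built from the regularized kernel $K_\zeta$ rather than from $g$ itself; both are absorbed by the uniform potential bound of Proposition~\ref{p:max_pot_weight}.

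First, I would establish almost sure convergence of the background $\nu_n \to \pi$. Since $\beta_n \geq u n^{1+\delta}$, the large deviations bound \eqref{e:bound_gibbs} for $\mathbb{P}_{n,\beta_n}^V$ combined with Borel--Cantelli gives $\xi_n \to \mu_V$ $\mathbb{P}$-almost surely, and Proposition~\ref{p:weighted_emp_convergence} upgrades this into $\nu_n \to \pi$ $\mathbb{P}$-almost surely. The second key input is the uniform convergence granted by Proposition~\ref{p:max_pot_weight}: the constraint $0 < \zeta < (\delta \wedge 2/d)/(2d)$ ensures that
\[
    \sup_{z \in \mathbb{R}^d} \left\lvert U_{K_\zeta}^{\nu_n}(z) - U_g^\pi(z) \right\rvert \longrightarrow 0 \quad \mathbb{P}\text{-almost surely},
\]
so that $V_n \to V^\pi$ uniformly on $\mathbb{R}^d$, the outer confinement $\Phi$ being common to both.

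On this full-probability event, I would carry out a $\Gamma$-convergence argument in the spirit of \cite{chafai_first-order_2014,garcia-zelada_large_2019}. For the LDP upper bound, consider any sequence of configurations $(y_1^{(n)},\dots,y_n^{(n)})$ whose empirical measure $\mu_n^{\mathrm{Q}}$ converges to some $\mu$: the Coulomb part of $H_n^{\mathrm{Q}}$ does not depend on the environment and yields the usual $\liminf \tfrac12 I_g(\mu,\mu)$ by truncation and lower semicontinuity, while the confinement contribution $\tfrac{1}{n}\sum_i V_n(y_i^{(n)}) = -\int U_{K_\zeta}^{\nu_n}\mathrm{d}\mu_n^{\mathrm{Q}} + \int \Phi \,\mathrm{d}\mu_n^{\mathrm{Q}}$ converges to $\int V^\pi\,\mathrm{d}\mu$ thanks to the uniform convergence just established, combined with the tightness forced by $\Phi$. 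For the matching lower bound, I would regularize any competitor $\mu$ into a smoothed product configuration whose energy attains $I_g^{V^\pi}(\mu)$, again using uniform convergence to align the quenched and target confinement energies. The same uniform bound controls the ratio of partition functions $Z_{n,\beta_n}^{V_n}/Z_{n,\beta_n}^{V^\pi}$, so the normalization does not shift the rate. The first consequence of the theorem is then immediate, and the second one follows by combining the LDP with the Coulomb--Sobolev type inequality $I_g(\mu - \pi) \gtrsim \mathsf{E}_{\mathcal{BL}}(\mu - \pi)^2$.

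The main obstacle is Proposition~\ref{p:max_pot_weight} itself, i.e.\ the uniform-in-$z$ convergence of the weighted Coulomb potential. That estimate must simultaneously absorb three independent error sources: the pointwise regularization gap $K_\zeta - g$, which can be of order $n^{\zeta(d-2)}$ when $z$ lies near an atom of $\nu_n$; the Monte Carlo fluctuation of $\nu_n$ around $\pi$, whose speed is governed by $\beta_n$ and the LDP \eqref{e:bound_gibbs}; and the uniformity over $z \in \mathbb{R}^d$, which rules out a direct test-function argument and forces working with an $\varepsilon$-net whose mesh is coordinated with $\zeta$. The precise range $\zeta < (\delta \wedge 2/d)/(2d)$ is what makes these three errors jointly negligible along a subsequence summable for Borel--Cantelli; once this uniform control is in hand, the remainder of the proof is a Coulomb-adapted mirror of the argument underlying Theorem~\ref{t:ldp_quenched_bounded}.
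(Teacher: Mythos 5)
Your proposal is correct and follows essentially the same route as the paper: realize the quenched measure as a Gibbs measure with random confinement, feed the uniform bound $\sup_z|U_{K_\zeta}^{\nu_n}(z)-U_g^\pi(z)|\to 0$ of Proposition~\ref{p:max_pot_weight} into the Garcia--Zelada conditions (your $\Gamma$-liminf/recovery-sequence steps are exactly Conditions~\ref{assum:ldp:lower_limit} and~\ref{assum:ldp:regularity}, supplemented by the stability and confinement conditions that your tightness remark covers), and deduce the second consequence from the Coulomb transport inequality. The only detail worth making explicit is that passing from $-\int U_{K_\zeta}^{\nu_n}\,\mathrm{d}\mu_n^{\mathrm{Q}}$ to $-\int U_g^\pi\,\mathrm{d}\mu$ also uses that $U_g^\pi$ is bounded and continuous, which the paper gets from the maximum principle and the compact support and continuous density of $\pi$.
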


The proof is again in \cref{s:proof_ldp}.
The first consequence is immediate, and the second one follows from the Coulomb transport inequality \citep[Theorem 1.2]{chafai_concentration_2018}.
Again \cref{t:ldp_quenched_coulomb} gives tighter asymptotic simultaneous asymptotic confidence intervals for Monte Carlo integration of bounded Lipschitz functions than with points drawn i.i.d from $\pi$ or using MCMC.
In that case however, we need the first approximation $\nu_n$ to $\pi$ to be built as well on a Coulomb gas, i.e. to exhibit fast converging properties, this is due to the fact that the quality of $\nu_n$ interplays with the regularization that we need to make to deal with the singularity of $g$.
Moreover, the fact that the regularized Coulomb interaction \cref{e:coulomb_kernel_reg} satisfies \cref{assum:non_singular} will play an important role in the proof.
In particular, in dimensions $1$ and $2$, the natural counterpart to \cref{e:coulomb_kernel_reg} is given by 
\[K_{\zeta}'(x, y) = -\frac{1}{2}\log \left(\left\lvert x - y \right\rvert^2 + n^{-2\zeta}\right),\]
which fails to satisfy \cref{assum:non_singular} since $I_{K}(\mu) > 0$ only for finite signed Borel measures with total mass $0$ \citep{pronzato_bayesian_2020}.
Still, it would be interesting to try to circumvent this issue in the proofs.

We will use a general result on LDPs for Gibbs measures from \citep{garcia-zelada_large_2019} to get \cref{t:ldp_quenched_bounded} and \cref{t:ldp_quenched_coulomb}.
The case of non singular interactions is relatively easy to handle, whereas more control is needed in the Coulomb case because of the regularization.
In particular, the key control will be given by \cref{p:max_pot_weight}.
In the latter, we prove uniform convergence of the regularized potential generated by the Coulomb gas $x_1, \dots, x_n$.
We first begin with an unweighted version of the desired result, which we will use in the proof of the weighted version and which we believe to be interesting as discussed in \cref{s:introduction}.

\begin{proposition}
\label{p:max_pot}
Let $d\geq 3$ and $g$ be the Coulomb kernel.
Under \cref{assum:confinement} and \cref{assum:derivatives}, let $\beta_n \geq u n^{1+\delta}$ for some $\delta > 0$ and $u > 0$.
Let $x_1, \dots, x_n \sim \mathbb{P}_{n, \beta_n}^{V}$ with $K = g$ and $0 < \zeta <  \frac{\delta \wedge \frac{2}{d+2}}{d-2}$, and let $\mu_n$ denote the associated empirical measure.

Then $\mathbb{P}$-almost surely, 
\[A_n := \underset{z \in \mathbb{R}^{d}}{\sup}\left\lvert U_{K_{\zeta}}^{\mu_n}(z) - U_{g}^{\mu_V}(z)\right\rvert \underset{n \rightarrow +\infty}{\rightarrow} 0.\]
\end{proposition}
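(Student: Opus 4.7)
My plan is to split $A_n$ via the triangle inequality into a deterministic smoothing error $B_n := \sup_{z\in\mathbb R^d} |U_{K_\zeta}^{\mu_V}(z) - U_g^{\mu_V}(z)|$ and a random empirical error $C_n := \sup_{z\in\mathbb R^d} |U_{K_\zeta}^{\mu_n}(z) - U_{K_\zeta}^{\mu_V}(z)|$, then show $B_n\to 0$ deterministically and $C_n\to 0$ $\mathbb{P}$-almost surely.

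For $B_n$, the key input is that under \cref{assum:derivatives} the Euler-Lagrange equations for the Coulomb equilibrium give $\mu_V$ a density proportional to $\Delta V$ on its compact support, hence bounded. Since $0\le g - K_\zeta$ is concentrated in a ball of radius $\sim n^{-\zeta}$ around the diagonal, a direct computation in radial coordinates (splitting the integral over $B(z, n^{-\zeta})$ and its complement in the support of $\mu_V$ and then substituting $r = n^{-\zeta} s$) yields $B_n = O(n^{-2\zeta}\log n)$ uniformly in $z\in\mathbb R^d$, hence $o(1)$ for any $\zeta>0$.

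For $C_n$, I would combine a covering argument with concentration for the Coulomb gas. A direct calculation gives $\sup_y |\nabla_z K_\zeta(z,y)| \lesssim n^{\zeta(d-1)}$, so both $U_{K_\zeta}^{\mu_n}$ and $U_{K_\zeta}^{\mu_V}$ are $Cn^{\zeta(d-1)}$-Lipschitz in $z$; moreover both decay at infinity (uniformly in $n$ since the LDP of \cite{chafai_first-order_2014} forces the bulk of $\mu_n$ to remain in a fixed compact with overwhelming probability), so one may restrict the supremum to $z\in B(0,R')$ for some large deterministic $R'$. Covering $B(0,R')$ by an $\eta_n$-net of cardinality $N_n \lesssim \eta_n^{-d}$ with $\eta_n = n^{-\zeta(d-1)-\alpha}$ for some small $\alpha>0$ absorbs the Lipschitz blow-up and reduces the problem to bounding $\max_j |\mu_n(K_\zeta(z_j,\cdot)) - \mu_V(K_\zeta(z_j,\cdot))|$. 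For each $z_j$ this is a linear statistic of a bounded-Lipschitz function with $BL$-norm $\lesssim n^{\zeta(d-1)}$, and coupling \eqref{e:bound_gibbs} with the Coulomb transport inequality of \cite{chafai_concentration_2018} yields a deviation estimate of order $\exp(-c\beta_n\epsilon^2 n^{-2\zeta(d-1)})$. A union bound over the net combined with $\beta_n\ge un^{1+\delta}$ gives a total failure probability behaving like $\exp(-cu\epsilon^2 n^{1+\delta - 2\zeta(d-1)} + O(\log n))$, which is summable under the constraint on $\zeta$; Borel-Cantelli then concludes.

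The main obstacle is step three, i.e.\ getting the concentration estimate at a scale that absorbs the Lipschitz blow-up. The difficulty is that $\mu_n$ has infinite Coulomb self-energy, so the transport inequality cannot be applied to $\mu_n-\mu_V$ directly; one must regularize $\mu_n$ at scale $n^{-\zeta}$ and carefully control the diagonal correction, of order $n^{\zeta(d-2)}$, against the speed $\beta_n$ and the net cardinality $N_n$. The three-way balance between these quantities is what should produce the regime $\zeta<(\delta\wedge\tfrac{2}{d+2})/(d-2)$ displayed in the statement.
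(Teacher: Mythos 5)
Your decomposition into a smoothing error on $\mu_V$ and an empirical error, the regularization of $\mu_n$ at scale $n^{-\epsilon}$, and the appeal to the concentration inequality of \cite{chafai_concentration_2018} are all aligned with the paper's proof, and your direct radial-coordinate bound $O(n^{-2\zeta}\log n)$ for the smoothing term is a legitimate (even slightly sharper) alternative to the paper's \cref{l:comparison_g_Hzeta}. The gap is in the empirical term, precisely at the step you flag as the main obstacle: the three-way balance does \emph{not} come out to the stated range of $\zeta$. Your net-plus-transport route pays the squared bounded-Lipschitz norm of $K_\zeta(z,\cdot)$, which is of order $n^{2\zeta(d-1)}$ (the Lipschitz constant $\sup_y|\nabla_z K_\zeta(z,y)|\sim n^{\zeta(d-1)}$ dominates the sup-norm $n^{\zeta(d-2)}$), so the per-point deviation bound is $\exp(-c\beta_n\epsilon^2 n^{-2\zeta(d-1)}+Cn+C\beta_n(n^{\epsilon(d-2)-1}+n^{-2\epsilon}))$ and summability forces roughly $2\zeta(d-1)<\delta\wedge\frac{2}{d}$, i.e. $\zeta\lesssim\frac{\delta\wedge 2/d}{2(d-1)}$. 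This is strictly smaller than the claimed range $\zeta<\frac{\delta\wedge\frac{2}{d+2}}{d-2}$ for every $d\geq 3$ (for $d=3$ and large $\delta$: $1/6$ versus $2/5$), so your argument proves a weaker proposition, not this one. The replacement of $\mu_n$ by $\mu_n^{(\epsilon)}$ inside each linear statistic costs a further $n^{\zeta(d-1)-\epsilon}$ if done via the Lipschitz constant (the paper's \cref{l:taylor_no_weights} exploits the cancellation of the first-order Taylor term to get $n^{d\zeta-2\epsilon}$ instead), and the restriction of the supremum to a fixed ball is not free either, since for $|z|$ just outside the support both potentials are of order one.

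The idea you are missing is that the supremum over $z$ can be handled without any net, by positive definiteness of $K_\zeta$: the function $K_\zeta(z,\cdot)$ belongs to the RKHS $\mathcal{H}_{K_\zeta}$ with norm $\sqrt{K_\zeta(z,z)}=n^{\zeta(d-2)/2}$, so the duality formula \eqref{e:duality_formula} gives
\begin{equation*}
\sup_{z\in\mathbb R^d}\bigl\lvert U_{K_\zeta}^{\mu_n^{(\epsilon)}}(z)-U_{K_\zeta}^{\mu_V}(z)\bigr\rvert\leq\sqrt{n^{\zeta(d-2)}\,I_{K_\zeta}\bigl(\mu_n^{(\epsilon)}-\mu_V\bigr)}\leq\sqrt{n^{\zeta(d-2)}\,I_{g}\bigl(\mu_n^{(\epsilon)}-\mu_V\bigr)},
\end{equation*}
the last inequality being the spectral comparison $I_{K_\zeta}\leq I_g$ (\cref{l:bound_Ih_Ig}). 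This replaces your $n^{2\zeta(d-1)}$ by $n^{\zeta(d-2)}$ in the exponent, is uniform over all of $\mathbb{R}^d$ with no compactification, and feeds directly into the concentration bound \eqref{e:bound_concentration} plus Borel--Cantelli; optimizing $\epsilon$ then yields exactly the stated constraint $\zeta<\frac{\delta\wedge\frac{2}{d+2}}{d-2}$.
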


We give a sketch of proof, to be fleshed out in \cref{s:proof_max_pot}.
We first note that $K_{\zeta}$ satisfies \cref{assum:non_singular}, so that $K_{\zeta}(z, .)$ is in the RKHS $\mathcal{H}_{K_\zeta}$ for all $z$ and  has squared norm $n^{\zeta (d-2)}$ in $\mathcal{H}_{K_\zeta}$.
In particular, the duality formula \eqref{e:duality_formula} yields
\[
    \underset{z \in \mathbb{R}^{d}}{\sup}\,\left\lvert U_{K_{\zeta}}^{\mu_{n}}(z) - U_{K_{\zeta}}^{\mu_V}(z)\right\rvert \leq \sqrt{n^{\zeta(d-2)} I_{K_{\zeta}}(\mu_{n}-\mu_V)}.
\]
Then, one can show (\cref{l:bound_Ih_Ig}) that $I_{K_{\zeta}}$ is always smaller than $I_g$.
Since the points $x_1, \dots, x_n$ are generated from a Coulomb gas with interaction $g$, one should thus be able to use concentration properties of Gibbs measures (\cref{l:concentration}) to show that $n^{\zeta(d-2)} I_g$ goes to zero.
However, $g$ is not integrable at infinity, so that $I_g(\mu_n - \mu_V)$ is infinite.
We bypass this issue by convolving each Dirac $\delta_{x_i}$ in $\mu_n$ with a uniform distribution on a small ball of radius $n^{-\epsilon}$ to make sense of the energy $I_g$.
Denoting by $\mu_{n}^{(\epsilon)}$ this convolved empirical measure, we thus show that 
\[
    n^{\zeta (d-2)} I_{H_{\zeta}}(\mu_{n}^{(\epsilon)} - \mu_V) \leq n^{\zeta(d-2)}I_g(\mu_{n}^{(\epsilon)} - \mu_V) \rightarrow 0
\] 
for a suitable choice of $\epsilon$ and $\zeta$.
It then remains to show that the error terms vanishes when one replaces $U_{K_{\zeta}}^{\mu_n} - U_{g}^{\mu_V}$ by $U_{K_{\zeta}}^{\mu_{n}^{(\epsilon)}} - U_{K_{\zeta}}^{\mu_V}$, which we do in \cref{l:comparison_g_Hzeta} and \cref{l:taylor_no_weights}.

We now state the same result for the weighted measures, the proof of which relies on \cref{p:max_pot}. In the proof of \cref{t:ldp_quenched_coulomb}, we will use the following:
\begin{proposition}
\label{p:max_pot_weight}

Let $d\geq 3$ and $g$ be the Coulomb kernel.
Under \cref{assum:pi}, \cref{assum:confinement}, \cref{assum:density_equilibrium}, \cref{assum:derivatives} and \cref{assum:vanishing_density}, 
let $\beta_n \geq u n^{1+\delta}$ for some $\delta > 0$ and $u > 0$.
Let $x_1, \dots, x_n \sim \mathbb{P}_{n, \beta_n}^{V}$ with $K = g$ and $0 < \zeta < \frac{\delta \wedge \frac{2}{d}}{2d}$, and let $\nu_n = \sum_{i = 1}^{n} w(x_i) \delta_{x_i}$ be the weighted empirical measure defined in \cref{e:weighted_empirical_measure}.

Then $\mathbb{P}$-almost surely, 
\[B_n := \underset{z \in \mathbb{R}^{d}}{\sup}\left\lvert U_{K_{\zeta}}^{\nu_n}(z) - U_{g}^{\pi}(z)\right\rvert \underset{n \rightarrow +\infty}{\rightarrow} 0.\]
\end{proposition}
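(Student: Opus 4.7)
The plan is to mimic the strategy of \cref{p:max_pot} while handling the weights $w(x_i)$. Since $W$ is continuous and compactly supported on $S_\pi$ (the extension by zero being continuous thanks to \cref{assum:vanishing_density}) and $\mu_n \to \mu_V$ weakly $\mathbb{P}$-almost surely, the denominator $\bar W_n := \frac{1}{n}\sum_{i=1}^n W(x_i) \to 1$ almost surely. Writing $\nu_n = (W\mu_n)/\bar W_n$ and observing that $\pi = W\mu_V$ as measures, I would decompose
\[
U_{K_\zeta}^{\nu_n}(z) - U_g^\pi(z) = \frac{1}{\bar W_n}\bigl[U_{K_\zeta}^{W\mu_n}(z) - U_g^{W\mu_V}(z)\bigr] + U_g^\pi(z)\Bigl(\frac{1}{\bar W_n} - 1\Bigr).
\]
Since $\pi$ has bounded density on a compact set and $d\geq 3$, $U_g^\pi$ is uniformly bounded on $\mathbb{R}^d$, so the second summand vanishes uniformly in $z$ almost surely.

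For the main bracket, I would further split into the discrepancy $A_n(z) := \int K_\zeta(z,x) W(x) \,\mathrm{d}(\mu_n - \mu_V)(x)$ and the kernel-regularization error $B_n(z) := \int [K_\zeta(z,x) - g(z,x)] W(x)\,\mathrm{d}\mu_V(x)$. Since $0 \leq g - K_\zeta$ with $g - K_\zeta \to 0$ pointwise off the diagonal, and $W\mu_V$ has bounded density on a compact set, the same polar-coordinates estimate as in \cref{l:comparison_g_Hzeta} yields $\sup_z |B_n(z)| = O(n^{-2\zeta}\log n) \to 0$.

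For the discrepancy $A_n$, I would apply the RKHS duality in $\mathcal{H}_{K_\zeta}$ to the signed measure $\sigma_n := W(\mu_n - \mu_V)$:
\[
\sup_z |A_n(z)| \le \sup_z \sqrt{K_\zeta(z,z)}\,\sqrt{I_{K_\zeta}(\sigma_n)} = n^{\zeta(d-2)/2}\sqrt{I_{K_\zeta}(W(\mu_n - \mu_V))}.
\]
By \cref{l:bound_Ih_Ig}, $I_{K_\zeta}(\sigma_n) \le I_g(\sigma_n)$, so it suffices to show $n^{\zeta(d-2)} I_g(W(\mu_n - \mu_V)) \to 0$ almost surely. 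Following the pattern of \cref{p:max_pot}, I would regularize each atom of $\mu_n$ by convolution with the uniform law on $B(x_i, n^{-\epsilon})$ to form $\mu_n^{(\epsilon)}$, bound the smoothing error via a Taylor-expansion argument analogous to \cref{l:taylor_no_weights}, and apply the Coulomb concentration \cref{l:concentration} to the smoothed signed measure.

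The main obstacle will be transferring the concentration from the unweighted signed measure $\mu_n^{(\epsilon)} - \mu_V$ to its weighted counterpart $W(\mu_n^{(\epsilon)} - \mu_V)$. Since $W$ is bounded and compactly supported, I would expand $I_g(W(\mu_n^{(\epsilon)} - \mu_V))$ into cross-terms of the form $\iint W(x)W(y) g(x,y)\,\mathrm{d}\mu_n^{(\epsilon)}(x)\,\mathrm{d}\mu_V(y)$ and reduce each to quantities controllable either by \cref{p:max_pot} itself or by a direct application of the Coulomb concentration. The slightly stricter range $\zeta < (\delta \wedge \tfrac{2}{d})/(2d)$ in the statement (compared to \cref{p:max_pot}) is the expected price of this weighted perturbation in the regularization tradeoff.
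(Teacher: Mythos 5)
Your skeleton (normalising the weights, bounding $U_g^\pi$ uniformly via the maximum principle, splitting off the kernel-regularisation error, applying RKHS duality and $I_{K_\zeta}\le I_g$, smoothing the atoms) matches the paper's up to the decisive step, but that decisive step — decoupling the weight $W$ from the signed measure $\mu_n^{(\epsilon)}-\mu_V$ — is exactly where your proposal has a gap. The cross-term expansion you suggest does not work: each term $I_g(W\mu_n^{(\epsilon)})$, $I_g(W\mu_n^{(\epsilon)},W\mu_V)$, $I_g(W\mu_V)$ is of order $1$, so the smallness of $I_g(W(\mu_n^{(\epsilon)}-\mu_V))$ comes entirely from cancellations between them; and rewriting the energy as $\int W\, U_g^{W(\mu_n^{(\epsilon)}-\mu_V)}\,\mathrm{d}(\mu_n^{(\epsilon)}-\mu_V)$ requires a sup bound on the \emph{weighted} potential, which is what you are trying to prove — the reduction is circular. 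Nor can you invoke a naive domination $I_g(W\sigma)\le C_W\, I_g(\sigma)$: multiplication by $W$ is a bounded operator on the $\dot H^{-1}$-type energy only if $W$ is (at least) Lipschitz, whereas here $W=\pi'/\mu_V'$ is merely continuous, and \cref{p:max_pot} itself gives no control on weighted linear statistics. Relatedly, any Taylor-expansion control of the smoothing error in the weighted setting (the analogue of \cref{l:taylor_no_weights}) needs a derivative of $W$, which you do not have.

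The paper resolves both issues by first mollifying the weight, $W\star\rho_\eta$, paying $\lVert W-W\star\rho_\eta\rVert_\infty$ at the very end (this is where \cref{assum:vanishing_density} and the compact support of $W$ enter), and then performing the decoupling in Fourier space (\cref{l:energies_fourier,l:fourier_weights}): the characteristic function of the mollified weighted measure is the convolution $\overline{\mathcal{F}(W\star\rho_\eta)}\star(\Phi_{\mu_n^{(\epsilon)}}-\Phi_{\mu_V})$, and after splitting frequencies at $\lvert x\rvert\lessgtr n^{\zeta/s}$ one controls the low-frequency part by $n^{\zeta(d-2+1/s)}\,d_{\mathrm{BL}}(\mu_n^{(\epsilon)},\mu_V)$ — which is then bounded via the Coulomb transport inequality by $\sqrt{I_g(\mu_n^{(\epsilon)}-\mu_V)}$, the \emph{unweighted} energy amenable to \cref{l:concentration} — and the high-frequency part by the tail estimates on $\Lambda_\zeta$ from \cref{l:spectral_measures}. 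This is also where the stricter exponent comes from: one needs $n^{2\zeta d}\,I_g(\mu_n^{(\epsilon)}-\mu_V)\to 0$ (from $n^{\zeta d}\,d_{\mathrm{BL}}$ with $s=1/2$), not just $n^{\zeta(d-2)}I_g\to 0$, which is what forces $\zeta<(\delta\wedge\tfrac2d)/(2d)$. Without an argument of this type (or an explicit multiplier estimate for merely continuous weights), your proof is incomplete at its central point.
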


The constraint on $\zeta$ is upper bounded by $\mathcal{O}(1/d^2)$ both in \cref{p:max_pot} and \cref{p:max_pot_weight}, but it is not expected to be optimal at all.

We give again a sketch of proof, with details to be found in \cref{p:max_pot_weight}.
The beginning of the proof is in the same vein as \cref{p:max_pot}: we show using \cref{l:taylor_no_weights} and \cref{l:taylor_weights} that it is enough (i.e. the error terms vanish) to study $U_{K_{\zeta}}^{\nu_{n}^{(\epsilon)}} - U_{K_\zeta}^{\pi}$ where $\nu_{n}^{(\epsilon)}$ is the weighted empirical measure with each Dirac convolved by a uniform distribution on $B(0, n^{-\epsilon})$.
For that purpose, we need instead of $\nu_{n}^{(\epsilon)}$ to consider $\nu_n^{\epsilon, \eta}$ in which the weights are smoothed versions of W obtained by convolving with a Gaussian mollifier.
Then, one wishes to decouple the weights from the unweighted measures $\mu_{n}^{(\epsilon)}$ and $\mu_V$ to apply once again the concentration properties for the Gibbs measure from which the points are generated.
To do so, a key idea is to rephrase the energy $I_{K}(\mu - \nu)$ in Fourier space as an $L^2$-norm between the characteristic functions of $\mu$ and $\nu$ with respect to a spectral measure induced by $K$.
For $K = K_{\zeta}$ and $K = g$, we give explicit expressions and useful estimates for this spectral measure in \cref{s:preliminaries}.
We then show in \cref{l:fourier_weights} that we can split this $L^2$-norm in two terms: one vanishing term and the other one controlled by quantities of the form $n^{2\zeta d} I_g(\mu_{n}^{(\epsilon)}-\mu_V)$ which converges to $0$ again for suitable choices of $\epsilon$ and $\zeta$.

\section{Discussion and perspectives}

We have shown that a suitable random approximation of the potential can be integrated into large deviations results that support the use of Gibbs measures for Monte Carlo integration.
Several open questions deserve further investigation.

\subsubsection*{Extensions}
First, for exhaustiveness, generalizing \cref{t:ldp_quenched_coulomb} to dimensions $1$ and $2$ could be of interest. 
The difficulty is that the natural replacement for the regularized Coulomb kernel \eqref{e:coulomb_kernel_reg} in dimensions 1 and 2 is not positive semidefinite.

A second interesting question lies within the choice of the size of the background $\nu_n$.
Our \cref{t:ldp_quenched_bounded} is only asymptotic, and does not bring to light the impact of the quality of $\nu_n$; yet the latter matters in practice, as we experimentally show in \cref{s:experiments}.
Quantifying this impact at fixed $n$ would require a concentration inequality in the spirit of \cref{t:ldp_quenched_bounded}.
For the record, one can easily show using the computations of \cite{rouault_monte_2024} that in the setup of \cref{t:ldp_quenched_bounded}, there exists a constant $c > 0$ such that for $\mathbb{P}$-almost all $\omega$ and for any $n \geq 2$ and $r > n^{-1/2}$,
\begin{equation}
    \label{e:concentration_quenched}
    \mathbb{P}_{n, \beta_n}^{\mathrm{Q}}\left(\mathrm{E}_{\mathcal{H}_K}\left(\frac{1}{n}\sum_{i = 1}^{n}\delta_{y_i}\right) > r\right) \leq \exp\left(-c \beta_n r^2  + 2 \beta_n \,\mathrm{E}_{\mathcal{H}_K}\left(\nu_n(\omega)\right)\right).
\end{equation}
If $\nu_n$ is the empirical measure of the history of a Markov chain of size $M_n$, it is known \citep[Proposition 1]{dwivedi_kernel_2021} that $\mathrm{E}_{\mathcal{H}_K}\left(\nu_n\right) = \mathcal{O}(M_{n}^{-1/2})$ with high probability.
Therefore, there exists constants $u_0, u_1$ such that for any $r > u_0 (n^{-1/2} \vee M_{n}^{-1/4})$, with high probability,
\begin{equation}
    \mathbb{P}_{n, \beta_n}^{\mathrm{Q}}\left(\mathrm{E}_{\mathcal{H}_K}\left(\frac{1}{n}\sum_{i = 1}^{n}\delta_{y_i}\right) > r\right) \leq \exp\left(-u_1 \beta_n r^2 \right).
\end{equation}
When $M_n = n$ for instance, this suggests a downgrade in the constraint on $r$ compared to the concentration result of \cite{rouault_monte_2024} with exact knowledge of the potential $U_{K}^{\pi}$, while there is no downgrade when using a larger initial Markov chain $M_n \geq n^2$.
In line with \citep{dwivedi_kernel_2021, chatalic_efficient_2025}, this seems to point out that the background should be a significantly longer Markov chain than the cardinality of the final quadrature. 
\revv{Alternatively}, one could choose a repulsive background with variance reduction, such as the Coulomb gas we used in \cref{t:ldp_quenched_coulomb}.
Still, these recommendations are to be taken with a pinch of salt as our bound~\eqref{e:concentration_quenched} might be crude.

\subsubsection*{A fast CLT for linear statistics}
A stronger argument in favor of using Gibbs measures for Monte Carlo integration would be a central limit theorem with a fast rate.
While obtaining such a result for the singular Coulomb interaction is notoriously difficult \citep{serfaty_gaussian_2023}, one can reasonably hope that this would be easier for a bounded interaction kernel.

\subsubsection*{Sampling from the Gibbs measure}
Now that we have incorporated the approximation of the potential into a fast Monte Carlo result, the elephant in the room is our need for an MCMC algorithm to approximately sample from our Gibbs measures in reasonable time.
The experiments given in \cref{s:experiments} show that this can be an issue if one wants to see smaller integration errors or variances than MCMC in practice.
In the absence of an exact sampler, understanding the sampling complexity of an MCMC algorithm like MALA for this specific target in terms of the number of iterations $T$, the step size, the dimension $d$, the number of particles $n$, and the inverse temperature $\beta_n$ would be important, and it would be interesting to compare the different methods we considered in \cref{s:experiments} with fixed budget.
    Additionally, the recent work of \cite{chen_stationary_2025} suggests that stationary points of the MMD are both computationally tractable and lead to efficient numerical integration of the functions in a large subset of the underlying RKHS.
    For integrating these particular functions, deterministic optimization of the MMD might thus be preferable.
    \revv{Alternatively}, when MALA fails to reach the targeted Gibbs distribution because the gradient of the log target is too small, we may thus be in the vicinity of a stationary point of the MMD, which in turn would not be bad news in terms of numerical integration.
Finally, sampling algorithms other than MALA, tailored to our Gibbs measures, could also achieve better performances.
One could as well think of drawing points from a discrete Gibbs measure supported on a pre-drawn background $\nu_n$, following the two-step rationale of coresets or kernel thinning \citep{dwivedi_kernel_2021}.

\section{Proofs: quenched large deviations}
\label{s:proof_ldp}

In \citep{garcia-zelada_large_2019}, the author established a LDP for Gibbs measures in a quite general framework. In \cref{s:GZ}, we recall
his results in the case when $\frac{\beta_n}{n}$ converges to infinity (that we denote by $\beta_n \gg n$) and detail in particular the conditions that we need to check to apply the results in our context. In \cref{s:quenched_ldp:bounded}, we check these conditions in the case of non-singular kernels and conclude the proof of  Theorem \ref{t:ldp_quenched_bounded}. 
To further prove \cref{t:ldp_quenched_coulomb}, where the kernel is the Coulomb kernel $g$,
we will need \cref{p:max_pot_weight}, whose proof is the main technical input of the present paper and is postponed to  \cref{s:proof_max_pot}.


\subsection{Reminder on LDPs for Gibbs measures, after \citep{garcia-zelada_large_2019}}
\label{s:GZ}

Equip the set $\mathcal P (\mathbb{R}^d)$ of probability measures on $\mathbb{R}^{d}$ with the topology of weak convergence.  
The following theorem is a general large deviations result for Gibbs measures, due to \citet{garcia-zelada_large_2019}.

\begin{theorem}[\citet{garcia-zelada_large_2019}, Corollary 1.3]
    \label{t:general_ldp_gibbs}
    For a reference probability measure $\nu \in \mathcal{P}\left(\mathbb{R}^d\right)$, we consider the unnormalized Gibbs measure
    \[\mathrm{d}\gamma_n(y_1, \dots, y_n) = \exp^{-\beta_n W_n(y_1, \dots, y_n)}\,\mathrm{d}\nu^{\otimes n}(y_1, \dots, y_n),\]
    with $W_n : \left(\mathbb{R}^{d}\right)^n \rightarrow (-\infty, +\infty]$  symmetric and measurable.    
    We extend $W_n$ to all probability measures by defining
    \begin{equation}
         \tilde{W}_n(\mu) = 
            \begin{cases}
            W_n(y_1, \dots, y_n)  \text{ if $\mu$ is atomic with } \mu = \frac{1}{n}\sum_{i = 1}^{n}\delta_{y_i}\\
            +\infty  \text{ otherwise}
            \end{cases}.
    \end{equation}
Consider the following conditions.
%
%

\begin{enumerate}
    \item \label{assum:ldp:stable} \emph{(Stable sequence)} There exists a constant $C \in \mathbb{R}$ such that for all $n$, $W_n \geq C$ uniformly over $\left(\mathbb{R}^{d}\right)^{n}$.
    \item \label{assum:ldp:confining} \emph{(Confining sequence)}
    For any increasing sequence of natural numbers $(n_j)$ and any sequence of probability measures $(\mu_j)$ on $\mathbb{R}^{d}$,
    if there exists a constant $A$ such that $\tilde{W}_{n_j}(\mu_j) \leq A$ for every $j$ then $(\mu_j)$ is relatively compact.
    \item \label{assum:ldp:lower_limit} \emph{(Lower limit)}
    For any sequence of probability measures $(\mu_n)$ on $\mathbb{R}^{d}$ converging to $\mu$, we have $W(\mu) \leq \underline{\lim}\, \tilde{W}_n(\mu_n)$.
    \item \label{assum:ldp:regularity} \emph{(Regularity)}
    For any probability measure $\mu$ on $\mathbb{R}^{d}$ such that $W(\mu) < +\infty$, we can find a sequence of probability measures $(\mu_n)$ in $\mathcal{N}$ such that $\mu_n {\rightarrow} \mu$ and $\overline{\lim}\, W(\mu_n) \leq W(\mu)$, where
    \[\mathcal{N} = \{\xi \in \mathcal{P}(\mathbb{R}^{d}) \, : \, \mathrm{KL}(\xi, \nu) < +\infty,\, \overline{\lim}\, \mathbb{E}_{\xi^{\otimes n}}\left[W_n\right] \leq W(\xi)\},\]
    and $\mathrm{KL}(\xi, \nu)$ is the relative entropy.
\end{enumerate}

Under Conditions \ref{assum:ldp:stable}, \ref{assum:ldp:confining}, \ref{assum:ldp:lower_limit}, and \ref{assum:ldp:regularity}, further assume that $Z_n = \gamma_n(\left(\mathbb{R}^{d}\right)^{n})$ is positive for $n$ large enough.
Define the normalized Gibbs measure $\mathrm{d}\mathbb{P}_{n} = \frac{1}{Z_n}\gamma_n$, as well as $\mu_n = \frac{1}{n}\sum_{i = 1}^{n}\delta_{y_i}$ the associated empirical measure with $y_1, \dots, y_n \sim \mathbb{P}_n$.
Then $\mu_n$ satisfies a LDP as $n \rightarrow +\infty$ with speed $\beta_n$ and good rate function $\mu \mapsto W(\mu) - \underset{\eta \in \mathcal{P}(\mathbb{R}^{d})}{\inf} W(\eta)$.
\end{theorem}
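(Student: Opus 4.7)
The plan is to establish matching upper and lower bounds for $\frac{1}{\beta_n}\log \mathbb{P}_n(\mu_n \in A)$ via a Laplace-type analysis, treating numerator and denominator separately. Writing
$$\mathbb{P}_n(\mu_n \in A) = \frac{1}{Z_n}\int_{\{\mu_n \in A\}} e^{-\beta_n W_n(y_1,\ldots,y_n)}\,\mathrm{d}\nu^{\otimes n}(y_1,\ldots,y_n),$$
I would control the numerator using the lower limit condition (3) together with stability (1) and confinement (2), and control $Z_n$ from below using the regularity condition (4). Crucially, the assumption $\beta_n \gg n$ lets me neglect all entropic contributions, which naturally appear at speed $n$ from Sanov's theorem or from Jensen's inequality against $\nu^{\otimes n}$, in comparison with the energetic contribution at speed $\beta_n$.

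For the upper bound on a closed set $F$, I would use that $W_n(y_1,\ldots,y_n) = \tilde W_n(\mu_n)$, so it suffices to show
$$\liminf_n\ \inf_{\substack{\mu \in F\\ \mu\ n\text{-atomic}}} \tilde W_n(\mu) \geq \inf_{\mu \in F} W(\mu).$$
The confining condition ensures that a minimizing sequence $\mu_{n_j}$ is relatively compact, so it admits a subsequential limit $\mu \in F$; the lower limit condition then gives $W(\mu) \leq \liminf \tilde W_{n_j}(\mu_{n_j})$, yielding the inequality by a standard diagonal argument. Stability makes all integrals finite, and I obtain $\limsup_n \frac{1}{\beta_n}\log \gamma_n(\mu_n \in F) \leq -\inf_F W$.

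For the lower bound on an open set $O$ (and on $Z_n$ as the special case $O = \mathcal{P}(\mathbb{R}^d)$), I would fix $\mu^\star \in O$ with $W(\mu^\star) < \infty$ and pick an approximating sequence $\xi_k \in \mathcal N$ furnished by condition (4). A tilt from $\nu^{\otimes n}$ to $\xi_k^{\otimes n}$ combined with Jensen's inequality gives
$$\log \int_{\{\mu_n \in O\}} e^{-\beta_n W_n}\,\mathrm{d}\nu^{\otimes n} \geq -\beta_n\, \mathbb{E}_{\xi_k^{\otimes n}}[W_n] - n\,\mathrm{KL}(\xi_k,\nu) + \log \xi_k^{\otimes n}(\mu_n \in O).$$
Dividing by $\beta_n$ and using $\beta_n \gg n$ together with $\overline{\lim}\,\mathbb{E}_{\xi_k^{\otimes n}}[W_n] \leq W(\xi_k)$ from the definition of $\mathcal N$, the right-hand side tends to $-W(\xi_k)$ (the last log term vanishes by the law of large numbers under $\xi_k^{\otimes n}$, which sends $\mu_n \to \xi_k \in O$). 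Letting $k\to \infty$ and using $\limsup W(\xi_k) \leq W(\mu^\star)$ yields the matching lower bound and $\liminf \beta_n^{-1}\log Z_n \geq -\inf W$.

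The main obstacle I anticipate is less in the abstract argument than in the combination of (2) and (3): the confining condition must be strong enough to prevent mass escape, while the lower limit must be compatible with the discrete-to-continuous transition encoded in $\tilde W_n$. In the abstract framework these are postulated; the real work of the present paper is to verify them, together with (4), for the quenched energies arising from $V_n$, which is where the approximation quality of $\nu_n$ and, in the Coulomb case, the uniform convergence estimate of \cref{p:max_pot_weight} will enter decisively.
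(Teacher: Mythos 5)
The paper does not prove this statement at all: it is imported verbatim from \citet{garcia-zelada_large_2019} (Corollary 1.3), and the authors' own work consists only in verifying Conditions 1--4 for their quenched energies. Your sketch therefore goes beyond what the paper does, and it correctly reproduces the standard Laplace/tilting argument that underlies the cited proof: upper bound on closed sets by combining the confining condition (to extract a convergent subsequence from a near-minimizing sequence of empirical measures) with the lower-limit condition, and lower bound on open sets by tilting $\nu^{\otimes n}$ to $\xi_k^{\otimes n}$, applying Jensen, and using $\beta_n \gg n$ to discard the entropic term $n\,\mathrm{KL}(\xi_k,\nu)$. Two routine points are glossed over. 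First, your displayed Jensen inequality is not quite what Jensen gives: the tilting must be performed on the event $E=\{\mu_n\in O\}$ itself, which produces $\mathbb{E}_{\xi_k^{\otimes n}}[W_n\mathbf{1}_E]/\xi_k^{\otimes n}(E)$ and a restricted entropy integral; one then uses the stability condition $W_n\ge C$ and $\xi_k^{\otimes n}(E)\to 1$ (law of large numbers, since $\xi_k\in O$ for $k$ large) to recover the unrestricted expectations up to vanishing errors. Second, you do not address why the rate function is \emph{good}, i.e.\ why $W$ is lower semicontinuous with compact sublevel sets; this also follows from Conditions 2--4 but requires a separate short argument. Neither point is a conceptual gap, and your closing remark correctly identifies where the real difficulty of the present paper lies, namely in checking Conditions 1--4 for $W_n^{\mathrm{Q}}$, with \cref{p:max_pot_weight} supplying the key uniform control in the Coulomb case.
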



When Conditions~\ref{assum:ldp:stable} to \ref{assum:ldp:regularity} are met, it is said that $W$ is the macroscopic zero-temperature limit of $W_n$.
Our objective is to apply Theorem~\ref{t:general_ldp_gibbs} to the discrete energy $H_{n}^{\mathrm{Q}}$, where we recall from \cref{d:gibbs_quenched} that
\begin{equation}
 \label{e:hnq}
H_{n}^{\mathrm{Q}}(y_1, \dots, y_n) = \frac{1}{2n^2}\sum_{i \neq j} K(y_i, y_j) + \frac{1}{n}\sum_{i = 1}^{n} v(y_i) - \frac{1}{n}\sum_{i = 1}^{n} U_{K}^{\nu_n}(y_i),
\end{equation}
with $v = 0$ inside $B(0, R)$ and $v = \Phi$ outside and where $\nu_n$ is a random measure.
The role of the macroscopic zero-temperature limit $W$ in Theorem~\ref{t:general_ldp_gibbs} would naturally be played by
the energy functional in \cref{d:energies}, that is 
\[I_{K}^{V^{\pi}}(\mu) = \frac{1}{2}\iint K(x, y)\,\mathrm{d}\mu(x)\,\mathrm{d}\mu(y) + \int v\,\mathrm{d}\mu - \int U_{K}^{\pi}\,\mathrm{d}\mu = I_{K}^{v}(\mu) - \int U_{K}^{\pi}\,\mathrm{d}\mu\]
when $I_{K}^{V^{\pi}}(\mu) < \infty$.

The main difficulty in applying Theorem~\ref{t:general_ldp_gibbs} is the dependence of $H_n^{\mathrm{Q}}$ to the random measure $\nu_n$.
\cite{garcia-zelada_large_2019} does use \cref{t:general_ldp_gibbs} to prove large deviations for conditioned Gibbs measures (their Theorem 4.1) and particles in varying environment (their Theorem 4.2), but their framework cannot be directly applied to our case since they either restrict to particles supported on a compact subset, or use a different sign convention for 
the background distributions.
Neither of these settings covers our case, since we wish for a Gibbs measure supported on $\left(\mathbb{R}^{d}\right)^{n}$, and because the conditioned Gibbs measure considered in
\citep{garcia-zelada_large_2019} does not converge to an easily predefinable target due to their sign convention. 
We will however rely on some standard arguments from \citet{garcia-zelada_large_2019} in the upcoming proofs.
Relating their proof to our framework turned out to be relatively easy for non-singular interactions, but tricky in the Coulomb case, where our \cref{p:max_pot_weight} will give the missing key result.

We start with a result that only considers the deterministic part of $H_n^{\mathrm{Q}}$, and can be considered a corollary of \cref{t:general_ldp_gibbs}.

\begin{proposition}
\label{p:ldp_usual_gibbs}
Define
\[W_n^{0}(y_1, \dots, y_n) = \frac{1}{2 n^2}\sum_{i \neq j} K(x_i, x_j) + \frac{1}{n}\sum_{i = 1}^{n} v(y_i)\]
and $I_{K}^{v}(\mu) = \frac{1}{2}\iint K(x, y)\,\mathrm{d}\mu(x)\,\mathrm{d}\mu(y) + \int v\,\mathrm{d}\mu$,
where $K$ is either the Coulomb kernel $g$ or a non-singular interaction kernel that satisfies \cref{assum:non_singular},
and $v$ satisfies $\cref{assum:confinement}$ with constant $c' > 0$. 

Then one can build a reference probability measure $\nu$ and a functional $W_n$ such that 
\begin{itemize}
\item $\gamma_n =  \exp\left(-\beta_n W_n\right)\,\mathrm{d}\nu^{\otimes n}$ is exactly (up to normalization) the Gibbs measure associated to $H_n = W_{n}^{0}$ and confinement $v$, as introduced in \cref{d:gibbs};
\item $(W_{n}, I_{K}^{v})$ satisfies Conditions \ref{assum:ldp:stable}, \ref{assum:ldp:confining}, \ref{assum:ldp:lower_limit} and \ref{assum:ldp:regularity} of \cref{t:general_ldp_gibbs}.
In particular, the empirical measure associated to $\mathbb{P}_{n, \beta_n}^{v}$ with $\beta_n \gg n$ satisfies a LDP with speed $\beta_n$ and rate function $I_{K}^{v} - I_{K}^{v}(\mu_v)$.
\end{itemize}
\end{proposition}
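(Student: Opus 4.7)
The plan is to cast the Gibbs measure $\mathbb{P}_{n,\beta_n}^v$ into the framework of \cref{t:general_ldp_gibbs} by absorbing the (infinite) Lebesgue reference into a probability measure $\nu$ that carries part of the confinement, and then to verify its four abstract conditions by arguments that are essentially already present in \citep{chafai_first-order_2014, garcia-zelada_large_2019}.

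Concretely, using \cref{assum:confinement}, I would define
$$d\nu(y) = Z_0^{-1}\exp(-c' v(y))\,dy, \qquad Z_0 = \int e^{-c'v(y)}\,dy < \infty,$$
so that $\nu \in \mathcal{P}(\mathbb{R}^d)$, and set
$$W_n(y_1, \dots, y_n) = \frac{1}{2n^2}\sum_{i\neq j} K(y_i, y_j) + \alpha_n \sum_{i=1}^n v(y_i), \qquad \alpha_n := \frac{1}{n} - \frac{c'}{\beta_n}.$$
A direct computation yields $\exp(-\beta_n W_n)\,d\nu^{\otimes n} = Z_0^{-n}\exp(-\beta_n H_n)\prod_i dy_i$, so after normalization one recovers exactly $\mathbb{P}_{n,\beta_n}^v$. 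Since $\beta_n \gg n$, one has $n\alpha_n \to 1$, in particular $\alpha_n > 0$ for $n$ large, so that $W_n$ is only an asymptotically harmless perturbation of $H_n$.

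Then I would verify the four conditions. Stability (\ref{assum:ldp:stable}) is immediate, since $K \geq 0$, $v \geq 0$ and $\alpha_n \geq 0$ for $n$ large. The confining property (\ref{assum:ldp:confining}) follows by discarding the non-negative interaction term: $\tilde W_{n_j}(\mu_j) \leq A$ implies $\int v\,d\mu_j \leq A/(n_j\alpha_{n_j})$, which stays bounded because $n_j\alpha_{n_j}\to 1$, and tightness follows from $v(y) \to \infty$ at infinity. The lower-limit condition (\ref{assum:ldp:lower_limit}) combines the standard lower semicontinuity of $\mu \mapsto \iint K\,d\mu\otimes d\mu$ (by truncating $K$ from above and applying Portmanteau in the non-singular case; by monotone convergence on $g\wedge M \nearrow g$ in the Coulomb case) with the weak lsc of $\mu \mapsto \int v\,d\mu$ for $v$ continuous and non-negative; the factor $n\alpha_n \to 1$ ensures that $I_K^v(\mu)$ is recovered as the liminf.

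The main step is the regularity condition (\ref{assum:ldp:regularity}). For a given $\mu$ with $I_K^v(\mu) < \infty$, the plan is to construct a recovery sequence $(\mu_k)$ by truncating $\mu$ to an increasing ball, convolving with a vanishing-bandwidth mollifier, and renormalizing, so that each $\mu_k$ has bounded density with compact support and hence finite relative entropy against $\nu$. The inclusion $\mu_k \in \mathcal{N}$ then amounts to the asymptotic bound $\overline{\lim}_n \mathbb{E}_{\mu_k^{\otimes n}}[W_n] \leq I_K^v(\mu_k)$, which reduces to a law-of-large-numbers computation on the continuous bounded kernel $K$ (the $1/n$ diagonal contribution is negligible) together with $n\alpha_n \to 1$. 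The upper bound $\overline{\lim}_k I_K^v(\mu_k) \leq I_K^v(\mu)$ is the only delicate ingredient: it follows easily from the continuity of $K$ in the non-singular case, while in the Coulomb case it relies on the classical potential-theoretic approximation arguments of \citep[Section 2]{chafai_first-order_2014}. With all four conditions checked, \cref{t:general_ldp_gibbs} yields the claimed LDP at speed $\beta_n$ with rate function $I_K^v - I_K^v(\mu_v)$, the existence and uniqueness of the minimizer $\mu_v$ having been recalled under \cref{d:gibbs}.
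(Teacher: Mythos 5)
Your proposal is correct and follows essentially the same route as the paper: the same tilted reference measure $\nu \propto e^{-c'v(y)}\,\mathrm{d}y$, the same residual confinement coefficient $\tfrac1n-\tfrac{c'}{\beta_n}$ absorbed into $W_n$, and verification of the four conditions of \cref{t:general_ldp_gibbs} by the standard lower-semicontinuity, tightness and recovery-sequence arguments of \citet{chafai_first-order_2014} and \citet{garcia-zelada_large_2019}. The only difference is presentational: the paper repackages the one-body term as the pairwise kernels $G_1,G_2$ with coefficient $a_n$ so as to cite \citep[Theorem 4.11]{garcia-zelada_large_2019} directly, whereas you write the one-body term explicitly and check the abstract conditions by hand — the resulting $W_n$ is the same function up to the paper's own factor-of-two normalization ambiguity.
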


\begin{proof}
We follow the proof of \citet[Theorem 4.11]{garcia-zelada_large_2019}.
Consider the reference probability measure
\[
    \nu = \frac{\exp\left(-c' v(z)\right)}{\int \exp\left(-c' v(z)\right)\,\mathrm{d}z}\,\mathrm{d}z
\]
and set, for any fixed $\epsilon \in (0, 1)$, 
\[G_1(x, y) = K(x, y) + \epsilon v(x)  + \epsilon v(y)\, , \, G_2(x, y) = (1-\epsilon)v(x) + (1-\epsilon)v(y),\]
\[W_{n}^{1}(y_1, \dots, y_n) = \frac{1}{2 n^2} \sum_{i \neq j}G_1(y_i, y_j) \, , \, W_{n}^{2}(y_1, \dots, y_n) = \frac{1}{2 n^2}\sum_{i \neq j} G_2(y_i, y_j),\]
\[a_n  = \frac{1}{1-\epsilon}\left(\frac{1}{n-1}\left(n - \frac{n^2}{\beta_n}c'\right) - \epsilon\right),\]
and $W_n = W_{n}^{1} + a_n W_{n}^{2}$.

The assumptions to apply \cref{t:general_ldp_gibbs} are straightforward given our restrictions on 
$g$ and $v$, using \cite[Proposition 2.8]{chafai_first-order_2014} for the regularity assumption.
The first point of the proposition is just a rewriting of the different terms in the Gibbs measure.
\end{proof}

For the specific choice of $v=0$ in $B(0,R)$ and $v= \Phi$ outside, \cref{assum:confinement} is clearly satisfied, so that we can already apply \cref{p:ldp_usual_gibbs} with $K$ being either Coulomb or satisfying \cref{assum:non_singular}.
Defining $W_n$ and $\nu$ as in \cref{p:ldp_usual_gibbs}, $I_{K}^{v}$ is the zero-temperature macroscopic limit of $W_n$, and we can rewrite the unnormalized quenched Gibbs measure we are interested in as
\begin{multline*}
\exp\left(-\beta_n H_n^{\mathrm{Q}}(y_1, \dots, y_n)\right) \,\mathrm{d}y_1\,\dots\,\mathrm{d}y_n = \\
\exp\left(- \beta_n\left[ W_n(y_1, \dots, y_n) - \frac{1}{n}\sum_{i = 1}^{n} U_{K}^{\nu_n}(y_i)\right]\right)\,\mathrm{d}\nu(y_1)\,\dots\,\mathrm{d}\nu(y_n).
\end{multline*}

If we denote $W_n^{\mathrm{Q}}(y_1, \dots, y_n) = W_n(y_1, \dots, y_n) - \frac{1}{n}\sum_{i = 1}^{n} U_{K}^{\nu_n}(y_i)$, which is symmetric and measurable, we want to show that $I_{K}^{V^{\pi}}$ is the zero-temperature macroscopic limit of $W_n^{\mathrm{Q}}$ $\mathbb{P}$-almost surely.
Indeed, since the partition function of $\mathbb{P}_{n, \beta_n}^{\mathrm{Q}, \pi}$ is well-defined $\mathbb{P}$-almost surely for $n$ large enough (i.e. finite and positive, see \cref{d:gibbs_quenched}), we then will be able to  apply \cref{t:general_ldp_gibbs}, concluding the proof of \cref{t:ldp_quenched_bounded} and \cref{t:ldp_quenched_coulomb} respectively.
It remains to check that $(W_n^{\mathrm{Q}}, I_{K}^{V^{\pi}})$ satisfies Condition~\ref{assum:ldp:stable} to \ref{assum:ldp:regularity} $\mathbb{P}$-almost surely and for $n$ large enough.
This part of the proof will differ in the non-singular case and in the Coulomb case.

\subsection{Proof of \cref{t:ldp_quenched_bounded} (non-singular kernels)}
\label{s:quenched_ldp:bounded}

Let $K$ satisfy \cref{assum:non_singular}. 
We examine the conditions of \cref{t:general_ldp_gibbs} one by one.

First, $0 \leq K \leq C$ by assumption, so that $-U_{K}^{\nu_n} \geq -C$ and $W_n^{\mathrm{Q}}$ satisfies Condition~\ref{assum:ldp:stable} since $W_n$ satisfies it also.

Second, assume that $\tilde W_n^{\mathrm{Q}}(\mu_j) \leq A$ for some constant $A$.
Then the $(\mu_j)_j$ have to be empirical measures $\mu_j = \frac{1}{n_j} \sum_{i = 1}^{n} \delta_{y_{i, j}}$.
Using again $U_{K}^{\nu_j} \leq C$, this yields $W_{n_j}(\mu_j) \leq A + C$. 
Since $W_n$ satisfies Condition~\ref{assum:ldp:confining}, this implies that $(\mu_j)_j$ is relatively compact, and thus $W_n^{\mathrm{Q}}$ satisfies the same condition.

Third, assume that $\mu_n \rightarrow \mu$ with $\mu_n$ empirical measures.
Since $K$ is bounded and continuous and $\nu_n \rightarrow \pi$ $\mathbb{P}$-almost surely, $\iint K\,\mathrm{d}\nu_n\otimes \mu_n \rightarrow \int U_{K}^{\pi}\,\mathrm{d}\mu$ $\mathbb{P}$-almost surely by Fubini.
Since $W_{n}$ satisfies Condition~\ref{assum:ldp:lower_limit}, $W_n^{\mathrm{Q}}$ also satisfies it $\mathbb{P}$-almost surely.

Fourth, let $\mu$ be such that $I_{K}^{V^{\pi}}(\mu) < +\infty$.
Then, since $U_{K}^{\pi}$ is bounded, $I_{K}^{v}(\mu) < +\infty$.
By \cref{p:ldp_usual_gibbs}, $W_n$ satisfies Condition~\ref{assum:ldp:regularity}, so that  there exists a sequence $(\mu_n) \in \mathcal{N}$ such that $\mu_n \rightarrow \mu$ and $\overline{\lim}\, I_{K}^{v}(\mu_n) \leq W(\mu)$
with \[\mathcal{N} = \{ \xi \in \mathcal{P}(\mathbb{R}^{d})\,:\, \mathrm{KL}(\xi, \nu) < +\infty,\, \overline{\lim}\,\mathbb{E}_{\xi^{\otimes n}}\left[W_n\right] \leq I_{K}^{v}(\xi)\}.\]
Since $-U_{K}^{\pi}$ is bounded and continuous by dominated convergence, $\mu \mapsto -\int U_{K}^{\pi}\,\mathrm{d}\mu$ is continuous on $\mathcal{P}(\mathbb{R}^{d})$, so that
$\overline{\lim} I_{K}^{V^{\pi}}(\mu_n) \leq I_{K}^{V^{\pi}}(\mu)$.
Finally, for any $\xi \in \mathcal{P}(\mathbb{R}^{d})$,
\[\mathbb{E}_{\xi^{\otimes n}}\left[-\frac{1}{n}\sum_{i = 1}^{n} U_{K}^{\nu_n}(y_i)\right] = - \int U_{K}^{\nu_n}\,\mathrm{d}\xi = -\int U_{K}^{\xi}\,\mathrm{d}\nu_n\]  
by Fubini. 
Again, this yields $\mathbb{E}_{\xi^{\otimes n}}\left[-\frac{1}{n}\sum_{i = 1}^{n} U_{K}^{\nu_n}(y_i)\right] \rightarrow - \int U_{K}^{\xi}\,\mathrm{d}\pi = - \int U_{K}^{\pi}\,\mathrm{d}\xi$ $\mathbb{P}$-almost surely,
 so that $\overline{\lim}\,\mathbb{E}_{\xi^{\otimes n}}\left[W_n^{\mathrm{Q}}\right] \leq I_{K}^{V^{\pi}}(\xi)$ for any $\xi \in \mathcal{N}$, and in particular for the $(\mu_n)_n$.
 Thus Condition \ref{assum:ldp:regularity} is satisfied for $W_n^{\mathrm{Q}}$, $\mathbb{P}$-almost surely.
This concludes the proof of \cref{t:ldp_quenched_bounded}.

\subsection{Proof of \cref{t:ldp_quenched_coulomb} (Coulomb kernel)}
\label{s:quenched_ldp_coulomb}

Similarly, when $K=g$ is the Coulomb kernel and for our specific choice of $v,$ we check that
all the conditions to apply  \cref{t:general_ldp_gibbs} are satisfied.

Since the interaction is neither bounded nor continuous, we loose the continuity and boundedness properties for the energies compared to \cref{s:quenched_ldp:bounded}, so our approach must differ a bit.
We define 
\[B_n := \underset{z \in \mathbb{R}^{d}}{\sup}\left\lvert U_{K_{\zeta}}^{\nu_n}(z) - U_{g}^{\pi}(z)\right\rvert.\]
We show \cref{t:ldp_quenched_coulomb} given \cref{p:max_pot_weight}, whose proof is the technical crux and is postponed to \cref{s:proof_max_pot}.

We first consider Condition \ref{assum:ldp:lower_limit}.
Assume that the sequence of empirical measures $\mu_n = \frac{1}{n}\sum_{i = 1}^{n} \delta_{z_i}$ converges to $\mu$.
Due to the maximum principle, if $U_{g}^{\pi} \leq M$ on the support of $\pi$ then this holds everywhere \citep[Theorem 1.10, page 71]{landkof_foundations}.
Under the assumption that $\pi$ has compact support and continuous density, $z \mapsto U_{g}^{\pi}(z)$ is continuous \citep[Lemma 4.3]{chafai_first-order_2014}, so that
$U_{g}^{\pi}$ is bounded on the support of $\pi$ and thus everywhere.
Thus, $\mu \longmapsto -\int U_{g}^{\pi}\,\mathrm{d}\mu$ is lower semi-continuous and bounded from below.
In particular we have
\begin{align}
\label{e:uniform_bound_pot_ldp}
C \leq - \int U_{g}^{\pi}\,\mathrm{d}\mu \leq \underline{\lim}\, \left\{-\frac{1}{n} \sum_{i = 1}^{n} U_{g}^{\pi}(z_i)\right\} \leq \underline{\lim}\,\left\{-\frac{1}{n} \sum_{i = 1}^{n} U_{K_{\zeta}}^{\nu_n}(z_i) + B_n\right\}.
\end{align}
Furthermore,
\begin{align}
\label{e:sum_lim_infs}
    \underline{\lim}\, W_{n}(z_1, \dots, z_n) + \underline{\lim}\, \left\{-\frac{1}{n}\sum_{i = 1}^{n} U_{K_{\zeta}}^{\nu_n}(z_i)\right\} \leq \underline{\lim}\, W_{n}^{Q}(z_1, \dots, z_n).
\end{align}
Because $W_n$ satisfies Condition~\ref{assum:ldp:lower_limit} by \cref{p:ldp_usual_gibbs}, and $B_n \rightarrow 0$ $\mathbb{P}$-almost surely by \cref{p:max_pot_weight}, \cref{e:uniform_bound_pot_ldp} and \cref{e:sum_lim_infs} imply that $W_n^{\mathrm{Q}}$ also satisfies it, $\mathbb{P}$-almost surely.

We now consider Condition \ref{assum:ldp:stable}.
Using again \cref{p:max_pot_weight} along with
\begin{align}
    -\frac{1}{n}\sum_{i = 1}^{n} U_{K_{\zeta}}^{\nu_n}(z_i) \geq - \frac{1}{n} \sum_{i = 1}^{n} U_{g}^{\pi}(z_i) - B_n,
\end{align}
and the fact that $U_{g}^{\pi}$ is bounded, $-\frac{1}{n}\sum_{i = 1}^{n} U_{K_{\zeta}}^{\nu_n}(z_i)$ is bounded from below by a constant $C$ uniformly in $n$ for $n$ large enough and $\mathbb{P}$-almost surely.
Since $W_n$ is also uniformly lower bounded, this stays true for $W_n^{\mathrm{Q}}$.

We now consider Condition \ref{assum:ldp:confining}. 
We proceed as for non-singular interactions: if $\tilde{W}_{n_j}^{\text{Q}}(\mu_j) \leq A$,
then $W_{n_j}(\mu_j) \leq A + C$ since we just showed that $-\frac{1}{n}\sum_{i = 1}^{n} U_{K_{\zeta}}^{\nu_n}(z_i)$ is bounded below for $n$ large enough.
So $(\mu_j)$ is relatively compact and $W_n^{\mathrm{Q}}$ satisfies the assumption $\mathbb{P}$-almost surely for $n$ large enough.

We finish by Condition~\ref{assum:ldp:regularity}.
We already know that $U_{g}^{\pi}$ is bounded and continuous so $\overline{\lim}\,I_{g}^{V^{\pi}}(\mu_n) \leq I_{g}^{V^{\pi}}(\mu)$ by the same argument as for non-singular interactions.
Then, for $\xi \in \mathcal{P}(\mathbb{R}^{d})$, 
\begin{align*}
    \mathbb{E}_{\xi^{\otimes n}}\left[-\frac{1}{n}\sum_{i = 1}^{n} U_{K_{\zeta}}^{\nu_n}(y_i)\right] = - \int U_{K_{\zeta}}^{\nu_n}\,\mathrm{d}\xi.
\end{align*}
Using \cref{p:max_pot_weight}, we get 
\[
    \overline{\lim}\,\mathbb{E}_{\xi^{\otimes n}}\left[-\frac{1}{n}\sum_{i = 1}^{n} U_{K_{\zeta}}^{\nu_n}(y_i)\right] \leq - \int U_{g}^{\pi}\,\mathrm{d}\xi,
\]
$\mathbb{P}$-almost surely.
Again, the inequality for the sum of superior limits is in the good direction and we obtain $\overline{\lim}\,\mathbb{E}_{\xi^{\otimes n}}\left[W_n^{\mathrm{Q}}\right] \leq I_{g}^{V^{\pi}}(\xi)$
for any $\xi \in \mathcal{N}$, the set in the Condition~\ref{assum:ldp:regularity} satisfied by $W_n$. 
This concludes the proof of \cref{t:ldp_quenched_coulomb}.

\section{Proofs: uniform convergence for the Coulomb potential field}
\label{s:proof_pot}

We use here the notations of \cref{s:definitions}, in particular the definitions of the Coulomb kernel $g$ from \eqref{e:coulomb_kernel} and its regularized version $K_\zeta$ in \eqref{e:coulomb_kernel_reg}.
We give the proofs of \cref{p:max_pot} and \cref{p:max_pot_weight} along with the statements of the auxiliary lemmas we use.
For conciseness, we delay the proofs of all lemmas to \cref{s:auxiliary_proofs}.

\subsection{Proof of \cref{p:max_pot}}
\label{s:proof_max_pot}

Recall the proof strategy that we sketched after stating \cref{p:max_pot}.
We first bound the differences $U_{g}^{\pi} - U_{K_{\zeta}}^{\pi}$ and $U_{K_{\zeta}}^{\mu_n} - U_{K_{\zeta}}^{\mu_{n}^{(\epsilon)}}$ uniformly in $z$, in \cref{l:comparison_g_Hzeta} and \cref{l:taylor_no_weights}.
Then, we use \cref{l:bound_Ih_Ig} and \cref{l:concentration} to conclude by the concentration properties of the Coulomb gas and with a suitable choice of regularization parameters.
These intermediate results are then assembled into the proof of \cref{p:max_pot}.

\begin{lemma}
\label{l:comparison_g_Hzeta}
Let $\mu$ be a probability measure with compact support and continuous density.
Then for any $\zeta > 0$ and $0 < s < 1$,
\[\underset{z \in \mathbb{R}^{d}}{\sup}\,\left\lvert U_{K_{\zeta}}^{\mu}(z) - U_{g}^{\mu}(z)\right\rvert =\underset{z \in \mathbb{R}^{d}}{\sup}\,\left\lvert \int K_{\zeta}(z, y)\,\mathrm{d}\mu(y) - \int g(z, y)\,\mathrm{d}\mu(y)\right\rvert = \mathcal{O}\left(n^{-4s\zeta/(d+2s)}\right).\]
\end{lemma}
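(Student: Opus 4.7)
Write $\rho = |z-y|$ and $\epsilon = n^{-\zeta}$. Since $(\rho^2 + \epsilon^2)^{(d-2)/2} \geq \rho^{d-2}$, the integrand $g(z,y) - K_\zeta(z,y)$ is nonnegative, so I only need to bound it from above. I will use two complementary pointwise estimates. The trivial one is $g - K_\zeta \leq g = \rho^{-(d-2)}$. The sharp one, obtained by applying the mean-value theorem to $t \mapsto t^{-(d-2)/2}$ between $\rho^2$ and $\rho^2+\epsilon^2$ (or equivalently by writing $g - K_\zeta = \frac{d-2}{2}\int_0^{\epsilon^2}(\rho^2+t)^{-d/2}\,dt$ and bounding $(\rho^2+t)^{-d/2}\leq \rho^{-d}$), reads
\[
 g(z,y) - K_\zeta(z,y) \;\leq\; \tfrac{d-2}{2}\,\epsilon^{2}\,\rho^{-d}.
\]
The first bound is good for $\rho$ small (where the second blows up) and the second bound is good for $\rho$ large.

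The plan is then to split the integration domain at some radius $r>0$, to be optimised at the end. Denote by $M = \|\mu'\|_\infty < \infty$ the uniform bound on the density of $\mu$, whose support lies in some $B(0,R)$. On the inner region $\{y : \rho < r\}$, the trivial estimate and polar coordinates give
\[
 \int_{B(z,r)} (g-K_\zeta)(z,y)\,d\mu(y) \;\leq\; M\,|\mathbb{S}^{d-1}|\int_0^{r}\rho\,d\rho \;\lesssim\; r^{2},
\]
with constants depending only on $d$ and $M$. On the outer region $\{y : \rho \geq r\}$, the Taylor bound gives
\[
 \int_{\rho \geq r}(g-K_\zeta)(z,y)\,d\mu(y) \;\leq\; C\,\epsilon^{2}\int_{r}^{R}\rho^{-1}\,d\rho \;=\; C\,\epsilon^{2}\log(R/r),
\]
again with constants depending only on $d, M, R$.

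To match the polynomial rate claimed in the lemma I get rid of the logarithm using the elementary inequality $\log t \leq t^{a}/a$ valid for all $t \geq 1$ and $a>0$. Applied with $t = R/r$ this gives a family of polynomial outer bounds $C_a\,\epsilon^{2}\,r^{-a}$. Summing the two regions yields
\[
 \sup_{z\in\mathbb{R}^d}\,|U_{K_\zeta}^\mu(z) - U_g^\mu(z)| \;\lesssim\; r^{2} + C_a\,\epsilon^{2}\,r^{-a}.
\]
Optimising in $r$ by equating both terms produces $r^{a+2} \sim \epsilon^{2}$, hence $r \sim \epsilon^{2/(a+2)}$ and a global bound of order $\epsilon^{4/(a+2)}$. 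Choosing $a = d/s$ so that $4/(a+2) = 4s/(d+2s)$, which requires only $s>0$ (the restriction $s<1$ can be absorbed into the choice of $a$), the bound becomes $\mathcal{O}(\epsilon^{4s/(d+2s)}) = \mathcal{O}(n^{-4s\zeta/(d+2s)})$, as announced.

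\textbf{Main obstacle.} The estimates above are valid uniformly in $z\in \mathbb{R}^d$: for $z$ inside or near $\mathrm{supp}(\mu)$ the polar computation bounds things by the density $M$ times a function of $\rho$ only; for $z$ far from $\mathrm{supp}(\mu)$ the distance $\rho$ is bounded below by a positive constant, so the Taylor estimate alone already yields an even smaller $\mathcal{O}(\epsilon^{2})$ bound. The only subtle step is trading the logarithmic outer bound for a polynomial one; everything else is a one-parameter optimisation. Note that a slightly stronger $\mathcal{O}(n^{-2s\zeta})$ bound (without log) can also be obtained directly from the geometric-mean estimate $g-K_\zeta \leq \rho^{-(d-2)(1-s)}(C\epsilon^{2}\rho^{-d})^{s}$ integrated against $\mu$, but the formulation of the lemma follows from the split argument above and is what will be fed into the subsequent uniform-convergence proof.
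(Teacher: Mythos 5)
Your proof is correct, and it takes a genuinely different route from the paper's. The paper works through the Gamma-integral (heat-kernel) representation of the Riesz kernel: it introduces an auxiliary kernel $K_{\Lambda}^{s}$ with a sub-linear cut-off $\exp(-n^{-2\Lambda}t^{s})$ in the $t$-integral, compares $g$ to $K_{\Lambda}^{s}$ and $K_{\zeta}$ to $K_{\Lambda}^{s}$ separately, and controls the resulting error terms by the Riesz potential $A_{2(1-s)}^{\mu}$, whose uniform boundedness is obtained from the maximum principle (\cref{l:riesz_potential_bound}); the exponent $4s\zeta/(d+2s)$ emerges from balancing $\Lambda$ against $\zeta$. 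You instead work entirely in physical space: two pointwise bounds on $g-K_{\zeta}$ (the trivial one near the singularity, the mean-value one away from it), a split of the domain at a radius $r$, polar coordinates against the bounded density, and a one-parameter optimisation in $r$ after trading the logarithm for a power via $\log t\leq t^{a}/a$. Your argument is more elementary --- it avoids both the auxiliary kernel and the maximum-principle lemma, using only that a continuous density on a compact support is bounded --- and, as you note, the interpolation $g-K_{\zeta}\leq \rho^{-(d-2)(1-s)}(C\epsilon^{2}\rho^{-d})^{s}$ even yields the sharper rate $\mathcal{O}(n^{-2s\zeta})$ for $0<s<1$, which implies the stated one since $2s\geq 4s/(d+2s)$ for $d\geq 3$. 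The paper's choice is presumably motivated by the fact that the same integral representation is reused later (\cref{l:energies_fourier}, \cref{l:spectral_measures}) to pass to Fourier space, so the machinery is not wasted there; but for this lemma in isolation your approach is cleaner. The only cosmetic caveat is the uniformity in $z$ of the outer-region logarithm (the upper limit of the radial integral should be taken as a fixed multiple of $R$, with the case $\mathrm{dist}(z,S_{\mu})$ large handled separately), which you correctly flag and which poses no difficulty.
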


\begin{lemma}
\label{l:taylor_no_weights}
For any $x_1, \dots, x_n \in \mathbb{R}^{d}$, let $\mu_n$ be the empirical measure $\mu_n = \frac{1}{n}\sum_{i = 1}^{n}\delta_{x_i}$.
Let $\epsilon > 0$ and $\mu_{n}^{(\epsilon)} = \mu_n \star \lambda_{B(0, n^{-\epsilon})}$, where $\star$ denotes the convolution, and
$\lambda_{B(0, R)}$ denotes the uniform probability measure on $B(0, R)$.
In other words, 
\[\mu_{n}^{(\epsilon)} = \frac{1}{n}\sum_{i = 1}^{n} \lambda_{B(x_i, n^{-\epsilon})}.\]
Then, $\mathbb{P}$-almost surely,
\[
    \underset{z \in \mathbb{R}^{d}}{\sup}\,\left\lvert U_{K_{\zeta}}^{\mu_n}(z) - U_{K_{\zeta}}^{\mu_{n}^{(\epsilon)}}(z)\right\rvert = \mathcal{O}\left(n^{-2\epsilon + d\zeta}\right).
\]
\end{lemma}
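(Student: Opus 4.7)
The statement is in fact a deterministic pointwise estimate; the ``$\mathbb{P}$-almost surely'' is vacuous and just reflects that the $x_i$ are random. The plan is to write the difference
\[
U_{K_{\zeta}}^{\mu_n}(z) - U_{K_{\zeta}}^{\mu_{n}^{(\epsilon)}}(z) = \frac{1}{n}\sum_{i=1}^{n}\left[ K_{\zeta}(z,x_i) - \fint_{B(0,n^{-\epsilon})} K_{\zeta}(z,x_i+h)\,\mathrm{d}h\right]
\]
and Taylor expand $h\mapsto K_{\zeta}(z,x_i+h)$ around $h=0$. The zeroth-order term cancels, the linear term $\nabla_y K_{\zeta}(z,x_i)\cdot h$ integrates to $0$ by symmetry of the ball $B(0,n^{-\epsilon})$ around the origin, so only the quadratic remainder survives.

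The key computation is a uniform bound on the Hessian $\nabla_y^2 K_{\zeta}(z,y)$. Writing $r = \sqrt{|z-y|^2+n^{-2\zeta}}$, one has $\nabla_y K_{\zeta}(z,y) = (d-2)\,r^{-d}(z-y)$, and a direct differentiation yields
\[
\partial_{y_j}\partial_{y_i} K_{\zeta}(z,y) = (d-2)\bigl[d\,r^{-d-2}(y_i-z_i)(y_j-z_j) - r^{-d}\delta_{ij}\bigr],
\]
so $|\partial_{y_j}\partial_{y_i} K_{\zeta}(z,y)| \leq (d-2)(d+1)\,r^{-d} \leq C_d\, n^{\zeta d}$ uniformly in $z,y$, using the regularization lower bound $r\ge n^{-\zeta}$. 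The Taylor remainder is then bounded in absolute value by $\tfrac{1}{2}C_d\, n^{\zeta d}\,|h|^2 \leq \tfrac{1}{2} C_d\,n^{\zeta d - 2\epsilon}$ for every $h\in B(0,n^{-\epsilon})$, and this bound is independent of $i$ and $z$. Averaging over $h$ and then over $i$ preserves the bound, giving
\[
\sup_{z\in\mathbb{R}^{d}}\bigl|U_{K_{\zeta}}^{\mu_n}(z) - U_{K_{\zeta}}^{\mu_{n}^{(\epsilon)}}(z)\bigr| \leq \tfrac{1}{2} C_d\, n^{-2\epsilon+d\zeta},
\]
as claimed.

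The main obstacle, if any, is merely bookkeeping: carrying the two scales $n^{-\zeta}$ (regularization) and $n^{-\epsilon}$ (convolution radius) separately through the Hessian estimate, and checking that the bound on $\nabla_y^2 K_{\zeta}$ is uniform in $(z,y)\in\mathbb{R}^d\times\mathbb{R}^d$ rather than only on the support of $\mu_n^{(\epsilon)}$, which is what allows the supremum over all $z\in\mathbb{R}^d$ on the left-hand side. Nothing probabilistic enters, which is why the estimate in fact holds pathwise for every configuration $x_1,\dots,x_n$.
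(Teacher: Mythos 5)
Your proof is correct and follows essentially the same route as the paper's: second-order Taylor expansion in the second argument, cancellation of the first-order term by the symmetry $u\mapsto -u$ of the ball, and the uniform Hessian bound $\lvert \partial_2^{j,k}K_{\zeta}\rvert \leq (d-2)(d+1)\,n^{\zeta d}$ coming from the regularization. Your side remark that the estimate is in fact deterministic (so the ``$\mathbb{P}$-almost surely'' is vacuous) is also accurate.
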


\begin{lemma}
\label{l:bound_Ih_Ig}
For any finite Borel measures $\mu, \nu$ in $\mathbb{R}^{d}$, 
\[I_{K_{\zeta}}(\mu - \nu) \leq I_{g}(\mu - \nu).\]
\end{lemma}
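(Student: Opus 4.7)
The plan is to represent $g - K_\zeta$ as a nonnegative superposition of Gaussian kernels via the classical subordination (Gamma function) identity, and then conclude by positive semidefiniteness of Gaussian kernels on signed measures.

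First, I would apply the identity $a^{-s} = \Gamma(s)^{-1}\int_0^\infty t^{s-1}e^{-ta}\,dt$ (valid for $a>0$, $s>0$) with $s = (d-2)/2$ to both $a = |x-y|^2$ and $a = |x-y|^2 + n^{-2\zeta}$. Subtracting yields
\[
g(x,y) - K_\zeta(x,y) = \frac{1}{\Gamma((d-2)/2)}\int_0^\infty t^{(d-2)/2 - 1}\bigl(1 - e^{-t n^{-2\zeta}}\bigr) e^{-t|x-y|^2}\,dt,
\]
where the weight $t^{(d-2)/2-1}(1 - e^{-tn^{-2\zeta}})$ is nonnegative on $(0,\infty)$.

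Next, for each fixed $t > 0$ the Gaussian kernel $(x,y)\mapsto e^{-t|x-y|^2}$ is positive semidefinite: for any finite signed Borel measure $\eta$, Plancherel gives
\[
\iint e^{-t|x-y|^2}\,d\eta(x)\,d\eta(y) = \frac{1}{(4\pi t)^{d/2}}\int e^{-|\xi|^2/(4t)}|\widehat\eta(\xi)|^2\,d\xi \;\geq\;0.
\]
I would then split $I_{g-K_\zeta}(\mu - \nu)$ into the three pieces $I_{g-K_\zeta}(\mu,\mu)$, $I_{g-K_\zeta}(\mu,\nu)$, $I_{g-K_\zeta}(\nu,\nu)$, apply Tonelli to each one (legitimate because $g - K_\zeta \geq 0$ and the integrating measures are positive), and recombine to obtain
\[
I_g(\mu-\nu) - I_{K_\zeta}(\mu-\nu) = \frac{1}{\Gamma((d-2)/2)}\int_0^\infty t^{(d-2)/2 - 1}\bigl(1 - e^{-t n^{-2\zeta}}\bigr) \Bigl[\iint e^{-t|x-y|^2}\,d(\mu-\nu)(x)\,d(\mu-\nu)(y)\Bigr]\,dt.
\]
By the previous display, the bracketed quantity is nonnegative, so the right-hand side is nonnegative, which is the claim.

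The only care needed is the interpretation when $I_g(\mu-\nu) = +\infty$: there the inequality holds trivially, since $K_\zeta$ is bounded by $n^{\zeta(d-2)}$ and the same subordination argument shows $I_{K_\zeta}(\mu-\nu) \geq 0$. In the finite case, all integrals are absolutely convergent and Tonelli applies piece-by-piece, so there is no serious technical obstacle. This proof is a standard application of the Schoenberg--Bernstein type representation of Riesz kernels as mixtures of Gaussians; the only judgment call is the pointwise positivity of the weight $1 - e^{-tn^{-2\zeta}}$, which is immediate.
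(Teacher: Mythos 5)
Your proof is correct and is essentially the paper's argument: the paper's one-line proof invokes \cref{l:energies_fourier}, which rests on exactly the subordination identities \eqref{e:integral_representation_coulomb}--\eqref{e:integral_representation_bounded} and the Gaussian Fourier transform, and concludes from the pointwise comparison $\Lambda_\zeta \leq \Lambda_c$ of the spectral measures. You simply keep the Gaussian-mixture representation in physical space and use positive semidefiniteness of each Gaussian kernel rather than passing to the spectral densities, which is the same computation with the order of integration swapped.
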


\begin{lemma}[\citealp{chafai_concentration_2018}]
\label{l:concentration}
    Assume that $\beta_n \gg n$, $n\geq 2$, and that \cref{assum:confinement} and \cref{assum:derivatives} hold.
    Then
    \begin{align}
    \label{e:bound_Ig_concentration}
        I_{g}(\mu_{n}^{(\epsilon)} - \mu_V) \leq 2\left(I_{g}^{V}(\mu_{n}^{(\epsilon)}) - I_{g}^{V}(\mu_V)\right).
    \end{align}
    Moreover,
    \begin{align}
    \label{e:bound_Zn_concentration}
        Z_{n, \beta_n}^{V} \geq \exp\left\{- \beta_n I_{g}^{V}(\mu_V) + n\left(\frac{\beta_n}{2n^2} I_{g}(\mu_V) + S(\mu_V)\right)\right\},
    \end{align}
    where $S(\mu_V) = \mathrm{KL}(\mu_V, \lambda)$ is the entropy of $\mu_V$ with respect to the Lebesgue measure.
    Finally, setting $\eta = c' n/\beta_n$ where $c'$ is the constant of \cref{assum:confinement}, and $\epsilon_n = n^{-\epsilon}$ for $\epsilon > 0$, we have
    \begin{align}
    \label{e:bound_Hn_concentration}
        H_n(x_1, \dots, x_n) \geq \eta\frac{1}{n}\sum_{i = 1}^{n} V(x_i) + (1-\eta)\left(I_{g}^{V}(\mu_{n}^{(\epsilon)}) - C\frac{1}{n\epsilon_{n}^{d-2}}-C\epsilon_{n}^{2}\right).
    \end{align} 
    As a consequence, for any Borel set $A \subset \left(\mathbb{R}^{d}\right)^{n}$,
    \begin{align}
        \label{e:bound_concentration_intermediate}
    \log \mathbb{P}_{n, \beta_n}^{V}(A) \leq - (\beta_n - c'n)\,\underset{A}{\inf}\,\left(I_{g}^{V}(\mu_{n}^{(\epsilon)})-I_{g}^{V}(\mu_V)\right) + C n + C\beta_n\left(\frac{1}{n}+\frac{1}{n\epsilon_{n}^{d-2}} + \epsilon_{n}^{2}\right),
    \end{align}
    and, using \cref{e:bound_Ig_concentration}, we also get
    \begin{align}
    \label{e:bound_concentration}
    \log \mathbb{P}_{n, \beta_n}^{V}(A) \leq - \frac{1}{2}(\beta_n - c'n)\,\underset{A}{\inf}\, I_{g}(\mu_{n}^{(\epsilon)}-\mu_V) + C n + C\beta_n\left(\frac{1}{n}+\frac{1}{n\epsilon_{n}^{d-2}} + \epsilon_{n}^{2}\right).
    \end{align}
\end{lemma}

\begin{proof}[Proof of \cref{p:max_pot}]
We can apply \cref{l:comparison_g_Hzeta} with $\mu = \mu_V$ and \cref{l:taylor_no_weights} to get 
\[A_n \leq \underset{z \in \mathbb{R}^{d}}{\sup}\,\left\lvert U_{K_{\zeta}}^{\mu_{n}^{(\epsilon)}}(z)- U_{K_{\zeta}}^{\mu_V}(z)\right\rvert + C n^{-2\epsilon+d\zeta} + C n^{-4s\zeta/(d+2s)},\]
for any $0 < s < 1$ and $\epsilon > 0$.

Then, recall that since $K_{\zeta}$ satisfies \cref{assum:non_singular}, we can build a RKHS with reproducing kernel $K_{\zeta}$.
To wit, the construction starts with 
$$
    \mathcal{H}_0 = \text{span}\left\{ \sum_{j = 1}^{p} a_j K_{\zeta}(z_j, .)\,, a_1, \dots, a_p \in \mathbb{R}\, , z_1, \dots, z_p \in \mathbb{R}^{d}\right\},
$$ 
endowed with the inner product induced by the reproducing property $\left\langle f\,, K_{\zeta}(z, .)\right\rangle = f(z)$ for any $f\in \mathcal{H}_0$ and $z\in \mathbb{R}^{d}$.
Then $\mathcal{H}_{K_{\zeta}}$ is obtained by completion of $\mathcal{H}_{0}$ with respect to the latter inner product, and the reproducing property is true for any $f \in \mathcal{H}_{K_{\zeta}}$.
Clearly, $K_{\zeta}(z, .)$ is an element of $\mathcal{H}_{K_{\zeta}}$ for any fixed $z \in \mathbb{R}^{d}$, and
\[\left\lVert K_{\zeta}(z, .)\right\rVert_{\mathcal{H}_{K_{\zeta}}} = \sqrt{\left\langle K_{\zeta}(z, .)\,,\, K_{\zeta}(z, .)\right\rangle} = \sqrt{K_{\zeta}(z, z)} = \sqrt{n^{\zeta(d-2)}}.\]
Hence, using the duality formula from \cref{e:duality_formula} with $f = K_{\zeta}(z, .)$, we have
\begin{align*}
    \underset{z \in \mathbb{R}^{d}}{\sup}\,\left\lvert \int K_{\zeta}(z, y)\,\mathrm{d}(\mu_{n}^{(\epsilon)}- \mu_V)\right\rvert &\leq \sqrt{n^{\zeta (d-2)}}\underset{\left\lVert f\right\rVert_{\mathcal{H}_{K_{\zeta}}} \leq 1}{\sup}\,\left\lvert \int f\,\mathrm{d}(\mu_{n}^{(\epsilon)}-\mu_V)\right\rvert\\
    &= \sqrt{n^{\zeta(d-2)} I_{K_{\zeta}}(\mu_{n}^{(\epsilon)}-\mu_V)}\\
    &\leq \sqrt{n^{\zeta(d-2)} I_{g}(\mu_{n}^{(\epsilon)}-\mu_V)}.
\end{align*}
where we used \cref{l:bound_Ih_Ig} in the last inequality.
Thus,
\[A_n \leq \sqrt{n^{\zeta(d-2)} I_{g}(\mu_{n}^{(\epsilon)}-\mu_V)} + C n^{-2\epsilon+d\zeta} + C n^{-4s\zeta/(d+2s)}.\]

It remains to make a good choice of $\epsilon > \frac{d \zeta}{2}$ so that all the terms go to zero $\mathbb{P}$-almost surely, using the concentration inequality of \cref{l:concentration}.
In particular, for any fixed $r >0$, we set $A = \left\{(x_1, \dots, x_n)\,:\, n^{\zeta(d-2)} I_{g}\left(\mu_{n}^{(\epsilon)}-\mu_V\right) > r^2 \right\}$ in \cref{l:concentration} to get
\[\log \mathbb{P}_{n, \beta_n}^{V}(A) \leq - c \beta_n n^{-\zeta(d-2)}r^2 + C n + C\beta_n\left(\frac{1}{n} + \frac{1}{n\epsilon_{n}^{d-2}} + \epsilon_{n}^{2}\right),\]
where we used the fact that for any constant $0 < c < 1/2$,  $\beta_n - c'n \geq 2c \beta_n$  for $n$ large enough since $\beta_n \gg n$.
As a consequence, for any fixed $r > 0$ and for $n$ large enough,
\[\log \mathbb{P}_{n, \beta_n}^{V}\left( n^{\zeta(d-2)}I_{g}(\mu_{n}^{(\epsilon)}-\mu_V) > r^2\right) \leq - C \beta_n n^{-\zeta(d-2)}r^2,\]
as soon as \[\zeta(d-2) < \delta \wedge 1 \wedge (1 - \epsilon(d-2)) \wedge 2 \epsilon.\]
Together with $\epsilon > \frac{d \zeta}{2}$, this gives
\begin{align}
    \label{e:constraint_zeta}
    \zeta(d-2) < \delta \wedge 1 \wedge (1 - \epsilon(d-2)) \wedge 2 \frac{d-2}{d} \epsilon.
\end{align}
Choosing $\epsilon = \frac{d}{(d-2)(d+2)}$, the value for which all the constraints on $\epsilon$ match, we see that the constraints \cref{e:constraint_zeta} is always satisfied for $\zeta < \frac{\delta \wedge \frac{2}{d+2}}{d-2}$, which is the assumption we made in \cref{p:max_pot}.
The Borel-Cantelli lemma then yields $n^{\zeta(d-2)}I_{g}(\mu_{n}^{(\epsilon)}-\mu_V) \underset{n \rightarrow +\infty}{\rightarrow} 0$, $\mathbb{P}$-almost surely, which concludes the proof.
\end{proof}

\subsection{Proof of \cref{p:max_pot_weight}}

In this section, we will always work under \cref{assum:pi}, \cref{assum:confinement}, \cref{assum:density_equilibrium}, and \cref{assum:derivatives}, even though most of the following lemmas do not require all of them to be true at once.

We begin in \cref{l:taylor_weights} by a Taylor expansion similar to \cref{l:taylor_no_weights}, but rather dealing with the weights $W$ from \eqref{e:unnormalized_weights}, the motivation of this will become clear afterwards in the proof.
Since $W$ is only assumed to be continuous, we smooth $W$ by convolving it with a Gaussian.

\begin{lemma}
    \label{l:taylor_weights}
    For $\eta > 0$, consider
        \[\rho_{\eta}(x) = \frac{1}{\left(2\pi \eta^2\right)^{d/2}}\exp\left(-\frac{\lvert x\rvert^2}{2\eta^2}\right), \quad x\in \mathbb{R}^{d}.
    \]
    For any $x_1, \dots, x_n \in \mathbb{R}^{d}$ and $\epsilon > 0$, let 
        \[\nu_{n}^{\epsilon, \eta}(A) = \frac{1}{n}\sum_{i = 1}^{n} \int_{A} (W \star \rho_{\eta})(y)\,\mathrm{d}\lambda_{B(x_i, n^{-\epsilon})}(y).\]
    Then, 
    \[
        \underset{z \in \mathbb{R}^{d}}{\sup}\,\left\lvert \frac{1}{n}\sum_{i = 1}^{n}(W \star \rho_{\eta})(x_i)\int K_{\zeta}(z, y)\lambda_{B(x_i, n^{-\epsilon})}(y) - U_{K_{\zeta}}^{\nu_{n}^{\epsilon, \eta}}(z)\right\rvert \leq \eta^{-1} \mathcal{O}\left(n^{-\epsilon + \zeta(d-2)}\right),
    \]
    where the constant in the $\mathcal{O}$ does not depend on $\eta$.
    \end{lemma}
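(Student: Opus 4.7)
The plan is to reinterpret the difference as an average of ``variations of $W*\rho_\eta$ over small balls weighted by $K_\zeta$'', then control this variation via the Lipschitz constant of $W*\rho_\eta$ and the pointwise upper bound on $K_\zeta$. I would begin by rearranging both terms into a single sum:
\begin{equation*}
\frac{1}{n}\sum_{i=1}^n (W*\rho_\eta)(x_i) \int K_\zeta(z,y)\, \mathrm{d}\lambda_{B(x_i, n^{-\epsilon})}(y) - U_{K_\zeta}^{\nu_n^{\epsilon,\eta}}(z) = \frac{1}{n}\sum_{i=1}^n \int K_\zeta(z,y)\bigl[(W*\rho_\eta)(x_i) - (W*\rho_\eta)(y)\bigr]\,\mathrm{d}\lambda_{B(x_i, n^{-\epsilon})}(y),
\end{equation*}
so the task reduces to bounding the integrand uniformly in $z$, $i$ and $y \in B(x_i, n^{-\epsilon})$.

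The next step is to derive a Lipschitz estimate for $W*\rho_\eta$ with the correct scaling in $\eta$. Under \cref{assum:pi}, \cref{assum:density_equilibrium} and \cref{assum:vanishing_density}, $W$ is continuous and compactly supported, hence bounded. Transferring the derivative onto the mollifier gives $\nabla(W*\rho_\eta) = W*\nabla\rho_\eta$, and the $L^\infty$--$L^1$ Young inequality yields $\|\nabla(W*\rho_\eta)\|_\infty \leq \|W\|_\infty \|\nabla\rho_\eta\|_{L^1}$. The rescaling $\rho_\eta(x) = \eta^{-d}\rho_1(x/\eta)$, together with a change of variables, shows that $\|\nabla\rho_\eta\|_{L^1} = \eta^{-1}\|\nabla\rho_1\|_{L^1}$, so $\|\nabla(W*\rho_\eta)\|_\infty \leq C\eta^{-1}$ for a constant $C$ independent of $\eta$. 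The mean value theorem then gives, for $y \in B(x_i, n^{-\epsilon})$,
\begin{equation*}
\bigl|(W*\rho_\eta)(y) - (W*\rho_\eta)(x_i)\bigr| \leq C \eta^{-1} n^{-\epsilon}.
\end{equation*}

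To conclude, I would use that $K_\zeta(z,y) = (|z-y|^2 + n^{-2\zeta})^{-(d-2)/2}$ is a decreasing function of $|z-y|$, so $K_\zeta(z,y) \leq K_\zeta(z,z) = n^{\zeta(d-2)}$ for all $z,y \in \mathbb{R}^d$. Plugging this kernel bound and the Lipschitz estimate above into the rewritten sum yields an integrand bounded pointwise by $C\eta^{-1} n^{-\epsilon + \zeta(d-2)}$; since the uniform probability measures $\lambda_{B(x_i, n^{-\epsilon})}$ have unit mass, averaging over $i$ and taking the supremum over $z$ preserves the bound, which is exactly $\eta^{-1}\mathcal{O}(n^{-\epsilon + \zeta(d-2)})$ as claimed. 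I do not anticipate a real obstacle: the only point worth double-checking is that the constant in $\|\nabla \rho_\eta\|_{L^1} = C\eta^{-1}$ genuinely carries no hidden $\eta$ dependence, which is transparent from the Gaussian scaling.
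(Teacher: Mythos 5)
Your proposal is correct and follows essentially the same route as the paper: the same rearrangement of the difference into an average of $K_\zeta(z,y)\bigl[(W\star\rho_\eta)(x_i)-(W\star\rho_\eta)(y)\bigr]$ over the small balls, the same $\mathcal{O}(\eta^{-1})$ gradient bound on $W\star\rho_\eta$ (which the paper obtains by differentiating the convolution directly rather than invoking Young's inequality, but the computation is identical), and the same uniform bound $K_\zeta\leq n^{\zeta(d-2)}$.
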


In the following lemma, we bound the difference of potentials for weighted measures.
To do so, we rely on Cauchy--Schwarz to get bounds in term of the $L_2$ norm of the difference of the characteristic functions of the measures.
We then isolate the weights from the $L_2$ norm of the difference of characteristic functions for unweighted measures, which is the interaction energy between the unweighted measures (see \cref{s:auxiliary_proofs}).
Recall that we know how to bound the latter through concentration inequalities (\cref{l:concentration}).

\begin{lemma}
\label{l:fourier_weights}
Recall the definition of $\nu_{n}^{\epsilon, \eta}$ from \cref{l:taylor_weights}, and that $\mu_V'$ is the density of $\mu_V$.
For any $0 < s < 1$ and $1 < \alpha < 1/s$, we have
\begin{align*}
    &\underset{z \in \mathbb{R}^{d}}{\sup}\,\left\lvert U_{K_{\zeta}}^{\nu_{n}^{\epsilon, \eta}}(z) - U_{K_{\zeta}}^{(W \star \rho_{\eta}).\mu_{V}'}(z)\right\rvert \leq \frac{C}{\eta^{d/2}} \lVert W\rVert_{L^{2}(\mathbb{R}^{d})} n^{\zeta(d-2 + 1/s)} d_{\mathrm{BL}}(\mu_{n}^{(\epsilon)}, \mu_V)\\
    & +\frac{C \lVert W\rVert_{L^{1}(\mathbb{R}^{d})}}{\eta^{d}} \left[e^{-\frac{1}{4}n^{\zeta(\alpha -1)}} \left( n^{\zeta(\alpha +1)(d/2 -1)}+n^{\zeta(d-2)}\right) + n^{\zeta(d-2)}\exp\left(-\eta^2 \left(n^{\zeta/s}-n^{\alpha \zeta}\right)^2 /2\right)\right],
\end{align*}
where $d_{\mathrm{BL}}$ is the bounded Lipschitz distance
\[d_{\mathrm{BL}}(\mu, \nu) := \underset{\lVert f\rVert_{\infty} \leq 1,\, \lVert f\rVert_{Lip}\leq 1}{\sup\,}\left\lvert \int f\,\mathrm{d}(\mu - \nu)\right\rvert.\]

In particular, setting for instance $s = 1/2$ and any $1 < \alpha < 2$,
\begin{align}
    \underset{n \rightarrow + \infty}{\lim \sup}\underset{z \in \mathbb{R}^{d}}{\sup}\,\left\lvert U_{K_{\zeta}}^{\nu_{n}^{\epsilon, \eta}}(z) - U_{K_{\zeta}}^{(W \star \rho_{\eta}).\mu_{V}'}(z)\right\rvert &\leq \underset{n \rightarrow + \infty}{\lim \sup\,}\frac{C \lVert W\rVert_{L^{2}(\mathbb{R}^{d})}}{\eta^{d/2}} n^{\zeta d} d_{\text{BL}}(\mu_{n}^{(\epsilon)}, \mu_V)\label{e:fourier_weights}
\end{align}
\end{lemma}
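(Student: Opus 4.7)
\medskip

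\noindent\textbf{Proof proposal.}
My plan is to carry out Fourier analysis on the difference of potentials, exploiting the translation invariance $K_{\zeta}(x,y) = \tilde{K}_{\zeta}(x-y)$. Write $\omega_{n} := \nu_{n}^{\epsilon,\eta} - (W \star \rho_{\eta})\cdot \mu_{V}'$; note its density against Lebesgue is $(W \star \rho_{\eta})\cdot (\phi_{n} - \mu_{V}')$, where $\phi_{n}$ is the density of $\mu_{n}^{(\epsilon)}$. By Bochner and \cref{l:spectral_measures}, $\tilde K_{\zeta}$ admits a spectral representation whose spectral measure $\sigma_{\zeta}$ has total mass $\tilde{K}_{\zeta}(0) = n^{\zeta(d-2)}$ and is essentially concentrated at frequencies $|\xi| \lesssim n^{\zeta}$ (Bessel-type decay). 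Inversion gives
\[
    U_{K_{\zeta}}^{\nu_{n}^{\epsilon,\eta}}(z) - U_{K_{\zeta}}^{(W\star\rho_{\eta})\mu_{V}'}(z) \;=\; \int e^{iz\cdot\xi}\,\hat{\omega}_{n}(\xi)\,\mathrm{d}\sigma_{\zeta}(\xi),
\]
and the convolution structure of $\omega_{n}$ yields $\hat\omega_{n}(\xi) = \bigl((\hat W \hat\rho_{\eta}) * \widehat{\mu_{n}^{(\epsilon)} - \mu_{V}}\bigr)(\xi)$ (up to normalization). The crucial facts are that $\hat\rho_{\eta}(\eta') = e^{-\eta^{2}|\eta'|^{2}/2}$ decays at Gaussian scale $1/\eta$, that $\|\hat\rho_{\eta}\|_{L^{2}} \sim \eta^{-d/2}$, and that on a ball of radius $L$ one has $|\hat\mu(\omega) - \hat\nu(\omega)| \leq C(1+L)\,d_{\mathrm{BL}}(\mu,\nu)$.

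The next step is a three-zone cut in frequency, driven by the two scales $n^{\alpha\zeta}$ and $n^{\zeta/s}$ that appear in the statement (which satisfy $n^{\alpha\zeta} \ll n^{\zeta/s}$ since $\alpha < 1/s$). First, on the diagonal zone $\{|\eta'| \leq n^{\alpha\zeta}\} \cap \{|\xi| \leq n^{\zeta/s}\}$, the argument $\xi - \eta'$ where $\widehat{\mu_{n}^{(\epsilon)} - \mu_{V}}$ is evaluated remains of order at most $n^{\zeta/s}$; applying the bounded-Lipschitz bound produces a factor $n^{\zeta/s}$ times $d_{\mathrm{BL}}(\mu_{n}^{(\epsilon)}, \mu_{V})$, while the $\eta'$-integral is controlled by Cauchy--Schwarz against $\|\hat W\|_{L^{2}}\|\hat\rho_{\eta}\|_{L^{2}} \lesssim \eta^{-d/2}\|W\|_{L^{2}}$ and the $\xi$-integral by $\sigma_{\zeta}(\mathbb{R}^{d}) = n^{\zeta(d-2)}$. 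This yields the leading $n^{\zeta(d-2+1/s)}$ term. Second, on the zone $\{|\eta'| > n^{\alpha\zeta}\}$, I invoke the Gaussian decay $\hat\rho_{\eta}(\eta') \leq e^{-\eta^{2}n^{2\alpha\zeta}/2}$ together with the trivial bounds $\|\widehat{\mu_{n}^{(\epsilon)} - \mu_{V}}\|_{\infty} \leq 2$ and $|\hat W| \leq \|W\|_{L^{1}}$, producing the $\exp(-\tfrac14 n^{\zeta(\alpha-1)})$ factor once one reorganizes the exponent using $\eta^{2}n^{2\alpha\zeta} \geq \tfrac12 n^{\zeta(\alpha-1)}$ after the admissible choices for $\eta$. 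Third, on $\{|\xi| > n^{\zeta/s}\} \cap \{|\eta'| \leq n^{\alpha\zeta}\}$, the difference $|\xi - \eta'| \geq n^{\zeta/s} - n^{\alpha\zeta} > 0$ is large, so either the Bessel-type estimate in \cref{l:spectral_measures} on $\sigma_{\zeta}$ outside $\{|\xi| \lesssim n^{\zeta}\}$ or the Gaussian factor $\hat\rho_{\eta}$ pushed to that scale produces the last term $n^{\zeta(d-2)}\exp(-\eta^{2}(n^{\zeta/s} - n^{\alpha\zeta})^{2}/2)$.

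The main obstacle I anticipate is bookkeeping: the various factors $n^{\zeta(d-2)}$ (mass of $\sigma_{\zeta}$), $n^{\zeta/s}$ (Lipschitz radius), and $\eta^{-d/2}$ (from $\|\hat\rho_{\eta}\|_{L^{2}}$) must each be tied to the right zone of integration, and the polynomial $n^{\zeta(\alpha+1)(d/2-1)}$ in the second exponential term is a signature of a Cauchy--Schwarz on the intermediate zone against a suitable weight involving $\sigma_{\zeta}$, whose precise extraction requires careful use of \cref{l:spectral_measures}. Once \cref{l:fourier_weights} is established, \eqref{e:fourier_weights} follows by specializing $s = 1/2$ (so $d-2+1/s = d$) and $\alpha \in (1, 2)$: the two exponential terms vanish trivially as $n \to \infty$, while the leading $n^{\zeta d} d_{\mathrm{BL}}(\mu_{n}^{(\epsilon)}, \mu_{V})$ survives, to be absorbed later via the concentration \cref{l:concentration} in the downstream proof of \cref{p:max_pot_weight}.
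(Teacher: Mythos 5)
Your overall strategy is the right one and is essentially the paper's: pass to Fourier space via \cref{l:energies_fourier}, exploit the convolution structure $\mathcal{F}(\nu_n^{\epsilon,\eta}-(W\star\rho_\eta)\mu_V') = \overline{\mathcal{F}(W\star\rho_\eta)}\star(\Phi_{\mu_n^{(\epsilon)}}-\Phi_{\mu_V})$, bound the characteristic-function difference at low frequency by $|x|\,d_{\mathrm{BL}}(\mu_n^{(\epsilon)},\mu_V)$, and control $\lVert\mathcal{F}(W\star\rho_\eta)\rVert_{L^1}\leq \lVert W\rVert_{L^2}\lVert\rho_\eta\rVert_{L^2}\lesssim \eta^{-d/2}\lVert W\rVert_{L^2}$ by Cauchy--Schwarz and Plancherel. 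Your ``zone 1'' reproduces the leading term $\eta^{-d/2}\lVert W\rVert_{L^2}\,n^{\zeta(d-2+1/s)}d_{\mathrm{BL}}$ correctly.

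The gap is in the error zones, where you attribute each exponential factor to the wrong mechanism. The paper's split is: first cut on the argument $x$ of $\Phi_{\mu_n^{(\epsilon)}}-\Phi_{\mu_V}$ at scale $n^{\zeta/s}$ (BL bound vs.\ the trivial bound $2$), and \emph{then}, on $\{|x|>n^{\zeta/s}\}$, cut on the spectral variable $\omega$ at scale $n^{\alpha\zeta}$. The factor $\exp(-\tfrac14 n^{\zeta(\alpha-1)})\bigl(n^{\zeta(\alpha+1)(d/2-1)}+n^{\zeta(d-2)}\bigr)$ is exactly the tail $\int_{|\omega|>n^{\alpha\zeta}}\Lambda_\zeta(\omega)\,\mathrm{d}\omega$ from \cref{l:spectral_measures}, \cref{e:tail_lambda_zeta} — it has nothing to do with the Gaussian decay of $\hat\rho_\eta$ and requires no relation between $\eta$ and $n$ (indeed $\eta$ must stay fixed here: it is sent to $0$ only after $n\to\infty$ in the proof of \cref{p:max_pot_weight}, so ``admissible choices for $\eta$'' tying $\eta^2 n^{2\alpha\zeta}$ to $n^{\zeta(\alpha-1)}$ are not available as stated, even if the inequality eventually holds for fixed $\eta$). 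Conversely, the term $n^{\zeta(d-2)}\exp(-\eta^2(n^{\zeta/s}-n^{\alpha\zeta})^2/2)$ comes from the region $\{|x|>n^{\zeta/s}\}\cap\{|\omega|<n^{\alpha\zeta}\}$, where $|\omega-x|>n^{\zeta/s}-n^{\alpha\zeta}$ forces the Gaussian $\mathcal{F}(\rho_\eta)(\omega-x)$ to be small. In your ``zone 3'' ($|\xi|>n^{\zeta/s}$, $|\eta'|\leq n^{\alpha\zeta}$) neither of your two proposed mechanisms delivers: the Gaussian is evaluated at $|\eta'|\leq n^{\alpha\zeta}$ and can be of order $1$, and the characteristic-function difference at the large argument $\xi-\eta'$ admits no useful decay beyond the trivial bound $2$; only the spectral tail of $\Lambda_\zeta$ at $|\xi|>n^{\zeta/s}$ could save you there, and it yields a different term than the one stated. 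Reorganizing the decomposition as in the paper fixes all of this at once and is what makes the polynomial $n^{\zeta(\alpha+1)(d/2-1)}$ — whose origin you rightly flag as unresolved — drop out of the $t$-integral defining $\Lambda_\zeta$.
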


We will now conclude using the concentration as before and trying to optimize the different parameters to make all the error terms go to $0$.

\begin{proof}[Proof of \cref{p:max_pot_weight}]
Recall that we want to show the convergence of 
$$
    B_n := \underset{z \in \mathbb{R}^{d}}{\sup}\left\lvert U_{K_{\zeta}}^{\nu_n}(z) - U_{g}^{\pi}(z)\right\rvert
$$ 
where $\nu_n = \sum_{i = 1}^{n}w(x_i)\delta_{x_i}$.
Since $\sum_{i = 1}^{n} w(x_i) = 1$, the exact same computations as in the proof of \cref{l:taylor_no_weights} yield
\begin{align*}
    \left\lvert U_{K_{\zeta}}^{\nu_n}(z) - \sum_{i = 1}^{n}w(x_i) \int K_{\zeta}(z, y)\,\mathrm{d}\lambda_{B(x_i, n^{-\epsilon})}(y)\right\rvert = \mathcal{O}\left(n^{-2\epsilon + d \zeta}\right),
\end{align*}
uniformly in $z$, for any $\epsilon > 0$.
Together with \cref{l:comparison_g_Hzeta}, this yields, for any $0 < s < 1$,
\begin{align*}
    B_n \leq \underset{z \in \mathbb{R}^{d}}{\sup}\, &\left\vert \sum_{i = 1}^{n}w(x_i) \int K_{\zeta}(z, y)\,\mathrm{d}\lambda_{B(x_i, n^{-\epsilon})}(y) - \int K_{\zeta}(z, y) W(y)\,\mathrm{d}\mu_V(y)\right\rvert\\
    & + C n^{-2\epsilon + d\zeta} + C n^{-4s\zeta/(d+2s)}.
\end{align*}
Moreover, by definition of $w$ and since $0\leq K_{\zeta} \leq g$ and $U_{g}^{\pi}(z) \leq C$ uniformly in $z$ (by the same argument as in \cref{s:quenched_ldp_coulomb}),
\begin{align*}
    \underset{z \in \mathbb{R}^{d}}{\sup}\,& \left\vert \sum_{i = 1}^{n}w(x_i) \int K_{\zeta}(z, y)\,\mathrm{d}\lambda_{B(x_i, n^{-\epsilon})}(y) - \int K_{\zeta}(z, y) W(y)\,\mathrm{d}\mu_V(y)\right\rvert \\
    &\leq \frac{n}{\sum_{i = 1}^{n}W(x_i)} \underset{z \in \mathbb{R}^{d}}{\sup}\,\left\lvert \frac{1}{n}\sum_{i = 1}^{n} W(x_i) \int K_{\zeta}(z, y)\,\mathrm{d}\lambda_{B(x_i, n^{-\epsilon})}(y) - \int K_{\zeta}(z, y)W(y)\,\mathrm{d}\mu_V(y)\right\rvert \\
    & \quad + \left\lvert 1 - \frac{n}{\sum_{i = 1}^{n}W(x_i)}\right\rvert C.
\end{align*}
Recall that $\frac{1}{n}\sum_{i = 1}^{n} W(x_i) \rightarrow 1$, $\mathbb{P}$-a.s.
Moreover, for any $\eta > 0$, 
\begin{align}
    \label{e:first_bound_difference_weighted_pot}
    \underset{z \in \mathbb{R}^{d}}{\sup} \,& \left\lvert \frac{1}{n}\sum_{i = 1}^{n} W(x_i) \int K_{\zeta}(z, y)\,\mathrm{d}\lambda_{B(x_i, n^{-\epsilon})}(y) - \int K_{\zeta}(z, y)W(y)\,\mathrm{d}\mu_V(y)\right\rvert\\\nonumber
    &\leq \underset{z \in \mathbb{R}^{d}}{\sup}\,\left\lvert \frac{1}{n}\sum_{i = 1}^{n} (W\star \rho_{\eta})(x_i) \int K_{\zeta}(z, y)\,\mathrm{d}\lambda_{B(x_i, n^{-\epsilon})}(y) - \int K_{\zeta}(z, y)(W\star \rho_{\eta})(y)\,\mathrm{d}\mu_V(y)\right\rvert\\\nonumber
    & \quad + \left\lVert W - W\star \rho_{\eta}\right\rVert_{\infty} \underset{z \in \mathbb{R}^{d}}{\sup}\, \left(\left\lvert U_{K_{\zeta}}^{\mu_{n}^{(\epsilon)}}(z)\right\rvert + \left\lvert U_{K_{\zeta}}^{\mu_V}(z)\right\rvert\right).
\end{align}
But $U_{K_{\zeta}}^{\mu_V}(z) \leq U_{g}^{\mu_V}(z) \leq C$ for the same reasons as $U_{g}^{\pi} \leq C$, so that using \cref{l:taylor_weights}, the right-hand side of \cref{e:first_bound_difference_weighted_pot} is bounded by
\begin{align*}
    \underset{z \in \mathbb{R}^{d}}{\sup}\, &\left\lvert U_{K_{\zeta}}^{\nu_{n}^{\epsilon, \eta}}(z) - U_{K_{\zeta}}^{(W\star \rho_{\eta}).\mu_{V}'}(z)\right\rvert\\
    &+\eta^{-1} C n^{-\epsilon + \zeta(d-2)} + \left\lVert W - W\star \rho_{\eta}\right\rVert_{\infty}\left(\underset{z \in \mathbb{R}^{d}}{\sup}\,\left\lvert U_{K_{\zeta}}^{\mu_{n}^{(\epsilon)}}(z) - U_{K_{\zeta}}^{\mu_V}(z)\right\rvert + 2C\right).
\end{align*}
Recall now from the proof of \cref{p:max_pot} in \cref{s:proof_max_pot} --especially from \cref{e:duality_formula}-- that 
$$
    \underset{z \in \mathbb{R}^{d}}{\sup}\,\left\lvert U_{K_{\zeta}}^{\mu_{n}^{(\epsilon)}}(z) - U_{K_{\zeta}}^{\mu_V}(z)\right\rvert \leq \sqrt{n^{\zeta (d-2)} I_{K_{\zeta}}(\mu_{n}^{(\epsilon)} - \mu_V)}.
$$
Using \cref{l:fourier_weights} and \cref{l:bound_Ih_Ig}, we thus get that, for any $\epsilon > (\frac{d}{2}\vee (d-2))\zeta$,
\begin{align*}
    \underset{n \rightarrow + \infty}{\lim \sup}\, B_n \leq  \underset{n \rightarrow + \infty}{\lim \sup}\, C_{\eta, W}\left[ n^{\zeta d} d_{\text{BL}}(\mu_{n}^{(\epsilon)}, \mu_V) + \sqrt{n^{\zeta (d-2)} I_{g}(\mu_{n}^{(\epsilon)}-\mu_V)}\right] + 2C \lVert W - W\star \rho_{\eta}\rVert_{\infty}.
\end{align*}
It now remains to use the concentration inequality from \cref{l:concentration} to show that under the right constraint on $\zeta$ and $\epsilon$, the first term goes to $0$ $\mathbb{P}-$a.s.
Using \citep[Theorem 1.2]{chafai_concentration_2018} and \cref{e:bound_Ig_concentration}, it is enough to show that $n^{2 \zeta d}(I_{g}^{V}(\mu_{n}^{(\epsilon)})-I_{g}^{V}(\mu_V)) \rightarrow 0$.

Using \cref{l:concentration}, we know that for any $r >0$, setting 
$$
    A = \{(x_1, \dots, x_n) : n^{2 \zeta d} \left(I_{g}^{V}(\mu_{n}^{(\epsilon)})-I_{g}^{V}(\mu_V)\right) > r^2\},
$$ we have
\begin{align*}
    \log \mathbb{P}_{n, \beta_n}^{V}(A) \leq - c\beta_n n^{-2\zeta d}r^2 + Cn + C\beta_n\left(\frac{1}{n} + \frac{1}{n\epsilon_{n}^{d-2}} + \epsilon_{n}^{2}\right)
\end{align*}
with $\epsilon_n = n^{-\epsilon}$.
Thus, using the Borel--Cantelli lemma, $n^{2 \zeta d} \left(I_{g}^{V}(\mu_{n}^{(\epsilon)})-I_{g}^{V}(\mu_V)\right) \rightarrow 0$ $\mathbb{P}$-a.s. upon choosing $0 < \epsilon < \frac{1}{d-2}$ and
\begin{align*}
    2 \zeta d < \delta \wedge 1 \wedge (1-(d-2)\epsilon)\wedge 2\epsilon.
\end{align*}
Incorporating the constraint $\epsilon > (d/2 \vee (d-2))\zeta$ yields
\begin{align*}
    \zeta <& \frac{\delta \wedge 1}{2 d} \wedge \frac{1}{2d}(1-(d-2)\epsilon) \wedge \left(\frac{1}{d}\wedge \frac{2}{d}\wedge \frac{1}{d-2}\right)\epsilon\\
    &= \frac{\delta }{2d} \wedge \frac{1}{2 d}(1-(d-2)\epsilon) \wedge \frac{1}{d}\epsilon.
\end{align*}
Setting $\epsilon = \frac{1}{d} < \frac{1}{d-2}$, to make the two constrains coincide,
it is thus enough to chose 
$$
    0 < \zeta < \frac{\delta \wedge 2/d}{2d},
$$ 
which is the assumption we made in \cref{p:max_pot_weight}.
With this choice, we thus get
\[\underset{n \rightarrow +\infty}{\lim\sup}\, B_n \leq 2C \lVert W - W\star \rho_{\eta}\rVert_{\infty}.\]
Letting $\eta \rightarrow 0$, we finally get the result, since $W$ is continuous and compactly supported under \cref{assum:vanishing_density}.
\end{proof}


%


\subsection*{Acknowledgements}
We thank Rapha\"el Butez for stimulating discussions around LDPs and Gibbs measures.
We acknowledge support from ERC grant Blackjack ERC-2019-STG-851866, ANR grant Baccarat ANR-20-CHIA-0002 and Labex CEMPI (ANR-11-LABX-0007-01).



\printbibliography

\appendix
\section{Experiments}
\label{s:experiments}

After discussing how we approximately sample from Gibbs measures, we illustrate our main results on toy examples and a Bayesian classification task.

\subsection{Approximately sampling from the Gibbs measure}
\label{s:sampling_and_mala}
There is no known exact sampling algorithm to draw points from Gibbs measures of the form
\begin{align}
\label{e:gibbs_measure}
    \mathrm{d}\mathbb{P}_{n, \beta_n}^{V}(y_1, \dots, y_n) = \frac{1}{Z_{n, \beta_n}^{V}}\exp\left(-\beta_n H_n(y_1, \dots, y_n)\right)\,\mathrm{d}y_1\,\dots\mathrm{d}y_n,
\end{align}
with
\begin{align}
\label{e:H_n}
    H_n(y_1, \dots, y_n) = \frac{1}{2 n^2}\sum_{i \neq j} K(y_i, y_j) + \frac{1}{n}\sum_{i = 1}^{n}V(y_i),
\end{align}
and
\begin{align}
\label{e:jellium}
    V(z) = V^{\pi}(z) :=  - U_{K}^{\pi}(z) + \Phi(z), \quad z\in\mathbb{R}^d.
\end{align}
We thus rely on MCMC to produce points $y_{1}^{T}, \dots, y_{n}^{T}$ that are approximately distributed according to \cref{e:gibbs_measure} after $T$ iterations of an MCMC kernel.
More precisely, we use Metropolis-Adjusted Langevin (MALA; \cite{robert_monte_2004}) updates targeting \cref{e:gibbs_measure}. 
Formally, these are Metropolis--Hastings updates with Gaussian proposal given by
\begin{align}
    \label{e:mala_updates}
    y_{1:n} \,|\, y_{1:n}^{(t)} \sim \mathcal{N}\left(y_{1:n}^{(t)}-\alpha \beta_n \nabla H_n(y_{1:n}^{(t)}),\, 2\alpha I_{dn}\right),
\end{align}
where $y_{1:n}$ is short for $y_1, \dots, y_n$ and $\alpha = \alpha_0 \beta_{n}^{-1}$, with
$\alpha_0$ tuned so as to reach an acceptance proportion close to $50\%$.
This choice comes from the fact that we want to avoid mixing times to depend on $\beta_n$ in warm start cases, following heuristics from \cite{dwivedi_log-concave_2019}; see \citep{rouault_monte_2024} for more details.
Regarding the computational complexity, each of the $T$ MALA iterations requires evaluating $H_n$ and its gradient.
Each of the $T$ iterations thus costs $\mathcal{O}(n^2)$ for the target \eqref{e:gibbs_measure}, and $\mathcal{O}(n^2 + n M_n)$ for the quenched Gibbs measure of Definition 2.5, where $M_n$ is the number of atoms of $\nu_n$ used to approximate $\pi$ in \cref{e:jellium}.

We consider a toy example where $d=3$ and $n=500$. 
The target distribution $\pi$ is a centered Gaussian distribution with covariance $\sigma^2 I_d$, truncated at $\lvert x \rvert > 5 \sigma$, and with $\sigma = 0.5$.
We consider the quenched Gibbs measure with $\beta_n = n^2$, and interaction kernel the regularized Riesz kernel $K_{s, \epsilon}$ given by
\begin{align}
    \label{e:riesz_kernel_reg}
    K_{s, \epsilon}(x, y) = \frac{1}{\left(\lvert x -y \rvert^2 + \epsilon^2\right)^{s/2}},
\end{align}
with $s = d-2$ and $\epsilon = 0.1$.
As for the background, we set $\nu_n = \frac{1}{M_n}\sum_{i = 1}^{M_n}\delta_{x_i}$ where $x_1, \dots, x_{M_n}$ is a realization of the history of a Metropolis--Hastings chain targeting $\pi$, with $M_n = 1\,000$.

In \cref{f:points_gibbs_n2}, we show the first two spatial coordinates of the $50\,000$-th iterate of a MALA chain approximating the Gibbs measure.
For comparison, in \cref{f:points_mcmc}, we show the first two spatial coordinates of an approximate sample $y_1, \dots, y_n$ from $\pi$, obtained as the history $y_1, \dots, y_n$ of a Metropolis--Hastings chain targeting $\pi$ with Gaussian proposal $N(0, \alpha I_d)$, where $\alpha$ is tuned to reach $50\%$ acceptance, and we removed $5\,000$ burn-in iterations.

\begin{figure}
    \centering
    \subfloat[Gibbs, $\beta_n = n^2$ \label{f:points_gibbs_n2}]{
    \includegraphics[width = \threefig]{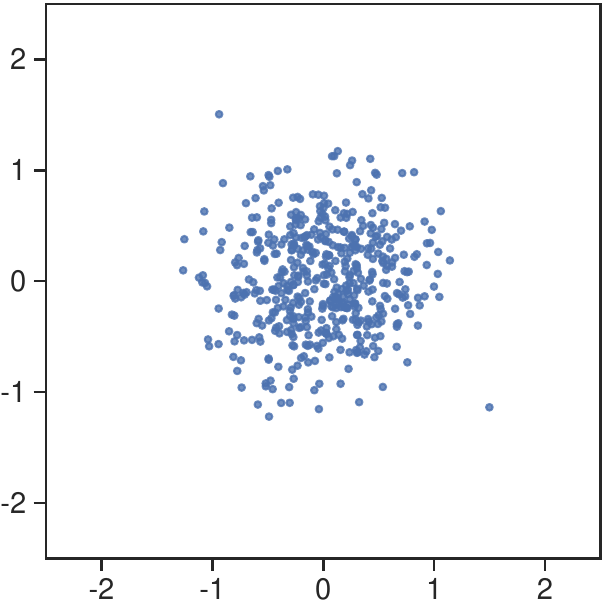}
    }
    \subfloat[MCMC \label{f:points_mcmc}]{
    \includegraphics[width = \threefig]{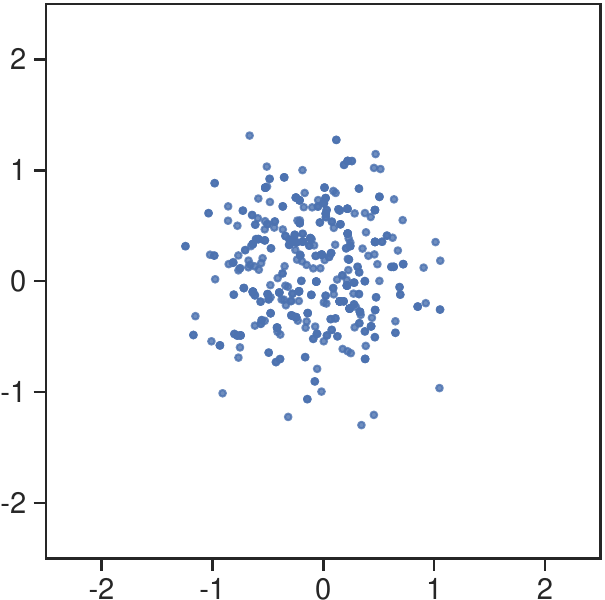}
    }
    \subfloat[Gibbs, $\beta_n = n^2$, 2D uniform target $\pi$ \label{f:points_gibbs_n2_unif}]{
    \includegraphics[width = \threefig]{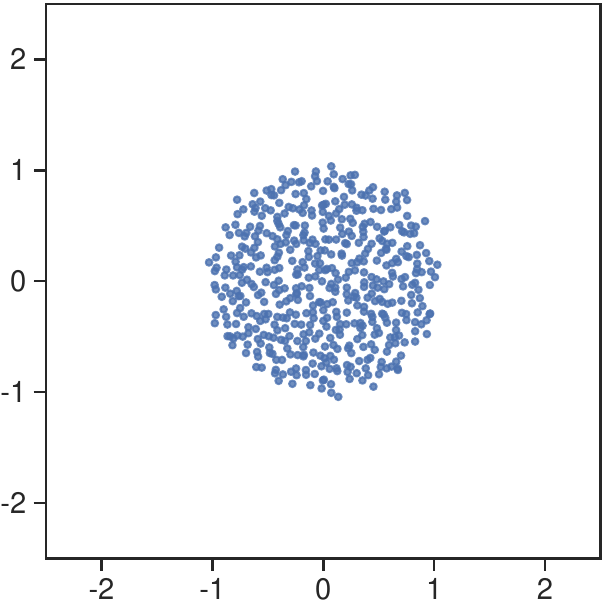}
    }
    \caption{
        First two coordinates of an approximate sample from (\ref{f:points_gibbs_n2}) the quenched Gibbs measure with $\beta_n=n^2$ and 3D truncated Gaussian target $\pi$, (\ref{f:points_mcmc}) MCMC with 3D truncated Gaussian target $\pi$, and (\ref{f:points_gibbs_n2_unif}) the quenched Gibbs measure with $\beta_n = n^2$ and 2D uniform target $\pi$.
    }
    \label{f:plot_points_mcmc_gibbs}
\end{figure}

We see in \cref{f:plot_points_mcmc_gibbs} that both methods stay confined in the regions of large probability under the target $\pi$, as expected.
Moreover, the approximated Gibbs samples in \cref{f:points_gibbs_n2} are more diverse than a Metropolis-Hastings sample, let alone because about half the proposed points in the latter are rejected, explaining the many superimposed disks in \cref{f:points_mcmc}. 
On top of these rejections causing autocorrelation in the MCMC chain, we expect the interaction term in \eqref{e:gibbs_measure} to favor diverse configurations.
However, this promotion of diversity is balanced by 1) the use of an MCMC chain as a confining background, 2) the fact that we truncate the interaction kernel, 3) the constraint that a significant proportion of the points should remain in the bulk of the Gaussian target, as well as 4) the fact that we project a 3-dimensional sample onto a plane.
This explains why repulsiveness is less visually striking in Figure~\ref{f:points_gibbs_n2} than in \cref{f:points_gibbs_n2_unif}, where we plot for reference an approximate Gibbs sample when the target $\pi$ is the 2D uniform measure on $B(0, 1)$, with interaction kernel $-\frac{1}{2}\log(\lvert x - y\rvert^2)$, exact potential $U^{\pi}$ and $T = 5\,000$ MALA iterations.

\begin{figure}
    \centering
    \subfloat[$T = 50\,000$ \label{f:points_gibbs_n3_50k}]{
    \includegraphics[width = \threefig]{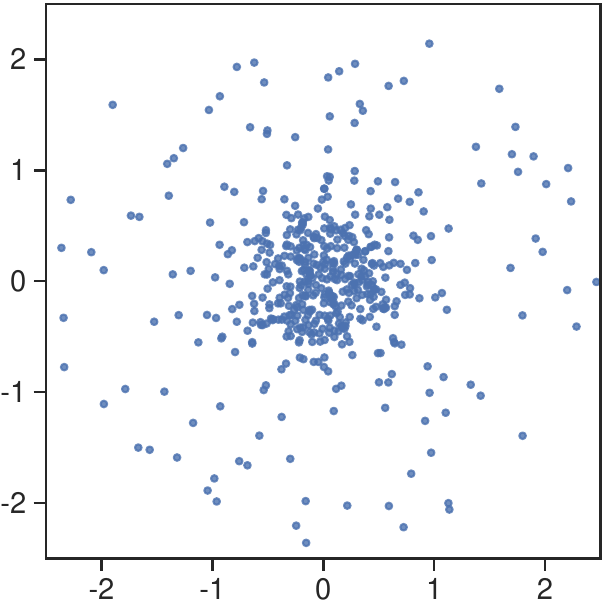}
    }
    \subfloat[$T = 100\,000$ \label{f:points_gibbs_n3_100k}]{
    \includegraphics[width = \threefig]{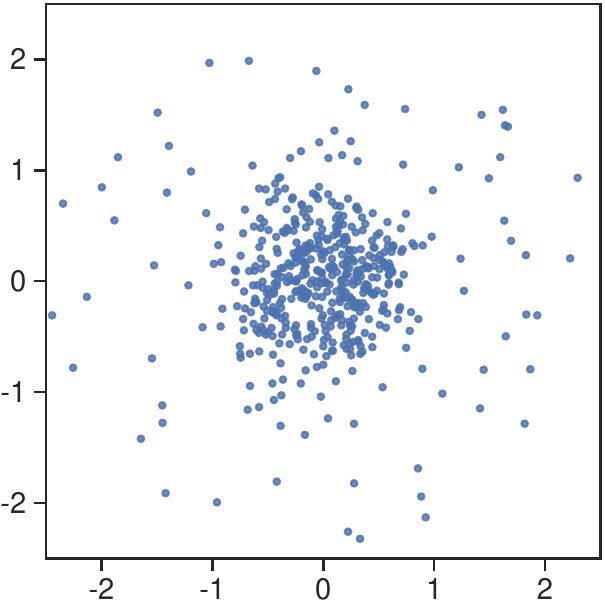}
    }
    \subfloat[$T = 200\,000$ \label{f:points_gibbs_n3_200k}]{
    \includegraphics[width = \threefig]{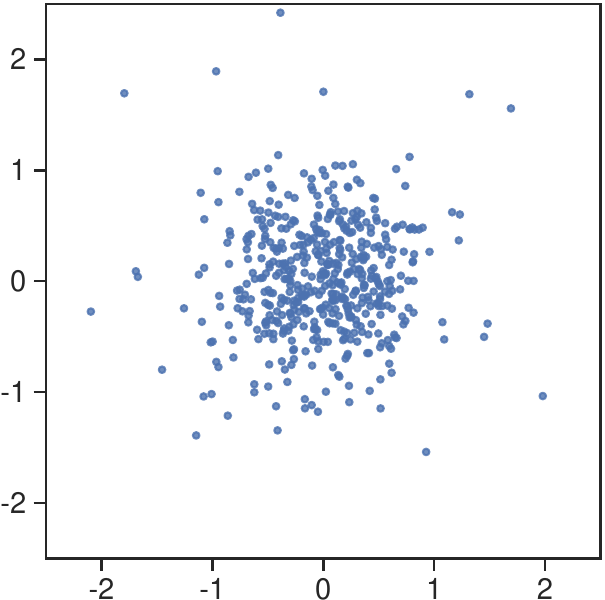}
    }
    \caption{First two coordinates of an approximate sample from the quenched Gibbs measure, with $\beta_n = n^3$ and increasing number $T$ of MALA iterations.}
    \label{f:plot_points_gibbs_n3}
\end{figure}

In \cref{f:plot_points_gibbs_n3}, we plot the first two coordinates in the same setup as \cref{f:points_gibbs_n2}, with the same realization of the background, but this time with an inverse temperature of $\beta_n = n^3$.
In \cref{f:points_gibbs_n3_50k} and \cref{f:points_gibbs_n3_100k}, we see that points spread a lot in the domain of the target, even in regions of low probability under $\pi$. 
In light of \cref{f:points_gibbs_n3_200k}, where outer regions of low probability under $\pi$ now contain a matching fraction of the points, one can understand the overspread in Figures \ref{f:points_gibbs_n3_50k} and \ref{f:points_gibbs_n3_100k} as a consequence of MALA needing more iterations to yield an accurate approximation to its Gibbs target.
This supports the intuitive claim that higher $\beta_n$ comes with a worse mixing time for MALA: the lower the temperature, the more difficult it is to approximately sample a Gibbs measure. 
There thus seems to be a tradeoff in choosing a large $\beta_n$ for better statistical guarantees, and keeping $\beta_n$ low to preserve tractable approximate sampling procedures.
There are to our knowledge no theoretical results that enable to quantify this tradeoff, and one could hope that Markov kernels specifically designed for \cref{e:gibbs_measure} could yield better performance, e.g. smaller confidence intervals for linear statistics.
While those are important questions, we do not address them in this paper.
In practice, we recommend $\beta_n = n^2$, which already yields improved statistical guarantees over independent and MCMC quadratures, and has remained computationally tractable in our experiments.




\subsection{Worst-case integration error and variance}
\label{s:worst-case-variance}

We consider the same setting where $\pi$ is a three-dimensional Gaussian distribution $\mathcal{N}(0, \sigma^2 I_{3})$, truncated at $\lvert x \rvert > 5 \sigma$ with $\sigma = 0.5$. 
We estimate the worst-case integration error of various quadratures $\mu_n := \frac{1}{n}\sum_{i = 1}^{n}\delta_{y_i}$ as $n$ grows.

We first evaluate $\mathrm{E}_{\mathcal{H}_K}(\mu_n) = \sqrt{I_{K}(\mu_n - \pi)}$ when $K = K_{s, \epsilon}$ again with $s = d-2$ and $\epsilon = 0.1$.
In particular, we compute 
\[
    I_{K}(\mu_n - \pi) - I_{K}(\pi) = I_{K}(\mu_n)-2I_{K}(\mu_n, \pi),
\] 
using a long MH chain targeting $\pi$ to approximate the integral with respect to $\pi$, with size $M = 90,000$ and $10,000$ burn-in iterations.
We show in \cref{f:comparison_energies_mcmc} how $\mathrm{E}_{\mathcal{H}_K}(\mu_n)$ decays when $n$ grows for 
i) $\mu_{n}^{\mathrm{MCMC}}$, where the $(y_i)$ is the history of a MH chain targeting $\pi$ after $5000$ burn in iterations,
ii) $\mu_{n}^{\mathrm{Q}}$, where the $(y_i)$ are an approximate sample from the quenched Gibbs measure with $T = 10\,000$ iterations, temperature $\beta_n = n^2$, kernel $K = K_{s, \epsilon}$ and $\nu_n = \frac{1}{M_n}\sum_{i = 1}^{M_n} \delta_{x_i}$ where $M_n = 1,000$ and the $x_i$ are the history of a MH chain targeting $\pi$ again,
iii) $\mu_{n}^{\mathrm{KT}}$ where the $(y_i)$ are a coreset of size $n$ of an $n^2$ MCMC chain $x_1, \dots x_{n^2}$ targeting $\pi$. 
In the latter method, the coreset if obtained via the kernel thinning algorithm of \citet{dwivedi_kernel_2021, dwivedi_generalized_2024}, who proved that the $(y_i)_{i = 1, \dots, n}$ preserve the same sample quality as the original $(x_i)_{i = 1, \dots, n^2}$ with high probability.
We compute the $90\%$-quantile of the worst-case error, over $100$ independent realizations of the quadrature for  each method and each cardinality $n$, with fixed background for ii) and iii).

\begin{figure}
    \centering
        \subfloat[Decay of the energy for MCMC, Gibbs and kernel thinning. \label{f:comparison_energies_mcmc}]{
        \includegraphics[width = \twofig]{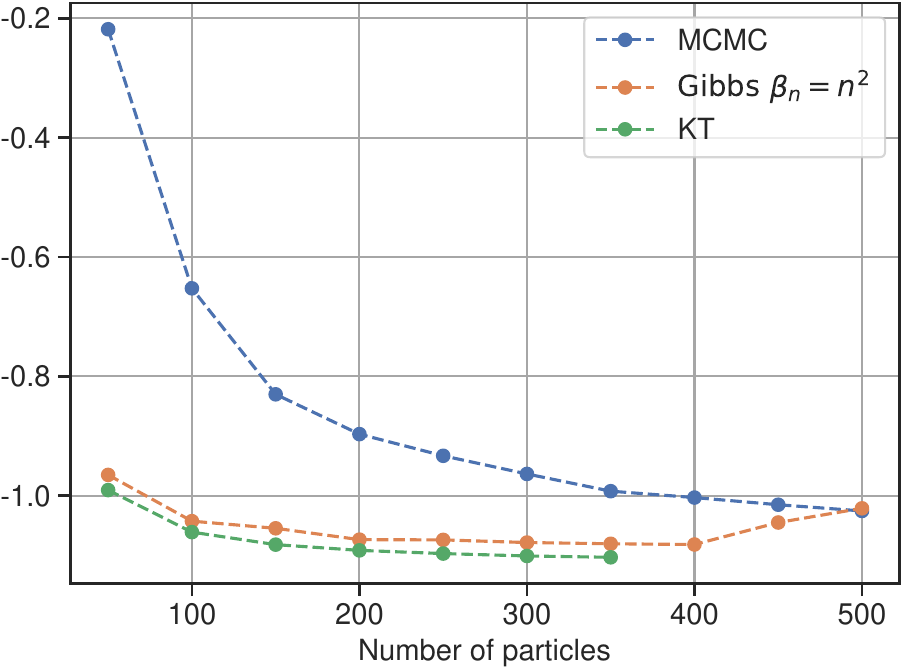}
        }
        \subfloat[Proportions for Bayesian classification. \label{f:proportions_bayesian_classification}]{
    \includegraphics[width = \twofig]{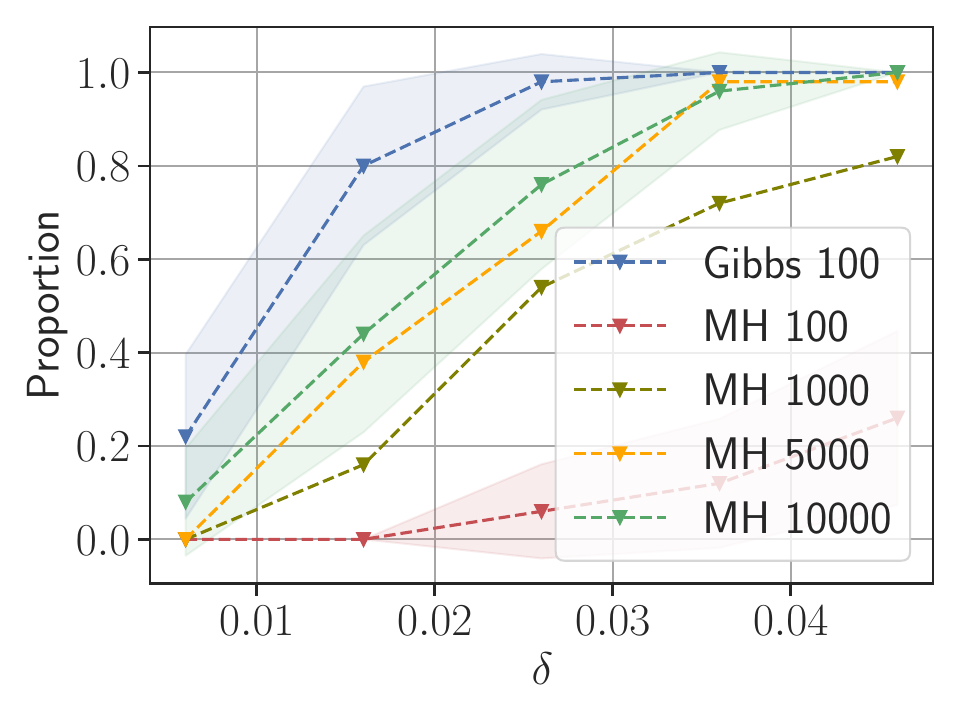}
    }
    \caption{(\ref{f:comparison_energies_mcmc}) Quantile of order $90\%$ of $I_{K}(\mu_n - \pi) - I_{K}(\pi)$, in log scale, for points obtained via MCMC, the quenched Gibbs measure, and kernel thinning.
    (\ref{f:proportions_bayesian_classification}) Proportion of runs where the $M$ estimators fall simultaneously in the confidence regions of size $\delta$.}
    \label{f:energies_confidence}
\end{figure}

As expected, we see that both kernel thinning and our method outperform MCMC in terms of worst-case integration error, at least up to $n = 350$ particles.
For kernel thinning, we used the vanilla version presented in \citep{dwivedi_kernel_2021} which requires to store all the $n^4$ pairwise interactions for the underlying MCMC background of size $n^2$, which is why we could not run the experiment for more than $n = 350$ due to storage issues.
Note however that recent refinements of this algorithm might help circumvent this issue \citep{shetty_distribution_2022}.

Beyond $n = 350$, we observe that the energy increases after for the quenched Gibbs measure, becoming similar to the MCMC error, which we attribute again to the fact that $T = 10\,000$ MALA iterations are not enough to approximate the Gibbs measure for that large a number of particles. 

We now provide further experiments that investigate the impact of the background, as well as a conjectured faster decaying variance for linear statistics.

In \cref{f:comparison_energies_gibbs_background}, we run the same experiment as in \cref{s:worst-case-variance}, where we vary the size $M_n$ of the MCMC background $(x_i)$ to see whether there is an impact on the worst-case integration error.
We also add to the comparison a background $\nu_n$ obtained via reweighting the realization of a Coulomb gas $x_1, \dots, x_{n}$ as in Definition (2.8), with quadratic potential $V$ so that $\mu_V$ is the uniform measure on $B(0, 5\sigma)$.
We use $T = 10\,000$ iterations of MALA to approximately sample that Coulomb background.

\begin{figure}
    \centering
    \includegraphics[width=0.8\textwidth]{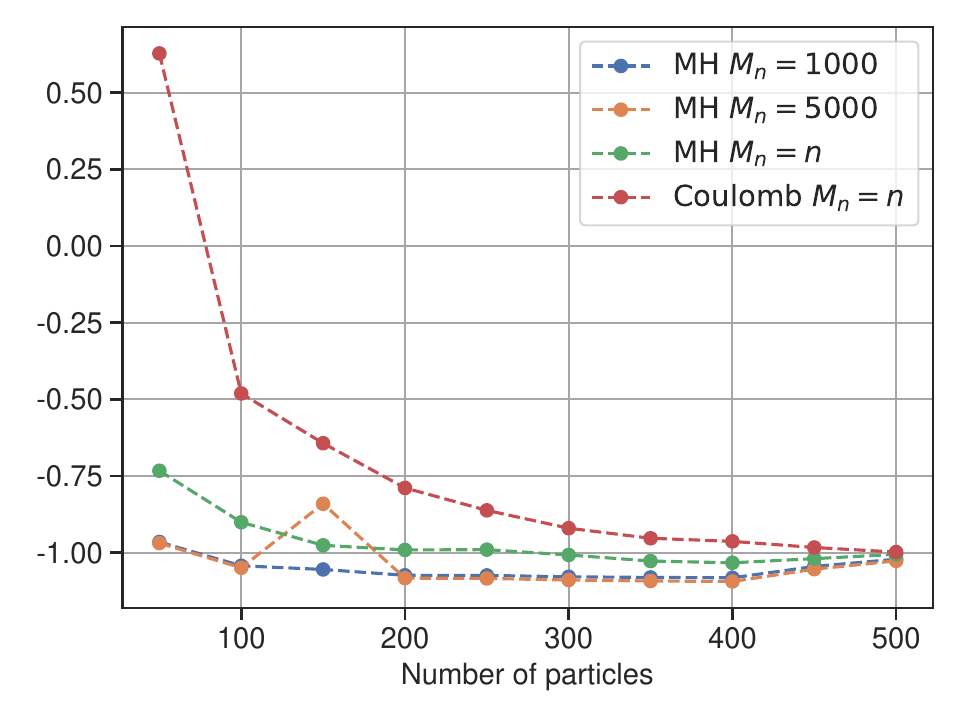}
    \caption{Quantile of order $90\%$ of $I_{K}(\mu_n - \pi) - I_{K}(\pi)$ for different background of the quenched Gibbs measure.}
    \label{f:comparison_energies_gibbs_background}
\end{figure}

When the background is drawn using MCMC, we see a slight improvement between a background of size $n$ and a background of (larger) fixed size $1000$ or $5000$.
When the background is drawn from a first well spread Coulomb gas with reweighting, we observe a small loss in terms of energy, especially for small values of $n$. 
We conjecture that this is due to the quality of the estimation when using weights to approximate the potential with a small number of points $n$.

Then, we look at the effect of increasing the temperature schedule $\beta_n$ in \cref{f:comparison_energies_gibbs_temperature_mcmc} for MCMC background with size $M_n = 1000$, and in \cref{f:comparison_energies_gibbs_temperature_coulomb} for a reweighted Coulomb background $x_1, \dots, x_{n}$ with uniform equilibrium measure.
Again, we clearly see a downgrade in terms of energy for lower temperature schedules which is again due to the fact that $T = 10\,000$ are not enough to sample from the Gibbs measure in those regimes as we saw in \cref{f:plot_points_gibbs_n3}.
Running the same experiments with $T = 200\,000$ MALA iterations for instance is very costly, and one could argue that it is not reasonable for a three-dimensional truncated Gaussian target.

\begin{figure}[!ht]
    \centering
    \subfloat[MCMC background \label{f:comparison_energies_gibbs_temperature_mcmc}]{%
        \includegraphics[width=\twofig]{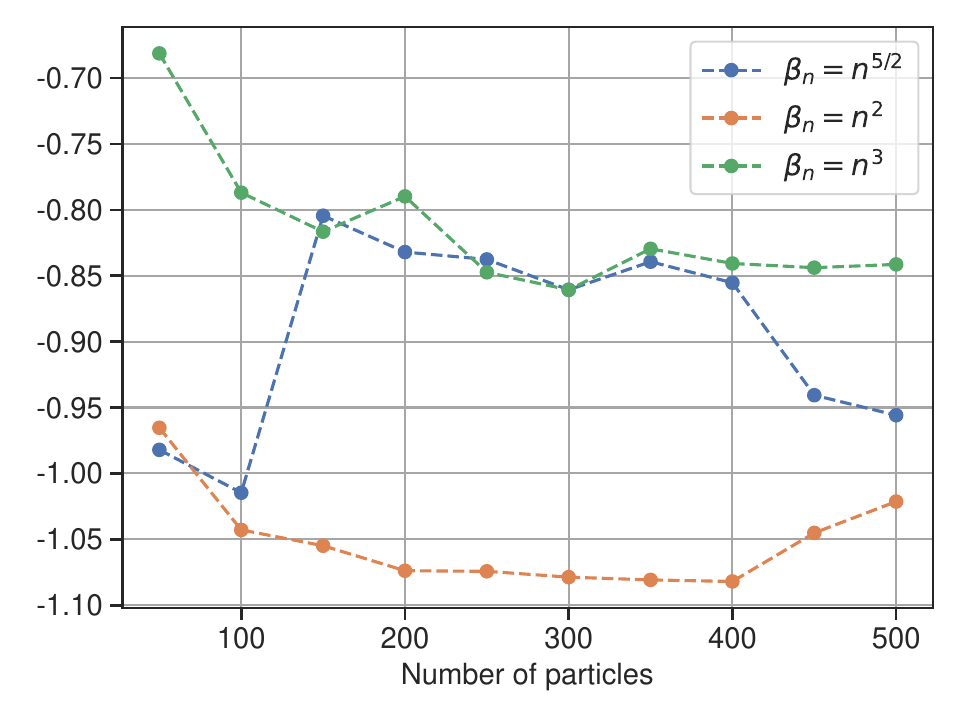}
    }
    \subfloat[Reweighted Coulomb background \label{f:comparison_energies_gibbs_temperature_coulomb}]{
        \includegraphics[width=\twofig]{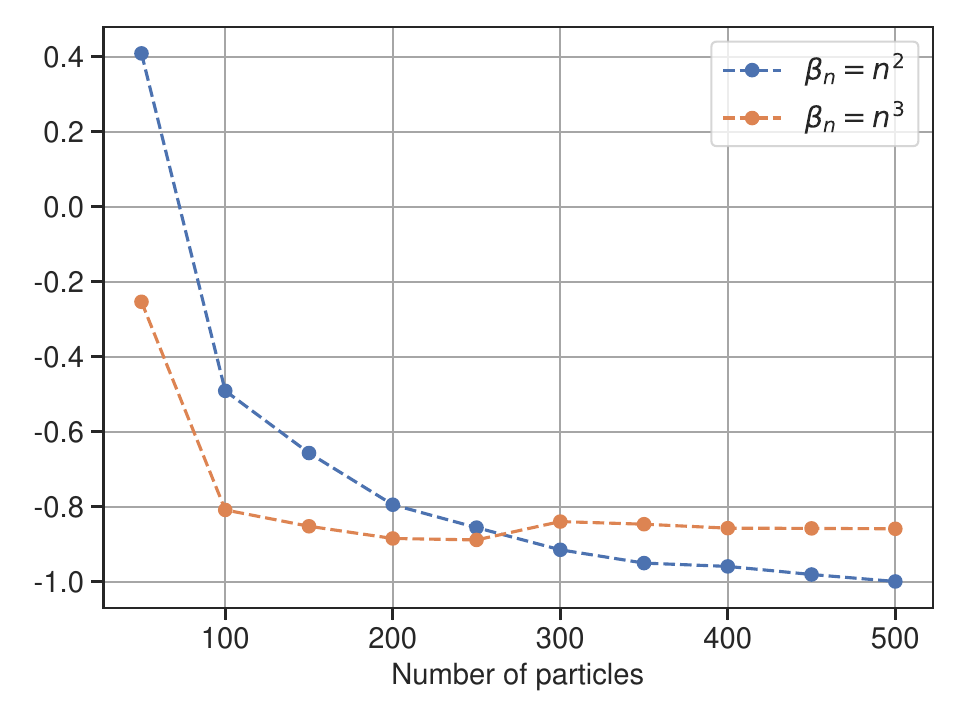}
    }
    \caption{Quantile of order $90\%$ of $I_{K}(\mu_n - \pi) - I_{K}(\pi)$ for different temperature schedules for the quenched Gibbs measure.}
    \label{f:comparison_energies_gibbs_temperature}
\end{figure}

We now look at the variance of linear statistics with respect to $\mu_n$ for the three different methods (i-iii) listed above. We conjecture that Gibbs measures can significantly improve on MCMC in that criterion, although we defer the corresponding theoretical investigation to further work.
We consider a simple test function in $\mathcal{H}_{K_{s, \epsilon}}$ given by $f(x) = \sum_{i = 1}^{10} a_i K_{s, \epsilon}(x, z_i)$ where the $(z_i)$ are chosen uniformly at random on $[-1, 1]^{d}$ and the $a_i$ are chosen uniformly at random on $[-1, 1]$.
We compute in \cref{f:variance} the empirical estimate of the variance $\mathrm{Var}\left[\frac{1}{n}\sum_{i = 1}^{n} f(y_i)\right]$ over $100$ independent realizations of $y_1, \dots, y_n$ for each method and each value of $n$, with MCMC background of size $M_n = 1\,000$ and $T = 10\,000$ MALA iterations for the quenched Gibbs measure.

\begin{figure}
    \centering
    \includegraphics[width = 0.8\textwidth]{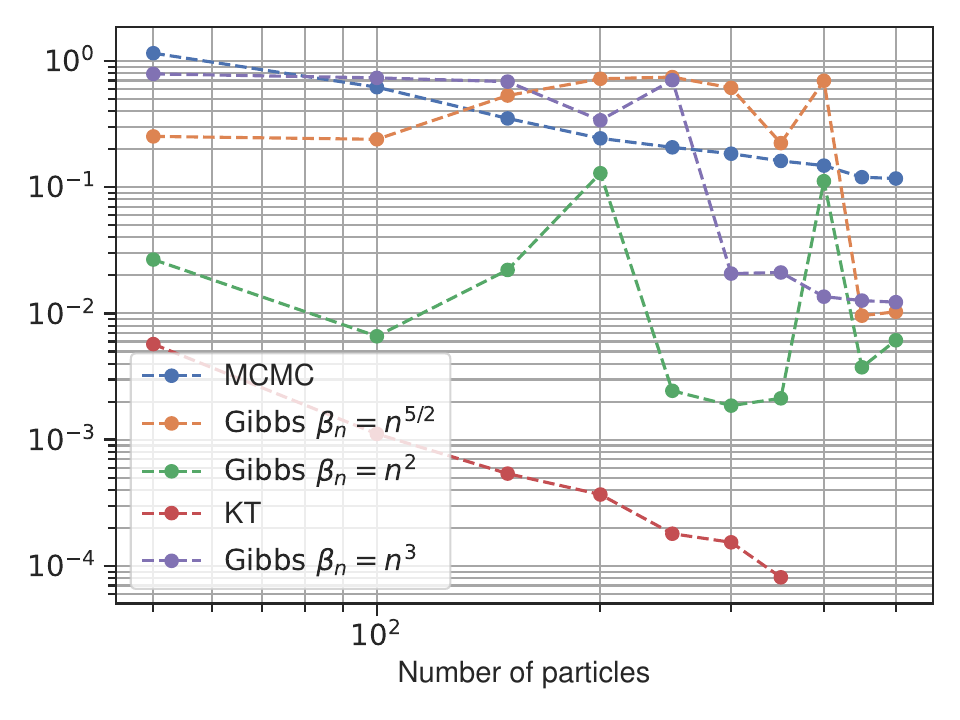}
    \caption{Empirical variance of $\frac{1}{n}\sum_{i = 1}^{n} f(x_i)$ in log scale for points obtained via MCMC, the quenched Gibbs measure with different temperature schedules, and kernel thinning.}
    \label{f:variance}
\end{figure}

While kernel thinning clearly exhibits a faster rate of convergence than MCMC, and the variance of the Gibbs estimator can --when tuned well-- fall between that of MCMC and kernel thinning, it is hard to conclude anything for the rates corresponding to Gibbs measures.
This is again related to the difficulty of tuning the MALA sampler as the Gibbs target changes.

\subsection{Bayesian classification}

In this section, we give a practical statistical illustration that a quenched Gibbs quadrature provides better coverage of simultaneous confidence intervals for a given computational budget, compared to classical MCMC.

First, we define the target integration task starting from the foundational expected utility principle; this is classical in Bayesian statistics, we include it for completeness.
Consider a training data set $(z_j, t_j)_{j = 1}^{N}$ where $z_j \in \mathcal{Z} = \mathbb{R}^{d}$ encodes the features of the j-th individual and $t_j \in \mathcal{T} = \{1, \dots, C\}$ is the associated class label.
An \emph{action} is the choice of a classifier $g : \mathcal{Z}^{N} \times \mathcal{T}^{N} \times \mathcal{Z} \rightarrow \mathcal{T}$.
For a new point $(z, t)$ with unknown class $t$, if one considers a joint probability measure $p$ on $z_1, t_1, \dots, z_N, t_N, z, t$ and the so-called 0 -- 1 utility function, the expected utility of the action $g$ is given by
\begin{equation}
  \label{e:seu}
  -\int 1_{t\neq g(z_{1:N}, t_{1:N}, z)} \mathrm{d} p(z_{1:N}, t_{1:N}, z, t),
\end{equation}
where $z_{1:N}$ is short for $(z_1, \dots, z_N)$.
The Bayes action or classifier is the function $g^\star$ which maximizes the expected utility~\eqref{e:seu}.
It is easy to see that it is given by
\begin{equation}
  \label{e:bayes_action}
  g^\star: z_{1:N}, t_{1:N}, z \mapsto \arg\max_{k=1}^C p(t=k \vert z_{1:N}, t_{1:N}, z).
\end{equation}
In other words, the Bayes classifier predicts the class $k$ with the highest conditional probability under $p$.
Making the classical assumption that $p$ factorizes as
\begin{align}
    \label{e:factorization}
  p(z_{1:N}, t_{1:N}, z, t) = \int p(y) \cdot \prod_{j=1}^N p(t_j \vert z_j, y) p(z_j) \cdot p(t\vert z, y) p(z) \mathrm{d}y,
\end{align}
where we introduced a parameter $y \in\mathcal{Y}$ and a reference measure $\mathrm{d}y$, the conditional in~\eqref{e:bayes_action} writes
\begin{align}
    \label{e:posterior_predictive}
    p(t = k |z_{1:N}, t_{1:N}, z) = \int p(t = k | z, y) \pi(y) \mathrm{d}y,
\end{align}
where
\[
  \pi(y) \propto \prod_{j = 1}^{N} p(t_j | z_j, y) p(y)
\]
is the density of the \emph{posterior} probability measure on the parameter $y$.
Using a Monte Carlo method to replace $\pi$ by an approximation $\mu_n = \frac{1}{n} \sum_{i = 1}^{n} \delta_{y_i}$, we approximate the Bayes action by
\begin{equation}
  \label{e:mc_action}
  g^{\mathrm{MC}}_n: z_{1:N}, t_{1:N}, z \mapsto \arg\max_{k=1}^C \int p(t=k \vert z, y) \mathrm{d}\mu_n(y).
\end{equation}

One way to guarantee that the expected utility~\eqref{e:seu} of $g^{\mathrm{MC}}_n$ is close to the one of $g^{\star}$ is to ensure that the $C$ integrals in the argmax of~\eqref{e:mc_action} simultaneously approximate the $C$ integrals in the argmax of~\eqref{e:bayes_action}.
Furthermore, if the dataset on which we want to perform classification contains $M$ test points instead of only one and we want to ensure that we approximate all predictions of the Bayes classifier using a single Monte Carlo run $y_1, \dots, y_n$, then there are $M\times C$ integrals to simultaneously approximate.
In that case, Theorem (2.7) and Theorem (2.10) show that the asymptotic confidence intervals obtained with a single run $y_{1:n}^{\mathrm{Q}}$ from the quenched Gibbs measure are tighter than the ones obtained for a single run $y_{1:n}^{\mathrm{MCMC}}$ from MCMC, depending on which smoothness one wants to assume on $p(t = k |z, y)$.
We empirically illustrate in the following that it is also the case at finite $n$ and for a MCMC-approximated sample of the quenched Gibbs measure, using proposal \eqref{e:mala_updates}.

To specify our setting, we consider binary logistic regression, where $C = 2$, $\mathcal{Z} = \mathbb{R}^{2}$, $\mathcal{T} = \{0, 1\}$, $ d =3$, and we assume $p$ to factorize as in~\eqref{e:factorization} with
\begin{equation}
    \label{e:logistic_regression}
    p(t = 1 | z, y) = 1 - p(t = 0 | z, y) \propto \exp\left(y^{\top}[z, 1]\right),
\end{equation}
where $[z, 1]$ stands for $x$ concatenated with $1$ to include an intercept.
To keep the posterior $\pi$ compactly-supported, we consider the prior $p(y)$ to be a three-dimensional Gaussian distribution with covariance $\sigma^2 I_3$ truncated at $\lvert x \rvert > 5 \sigma$, with $\sigma = 0.5$ as before, so that $\pi$ is compactly supported.
We mimick a toy dataset from the documention of the blackjax\footnote{\url{https://blackjax-devs.github.io/sampling-book/models/logistic_regression.html}} library: the features $z_{1:N}$ are drawn from a balanced mixture of two isotropic Gaussian distributions, and the class $t_{1:N}$ corresponds to the component membership.
We consider $N = 50$ individuals for the training set, and $M = 10$ test individuals to classify.
Computing the Bayes classifier for these $10$ individuals then requires to compute $M (C-1) = 10$ integrals with respect to $\pi$.
In order to be able to compare with a reference value for the Bayes classifier, we first approximate the conditional $p(t = 1 | z_{1:N}, t_{1:N}, z)$ in \eqref{e:posterior_predictive}, for each $z$ in the test set, using a long Metropolis--Hastings chain targeting $\pi$, with $10^5$ iterations.
Then, we compare the results to the Monte Carlo approximation 
\begin{equation}
    \label{e:bayes_action_mc}
    \hat{p}_{n}(t = 1 | z_{1:N}, t_{1:N}, z) = \frac{1}{n} \sum_{i = 1}^{n} p(t = 1 | z, y_i),
\end{equation}
when $y_1, \dots, y_n$ is i) the history of a Metropolis--Hastings chain targeting $\pi$ after $500$ burn-in iterations, 
ii) an approximate sample from the quenched Gibbs measure, after $T = 10\,000$ MALA iterations, with regularized Riesz kernel with $s = d -2$ and $\epsilon = 0.1$, at inverse temperature $\beta_n = n^2$, and with background $\nu_n$ of size $M_n = 1000$ given by an iid subsample of the history of a Metropolis--Hastings chain.
We build $50$ independent samples for each method, and compute for each value of $\delta$ the proportions $\mathfrak{p}_{\delta, \mathrm{MCMC}}$ and $\mathfrak{p}_{\delta, \mathrm{Gibbs}}$ of realizations of samples for which the $M$ estimators~\eqref{e:bayes_action_mc} simultaneously fall in the $M$ confidence regions
\[[\hat{p}_{n}(t = 1 | z_{1:N}, t_{1:N}, z) - \delta\,;\, \hat{p}_{n}(t = 1 | z_{1:N}, t_{1:N}, z) + \delta]\]
corresponding to the $M$ values $z$ in the test set.
We plot $\mathfrak{p}_{\delta, \mathrm{MCMC}}$ and $\mathfrak{p}_{\delta, \mathrm{Gibbs}}$ for different values of $\delta$ in \cref{f:proportions_bayesian_classification} when $n = 100$ for ii) and for different values of $n$ for i).
We use Gaussian confidence intervals at confidence level $95\%$ with Bonferroni correction for those $5$ values of $\delta$.

As expected, the Gibbs measure yields tighter simultaneous confidence intervals. In particular, $100$ particles drawn from the quenched Gibbs measure exhibit a coverage quality that is at least the same the history of a Metropolis--Hastings chain of size $10\,000$.

\section{Proof of auxiliary lemmas}
\label{s:auxiliary_proofs}
\subsection{Preliminaries}
\label{s:preliminaries}
We will repeatedly use a particular integral representation for $g$ and $K_{\zeta}$.
First, a quick computation shows that for any $c > 0$ and $\alpha > 0$,
\[\Gamma(\alpha) := \int_{0}^{+\infty} t^{\alpha -1}e^{-t}\,\mathrm{d}t = c^{\alpha} \int_{0}^{+\infty} t^{\alpha -1}e^{-ct}\,\mathrm{d}t.\]
choosing $\alpha = \frac{d-2}{2}$, this yields
\begin{align}
    g(z, y) &= \frac{1}{\Gamma\left(\frac{d-2}{2}\right)}\int_{0}^{+\infty} t^{\frac{d-2}{2}-1}\exp\left(- t \lvert z - y\rvert^2\right)\,\mathrm{d}t,\label{e:integral_representation_coulomb}\\
    K_{\zeta}(z, y) &= \frac{1}{\Gamma\left(\frac{d-2}{2}\right)}\int_{0}^{+\infty}t^{\frac{d-2}{2}-1}\exp\left(-n^{-2\zeta} t\right)\exp\left(- t \lvert z - y\rvert^2\right)\,\mathrm{d}t.\label{e:integral_representation_bounded}
\end{align}

It will be often convenient to work in Fourier space to bound the energies.
In fact \citep[Corollary 4]{sriperumbudur_hilbert_2009} states that for bounded translation invariant kernels $K(x, y) = \Psi(\lVert  x - y\rVert)$, 
the energy $I_g$ of a signed measure can be seen as the $L^{2}$-norm of the characteristic function of that measure, w.r.t. to a spectral measure which $\Psi$ is the Fourier transform of.
We give explicit expressions for this spectral measure in the following lemma for $K_{\zeta}$ and $g$, which proves the formula at the same time even though $g$ is not bounded.

\begin{lemma}
\label{l:energies_fourier}
Set \[\Lambda_c(\omega) := \frac{1}{(4\pi)^{d/2} \Gamma\left(\frac{d-2}{2}\right)}\int_{0}^{+\infty} t^{-2} \exp\left(-\frac{1}{4t}\lvert \omega\rvert^2)\right)\,\mathrm{d}t = C'\frac{1}{\lvert \omega \rvert^2},\]
where $C' >0$ is a constant, and \[\Lambda_{\zeta}(\omega) :=  \frac{1}{(4\pi)^{d/2}\Gamma\left(\frac{d-2}{2}\right)}\int_{0}^{+\infty} t^{-2}\exp(-n^{-2\zeta}t)\exp\left(-\frac{1}{4t}\lvert \omega\rvert^2)\right)\,\mathrm{d}t \leq \Lambda_{c}(\omega).\]
Then for any signed measure $\mu$ with finite energy $I_{K_\zeta}$, we have
\begin{align}
    K_{\zeta}(z, y) = \int e^{- i\langle z - y\,,\,\omega\rangle}\Lambda_{\zeta}(\omega)\,\mathrm{d}\omega,\\
    U_{K_{\zeta}}^{\mu}(z) = \int e^{- i\langle z\,,\,\omega\rangle}\Phi_{\mu}(\omega)\Lambda_{\zeta}(\omega)\,\mathrm{d}\omega,\\
    I_{K_{\zeta}}(\mu) = \int \left\lvert \Phi_{\mu}\right\rvert^{2}(\omega) \Lambda_{\zeta}(\omega)\,\mathrm{d}\omega,
\end{align}
where $\Phi_{\mu}$ denotes the characteristic function of $\mu$.
Similarly, if $\mu$ is a signed measure such that $I_{g}(\mu) < \infty$,
\begin{align}
    g(z, y) = \int e^{- i\langle z - y\,,\,\omega\rangle}\Lambda_{c}(\omega)\,\mathrm{d}\omega,\\
    U_{g}^{\mu}(z) = \int e^{- i\langle z\,,\,\omega\rangle}\Phi_{\mu}(\omega)\Lambda_{c}(\omega)\,\mathrm{d}\omega,\\
    I_{g}(\mu) = \int \left\lvert \Phi_{\mu}\right\rvert^{2}(\omega) \Lambda_c(\omega)\,\mathrm{d}\omega.
\end{align}
\end{lemma}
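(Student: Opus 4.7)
The plan is to derive all three formulas (kernel, potential, energy) from the subordination representations \eqref{e:integral_representation_coulomb} and \eqref{e:integral_representation_bounded} combined with the Fourier inversion of a Gaussian. The starting ingredient is the standard identity $e^{-t|x|^2} = (4\pi t)^{-d/2}\int e^{-i\langle x,\omega\rangle} e^{-|\omega|^2/(4t)}\,d\omega$. Inserting this into \eqref{e:integral_representation_bounded} and swapping the $t$ and $\omega$ integrals via Fubini (justified since the absolute integrand is finite: the $e^{-n^{-2\zeta}t}$ controls the large-$t$ part and $e^{-|\omega|^2/(4t)}$ the small-$t$ part) produces, up to an absolute normalization constant $c_d = 1/((4\pi)^{d/2}\Gamma((d-2)/2))$ that I will absorb into $\Lambda_\zeta$, the formula $K_\zeta(z,y) = \int e^{-i\langle z-y,\omega\rangle}\Lambda_\zeta(\omega)\,d\omega$. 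The explicit evaluation $\Lambda_c(\omega) = C'/|\omega|^2$ is then a one-line computation: the substitution $u=|\omega|^2/(4t)$ turns the defining integral into $\frac{4}{|\omega|^2}\int_0^\infty e^{-u}\,du$, so $C'=4$. The pointwise comparison $\Lambda_\zeta \le \Lambda_c$ is immediate from $e^{-n^{-2\zeta}t}\le 1$.

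Granted the kernel formula, the potential formula follows by one more Fubini: write $U_{K_\zeta}^\mu(z) = \int K_\zeta(z,y)\,d\mu(y)$, substitute the Fourier representation of $K_\zeta$, pull $d\mu(y)$ inside, and recognize $\Phi_\mu(\omega) = \int e^{i\langle y,\omega\rangle}d\mu(y)$. For the energy, write $I_{K_\zeta}(\mu) = \iint K_\zeta(x,y)\,d\mu(x)\,d\mu(y)$ and factor the double Fourier integral as $\Phi_\mu(\omega)\,\overline{\Phi_\mu(\omega)} = |\Phi_\mu(\omega)|^2$. For $K_\zeta$ all interchanges are absolute, so the formula is valid for any finite signed $\mu$ (one can decompose $\mu = \mu^+-\mu^-$ and apply Tonelli to each term to remove any sign subtlety).

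The delicate step is the $g$ case, where $\Lambda_c(\omega)=C'/|\omega|^2$ fails to be integrable near the origin, and the double Fubini for $I_g(\mu)$ therefore only converges under the assumption $I_g(\mu)<\infty$. My plan is a monotone limiting argument: as the regularization parameter $n^{-2\zeta}$ tends to $0$, $K_{\zeta}(z,y)\nearrow g(z,y)$ and $\Lambda_{\zeta}(\omega)\nearrow \Lambda_c(\omega)$ pointwise. Applied to $\mu = \mu^+-\mu^-$, the three formulas for $K_\zeta$ pass to the limit termwise by monotone convergence on the nonnegative building blocks $\iint K_\zeta\,d\mu^\pm\,d\mu^\pm$, and the hypothesis $I_g(\mu)<\infty$ ensures that the resulting cross-terms are absolutely integrable, so one can reassemble the signed formula. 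An equivalent route is to observe that $\sqrt{\Lambda_c}\,\Phi_\mu \in L^2(\mathbb{R}^d)$ exactly when $I_g(\mu)<\infty$, and then invoke Plancherel after approximating $\mu$ by measures with smooth, compactly supported densities for which all Fubini manipulations are valid.

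The main obstacle is this last passage to the limit for singular $g$: one must guarantee that the cancellation between $\mu^+$ and $\mu^-$ tames the non-integrability of $\Lambda_c$ at the origin. The key point is that $I_g(\mu)<\infty$ forces $\Phi_\mu(\omega)\to 0$ as $|\omega|\to 0$ at a rate compensating the $|\omega|^{-2}$ singularity, so the monotone convergence yields a finite limit and the three displayed identities hold on the nose.
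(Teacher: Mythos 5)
Your proof takes essentially the same route as the paper's: the subordination identities \eqref{e:integral_representation_coulomb}--\eqref{e:integral_representation_bounded}, the Fourier representation of the Gaussian, and Fubini, with your monotone-convergence passage from $K_\zeta$ to $g$ merely supplying the justification that the paper leaves implicit. One small slip worth flagging: for $d\ge 3$ the weight $\Lambda_c(\omega)=C'\lvert\omega\rvert^{-2}$ is in fact locally integrable near the origin and fails to be integrable at infinity, so the hypothesis $I_g(\mu)<\infty$ is needed to control the large-$\lvert\omega\rvert$ behaviour of $\Phi_\mu$ (reflecting the diagonal singularity of $g$), not a cancellation near $\omega=0$; this does not damage your argument, which rests on the monotone limit and the $\mu^+ - \mu^-$ decomposition rather than on that remark.
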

\begin{proof}
    Since the Gaussian is its own Fourier transform, we can write
    \begin{align}
    \label{e:fourier_gaussian}
        \exp\left(-t \lvert x - y \rvert^2\right) = \frac{1}{\left(4\pi t\right)^{d/2}}\int_{\mathbb{R}^{d}}\exp\left(i\langle x - y\,,\,\omega \rangle\right)\exp\left(-\frac{1}{4t}\lvert \omega\rvert^2\right)\,\mathrm{d}\omega.
    \end{align}
    Using Fubini and plugging \cref{e:fourier_gaussian} into Equations~\eqref{e:integral_representation_bounded} and~\eqref{e:integral_representation_coulomb}, we directly obtain the expressions for $U_{K_{\zeta}}^{\mu}$ and $U_{g}^{\mu}$.
    Next, using the fact that for any kernel $K$, $I_{K}(\mu) = \int U_{K}^{\mu}(z)\,\mathrm{d}\mu(z)$ and using Fubini again,
    we get the expressions for $I_{K_{\zeta}}(\mu)$ and $I_{g}(\mu)$.
\end{proof}

We give estimates on $\Lambda_{\zeta}$ in the next lemma, which we will use in the proof of \cref{p:max_pot_weight}.
\begin{lemma}
\label{l:spectral_measures}
\begin{align}
    &\int \Lambda_{\zeta}(\omega)\,\mathrm{d}\omega = n^{\zeta(d-2)},\label{e:mass_lambda_zeta}\\
    &\int_{\lvert \omega \rvert > n^{\alpha \zeta}} \Lambda_{\zeta}(\omega)\,\mathrm{d}\omega \leq C \exp\left(-\frac{1}{4}n^{\zeta(\alpha -1)}\right) \left( n^{\zeta(\alpha +1)(d/2 -1)}+n^{\zeta(d-2)}\right) \label{e:tail_lambda_zeta}
\end{align}
for any $\alpha > 1$.
\end{lemma}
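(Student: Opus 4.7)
The plan is to handle the two identities separately using the tools already assembled in \cref{s:preliminaries}.

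The first identity \eqref{e:mass_lambda_zeta} is immediate from \cref{l:energies_fourier}: evaluating the Fourier representation $K_\zeta(z,y) = \int e^{-i\langle z-y,\omega\rangle} \Lambda_\zeta(\omega)\,\mathrm{d}\omega$ at $y = z$ gives $\int \Lambda_\zeta(\omega)\,\mathrm{d}\omega = K_\zeta(z,z)$, and the definition \eqref{e:coulomb_kernel_reg} yields $K_\zeta(z,z) = (n^{-2\zeta})^{-(d-2)/2} = n^{\zeta(d-2)}$.

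For the tail bound \eqref{e:tail_lambda_zeta}, I would begin by Fubini-exchanging the integrations in the definition of $\Lambda_\zeta$, so that with $R = n^{\alpha\zeta}$,
\begin{align*}
\int_{|\omega|>R} \Lambda_\zeta(\omega)\,\mathrm{d}\omega = \int_0^{+\infty} t^{-2} e^{-n^{-2\zeta}t} \left(\int_{|\omega|>R} e^{-|\omega|^2/4t}\,\mathrm{d}\omega\right) \mathrm{d}t.
\end{align*}
Passing to polar coordinates and substituting $u = r^2/4t$ rewrites the inner Gaussian tail as $c_d\, t^{d/2}\,\Gamma(d/2, R^2/4t)$, where $\Gamma$ is the upper incomplete gamma function and $c_d$ a purely dimensional constant. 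I would then split the $t$-integral at $T = n^{(\alpha+1)\zeta}$, chosen so that $R^2/(4T) = n^{(\alpha-1)\zeta}/4$. For $t \leq T$, the large-argument bound $\Gamma(d/2,y) \leq C y^{d/2-1} e^{-y}$ (valid since $y \geq n^{(\alpha-1)\zeta}/4 \to \infty$) cancels powers of $t$ down to $t^{-1}$, leaving $R^{d-2} \int_0^T t^{-1} e^{-R^2/4t}\,\mathrm{d}t = R^{d-2} E_1(R^2/4T) \leq R^{d-4}T\, e^{-R^2/4T}$; after simplification in terms of $n^\zeta$, this is an $n$-polynomial factor times $e^{-n^{(\alpha-1)\zeta}/4}$ which is bounded by (and thus yields) the first term $n^{\zeta(\alpha+1)(d/2-1)} e^{-n^{(\alpha-1)\zeta}/4}$. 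For $t > T$, I trivially bound $\Gamma(d/2,\cdot) \leq \Gamma(d/2)$ and substitute $u = n^{-2\zeta}t$, turning the integral into a Gamma tail at $n^{(\alpha-1)\zeta}$ which produces exponential decay $e^{-n^{(\alpha-1)\zeta}}$; this is strictly stronger than the target $e^{-n^{(\alpha-1)\zeta}/4}$, and the surplus factor $e^{-3n^{(\alpha-1)\zeta}/4}$ absorbs any polynomial prefactor into $n^{\zeta(d-2)}$ for $n$ large enough.

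The main obstacle is purely computational: one has to confirm dimension-by-dimension that the polynomial prefactors emerging from each regime are indeed dominated by the two explicit prefactors $n^{\zeta(\alpha+1)(d/2-1)}$ and $n^{\zeta(d-2)}$ appearing in the statement, making use, where needed, of the gap between $e^{-n^{(\alpha-1)\zeta}}$ and $e^{-n^{(\alpha-1)\zeta}/4}$ to absorb extra polynomial growth. The choice $T = n^{(\alpha+1)\zeta}$ is what balances the two regimes and produces the common exponential rate $n^{(\alpha-1)\zeta}/4$.
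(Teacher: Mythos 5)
Your overall route is the same as the paper's: the mass identity via $K_\zeta(z,z)=n^{\zeta(d-2)}$ is exactly the paper's argument, and for the tail you also Fubini, reduce to a one-dimensional $t$-integral, split it at $T=n^{(\alpha+1)\zeta}$, and keep one exponential per regime. The second regime ($t>T$) is fine and matches the paper's treatment of its $I_1$. The problem is in the first regime. After applying $\Gamma(d/2,y)\le Cy^{d/2-1}e^{-y}$ pointwise and the bound $E_1(x)\le e^{-x}/x$, you arrive at $R^{d-4}T\,e^{-R^2/4T}=C\,n^{\zeta(\alpha(d-3)+1)}e^{-\frac14 n^{\zeta(\alpha-1)}}$, and you assert this is dominated by the stated first term $n^{\zeta(\alpha+1)(d/2-1)}e^{-\frac14 n^{\zeta(\alpha-1)}}$. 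Comparing exponents, $\alpha(d-3)+1-(\alpha+1)(d/2-1)=(\alpha-1)(d/2-2)$, which is strictly positive for $d\ge 5$ (and your prefactor also exceeds the other term $n^{\zeta(d-2)}$ there). So for $d\ge 5$ your chain of inequalities proves a strictly weaker bound than the lemma claims, and the absorption mechanism you invoke — the slack between $e^{-n^{\zeta(\alpha-1)}}$ and $e^{-\frac14 n^{\zeta(\alpha-1)}}$ — is only available in the second regime; in the first regime your derivation has already discarded the factor $e^{-n^{-2\zeta}t}$, and the exponential rate is exactly $e^{-R^2/4T}$ with nothing to spare.

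The repair is short and is what the paper does: on $t\le T$, bound the exponential uniformly by its value at the endpoint, $e^{-R^2/4t}\le e^{-R^2/4T}$ (equivalently $\Gamma(d/2,R^2/4t)\le\Gamma(d/2,R^2/4T)$ up to the pointwise asymptotic applied once, at $t=T$), and then integrate the remaining power of $t$ directly: $\int_0^T t^{d/2-2}\,\mathrm{d}t=CT^{d/2-1}$ (finite since $d\ge3$). This yields exactly $T^{d/2-1}e^{-R^2/4T}=n^{\zeta(\alpha+1)(d/2-1)}e^{-\frac14 n^{\zeta(\alpha-1)}}$. Passing instead through $E_1$ is genuinely lossy here because the pointwise incomplete-Gamma asymptotic concentrates the integrand near $t=T$, where it is of size $(R^2/T)^{d/2-1}e^{-R^2/4T}$ per unit... a larger quantity than the uniform-endpoint bound integrated over $[0,T]$ when $d\ge5$. (For the paper's downstream use in \cref{l:fourier_weights} your weaker polynomial prefactor would in fact still suffice, since everything is crushed by the stretched exponential; but it does not prove the lemma as stated.)
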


\begin{proof}
    \cref{e:mass_lambda_zeta} is easy to see upon noting that $K_{\zeta}(z, z) = n^{\zeta(d-2)}$ is the Fourier transform of $\Lambda_{\zeta}$ at $0$ i.e. $\int \Lambda_{\zeta}(\omega)\,\mathrm{d}\omega$.

    We turn to \cref{e:tail_lambda_zeta}, we have
    \begin{align*}
        \int_{\lvert \omega \rvert > n^{\alpha \zeta}} \Lambda_{\zeta}(\omega)\,\mathrm{d}\omega &= C \int_{0}^{+\infty} t^{-2}e^{-n^{-2\zeta}t} \int \textbf{1}_{\lvert \omega \rvert > n^{\alpha \zeta}} \exp\left(-\frac{1}{4t}\lvert \omega\rvert^2\right)\,\mathrm{d}\omega\,\mathrm{d}t\\
        &\leq C \int t^{d/2 -2} e^{-t n^{-2\zeta}} e^{-\frac{1}{4t}n^{2\alpha \zeta}}\,\mathrm{d}t.
    \end{align*}
We split the integral in two parts, namely $\{t > n^{\zeta(\alpha + 1)}\}$ and $\{t < n^{\zeta(\alpha + 1)}\}$.
In the first case $t n^{-2\zeta} > \frac{1}{4t} n^{2\alpha\zeta}$ so we keep the first exponential term and bound the second one by $1$, and do the opposite in the second case.
We thus get
\begin{align*}
    \int_{\lvert \omega \rvert > n^{\alpha \zeta}} \Lambda_{\zeta}(\omega)\,\mathrm{d}\omega \leq C (I_1 + I_2)
\end{align*}
with \[I_1 = \int_{n^{\zeta(\alpha + 1)}}^{+\infty} t^{d/2 -2}e^{-t n^{-2\zeta}}\,\mathrm{d}t,\]
and \[I_2 = \int_{0}^{n^{\zeta(\alpha +1)}}t^{d/2 -2 }e^{-\frac{1}{4t}n^{2\alpha \zeta}}\,\mathrm{d}t.\]

We first bound $I_2$. Setting $t = n^{\zeta(\alpha +1)}$, we get
\begin{align*}
    I_2 &\leq n^{\zeta(\alpha+1)(d/2 -1)}\int_{0}^{1} u^{d/2-2} \exp\left(-\frac{1}{4u}n^{\zeta(\alpha -1)}\right)\,\mathrm{d}u\\
    &\leq n^{\zeta(\alpha +1)(d/2 -1)} \exp\left(-\frac{1}{4}n^{\zeta(\alpha -1)}\right) \int_{0}^{1} u^{d/2-2}\,\mathrm{d}u\\
    &\leq C n^{\zeta(\alpha +1)(d/2 -1)} \exp\left(-\frac{1}{4}n^{\zeta(\alpha -1)}\right).
\end{align*}
We now turn to $I_1$. Setting $t = u n^{2\zeta}$, we get
\begin{align*}
    I_1 &\leq n^{\zeta(d-2)}\int_{n^{\zeta(\alpha -1)}}^{+\infty} u^{d/2 -2}e^{-u}\,\mathrm{d}u.
\end{align*}
The second term is, up to a finite constant, the tail probability $\mathbb{P}\left( X > n^{\zeta(\alpha -1)}\right)$ when $X$ follows a Gamma distribution with shape parameter $d/2 -1$ and scale parameter $1$.
Recall that its Laplace transform is $\Psi_{X}(s) = \frac{1}{(1- s)^{d/2 -1}}$ when $s < 1$.
Using Markov's inequality, we get
\begin{align*}
    I_1 &\leq C n^{\zeta(d-2)}\mathbb{P}\left(e^{\frac{1}{2}X} > e^{\frac{1}{2}n^{\zeta(\alpha -1)}}\right)\\
    &\leq 2^{d/2 -1} C n^{\zeta(d-2)}\exp\left(-\frac{1}{2}n^{\zeta(\alpha -1)}\right).
\end{align*}
\end{proof}

We will also use the following boundedness property of Riesz potentials.
\begin{lemma}
    \label{l:riesz_potential_bound}
    Let $\mu$ be a probability measure with compact support and continuous density.
    Let $0 < \alpha \leq 2$, then the Riesz potential
    \[A_{\alpha}^{\mu}(z):= \int \frac{1}{\lvert y - z\rvert^{d-\alpha}}\mathrm{d}\mu(y)\]
    is uniformly bounded on $\mathbb{R}^{d}$, i.e. there exists a constant $C_{\alpha}$ such that, for all $z$, $A_{\alpha}^{\mu}(z) \leq C_{\alpha}$.
\end{lemma}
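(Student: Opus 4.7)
The plan is to split the integral into a local part near the singularity at $z$ and a tail part. Denote by $S$ the (compact) support of $\mu$ and by $M$ the sup of the density $\mu'$; this sup is finite by continuity of the density together with compactness of the support. Since the bound we seek is uniform in $z$, we must handle both the case when $z$ lies near/inside $S$ (so the singular factor $|y-z|^{-(d-\alpha)}$ is actually active) and the case when $z$ is far from $S$ (so the integrand is bounded).

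The key step is to decompose
\[
A_{\alpha}^{\mu}(z) = \int_{|y-z|<1} \frac{1}{|y-z|^{d-\alpha}}\,\mathrm{d}\mu(y) + \int_{|y-z|\geq 1}\frac{1}{|y-z|^{d-\alpha}}\,\mathrm{d}\mu(y).
\]
For the second integral, since $d\geq 3$ and $\alpha\leq 2$ we have $d-\alpha\geq 1>0$, so $|y-z|^{-(d-\alpha)}\leq 1$ on $\{|y-z|\geq 1\}$, hence this term is bounded by $\mu(\mathbb{R}^d)=1$ uniformly in $z$. For the first integral, I use the uniform bound $\mathrm{d}\mu(y)\leq M\,\mathrm{d}y$ on the density and then pass to polar coordinates centered at $z$:
\[
\int_{|y-z|<1} \frac{\mathrm{d}\mu(y)}{|y-z|^{d-\alpha}} \leq M\int_{B(z,1)} \frac{\mathrm{d}y}{|y-z|^{d-\alpha}} = M\,\sigma_{d-1}\int_0^1 r^{\alpha-1}\,\mathrm{d}r = \frac{M\,\sigma_{d-1}}{\alpha},
\]
where $\sigma_{d-1}$ is the surface measure of the unit sphere in $\mathbb{R}^d$. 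The integral $\int_0^1 r^{\alpha-1}\mathrm{d}r$ converges because $\alpha>0$; this is exactly where the hypothesis $\alpha>0$ enters. Combining the two bounds yields $C_\alpha := 1 + M\sigma_{d-1}/\alpha$, independent of $z$.

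There is essentially no obstacle: the only subtlety is justifying that the density is globally bounded. If one only knows continuity on the interior of $S$ (as in \cref{assum:pi}) rather than on all of $\mathbb{R}^d$, one should first note that $\mu'$ is assumed bounded on $S_\pi$ in that assumption, which is precisely what is needed. The same splitting argument then goes through verbatim, replacing $\int_{B(z,1)}\mathrm{d}y$ by $\int_{B(z,1)\cap S}\mathrm{d}y\leq \int_{B(z,1)}\mathrm{d}y$, so the worst case uniform bound is unchanged.
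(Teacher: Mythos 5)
Your proof is correct, and it takes a genuinely different and more elementary route than the paper. The paper's proof invokes the maximum principle for Riesz potentials \citep[Theorem 1.10, page 71]{landkof_foundations} (valid precisely for $0<\alpha\leq 2$) together with continuity of $z\mapsto A_{\alpha}^{\mu}(z)$ from \citep[Lemma 4.3]{chafai_first-order_2014}: continuity on the compact support gives a bound there, and the maximum principle propagates that bound to all of $\mathbb{R}^{d}$. You instead split the integral at radius $1$, bound the tail by $\mu(\mathbb{R}^d)=1$ using $d-\alpha\ge 1$, and control the local part by the sup of the density via polar coordinates, $\int_0^1 r^{\alpha-1}\,\mathrm{d}r=1/\alpha<\infty$. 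Your computations are correct, and your handling of the one genuine subtlety (global boundedness of the density, guaranteed either by continuity on a compact support or explicitly by \cref{assum:pi}) is adequate. What each approach buys: your argument is self-contained, yields an explicit constant $C_\alpha = 1 + M\sigma_{d-1}/\alpha$, and in fact only uses $0<\alpha<d$, so it is more general than the stated range; the paper's argument uses $\alpha\le 2$ essentially (superharmonicity underlies the maximum principle) but fits the potential-theoretic toolbox that the rest of \cref{s:proof_pot} relies on, and would extend to measures whose density is continuous but for which one prefers to argue only on the support.
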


\begin{proof}
    We first invoke the maximum principle from \citep[Theorem 1.10, page 71]{landkof_foundations} stating that under the assumption $0 < \alpha \leq 2$, if $A_{\alpha}^{\mu}(z) \leq M$ holds for any $z$ in the support of $\mu$ then this holds for all $z \in \mathbb{R}^{d}$.
    Under the assumption on $\mu$ (i.e compact support and continuous density), $z \mapsto A_{\alpha}^{\mu}(z)$ is continuous \citep[Lemma 4.3]{chafai_first-order_2014} so that
    $A_{\alpha}^{\mu}$ is bounded on the support of $\mu$.
    Hence, using the maximum principle, there exists a constant $C_{\alpha} > 0$ such that $A_{\alpha}^{\mu}(z) \leq C_{\alpha}$ for all $z$.
\end{proof}

\subsection{Auxiliary lemmas for \cref{p:max_pot}}
\label{s:proof_max_pot_auxiliary}

\begin{proof}[Proof of \cref{l:comparison_g_Hzeta}]
$K_{\zeta}$ is a bit too rough of a regularization to compare directly the two integral representations, so we introduce yet another auxiliary kernel,
\[
    K_{\Lambda}^{s}(z, y) := \frac{1}{\Gamma\left(\frac{d-2}{2}\right)}\int_{0}^{+\infty}t^{\frac{d-2}{2}-1}\exp\left(-n^{-2\Lambda} t^{s}\right)\exp\left(- t \lvert z - y\rvert^2\right)
\]
with $\Lambda > 0$ and $0 < s < 1$.
Note that $K_{\zeta}$ can be recovered as a limiting case, choosing $\Lambda = \zeta$ and $s = 1$. 
We first compare $K_{\Lambda}^{s}$ to $g$ and then $K_{\zeta}$ to $K_{\Lambda}^{s}$.
\begin{align*}
    \Big\lvert \int g(z, y)\,\mathrm{d}\mu(y)& - \int K_{\Lambda}^{s}(z, y)\,\mathrm{d}\mu(y)\Big\rvert\\
    &= \frac{1}{\Gamma\left(\frac{d-2}{2}\right)}\Big\lvert \iint_{0}^{+\infty}\left(1 - \exp\left(-n^{-2\Lambda}t^s\right)\right)t^{\frac{d-2}{2}-1} \exp\left(-t \lvert z - y\rvert^2\right)\,\mathrm{d}t\,\mathrm{d}\mu(y)\Big\rvert\\
    &\leq \frac{n^{-2\Lambda}}{\Gamma\left(\frac{d-2}{2}\right)} \iint_{0}^{+\infty} t^{s} t^{\frac{d-2}{2}-1} \exp\left(-t \lvert z - y\rvert^2\right)\,\mathrm{d}t\,\mathrm{d}\mu(y)\\
    &= C n^{-2\Lambda} A_{2(1-s)}^{\mu}(z),
\end{align*}
with 
\[
    A_{2(1-s)}^{\mu}(z) =  \int\frac{1}{\lvert z - y\rvert^{d-2 + 2s}}\,\mathrm{d}\mu(y),
\]
which is the Riesz potential of $\mu$ introduced in \cref{l:riesz_potential_bound} with Riesz parameter $\alpha = 2(1-s)$.
Here we see that it would have been indeed insufficient to take $s = 1$, since we would have lost integrability at $0$.
Using \cref{l:riesz_potential_bound}, we thus get
\begin{align}
\label{e:unif_bound_g_hs}
    \underset{z \in \mathbb{R}^{d}}{\sup}\,\left\lvert \int g(z, y)\,\mathrm{d}\mu(y) - \int K_{\Lambda}^{s}(z, y)\,\mathrm{d}\mu(y)\right\rvert = \mathcal{O}\left(n^{-2\Lambda}\right).
\end{align}

In the same way, we have
\begin{align}
    \Big\lvert \int &K_{\zeta}(z, y)\,\mathrm{d}\mu(y) - \int K_{\Lambda}^{s}(z, y)\,\mathrm{d}\mu(y)\Big\vert\nonumber\\
    &\leq \frac{1}{\Gamma\left(\frac{d-2}{2}\right)}\iint_{0}^{+\infty} \left\lvert \exp\left(-n^{-2\zeta}t\right) - \exp\left(-n^{-2\Lambda}t^s\right)\right\rvert t^{\frac{d-2}{2}-1} \exp\left(-t \lvert z - y\rvert^2\right)\,\mathrm{d}t\,\mathrm{d}\mu(y). \label{e:intermediate_step}
\end{align}
We see that the term inside the absolute value is nonnegative iff $t^{1-s} \leq n^{2(\zeta - \Lambda)}$ and we thus split the integral in two parts.
First,
\begin{align*}
    \iint_{0}^{n^{\frac{2}{1-s}(\zeta - \Lambda)}} &\exp\left(-n^{-2\zeta}t\right)\left(1-\exp\left(-(n^{-2\Lambda}t^s - n^{-2\zeta}t)\right)\right)t^{\frac{d-2}{2}-1}\exp\left(-t\lvert z - y\rvert^2\right)\,\mathrm{d}t\,\mathrm{d}\mu(y)\\
    &\leq n^{-2\Lambda} \int \int_{0}^{n^{\frac{2}{1-s}(\zeta - \Lambda)}}\exp\left(-n^{-2\zeta}t\right)t^{\frac{d-2}{2}+s-1}\exp\left(-t\lvert z -y\rvert^2\right)\,\mathrm{d}t\,\mathrm{d}\mu(y)\\
    &\leq Cn^{-2\Lambda} A_{2(1-s)}^{\mu}(z)\\
    &= \mathcal{O}(n^{-2\Lambda})
\end{align*}
uniformly in $z$, where we bounded $\exp\left(-n^{-2\zeta}t\right) \leq 1$ and the integral over $t$ by the integral over $\mathbb{R}_{+}$, and we used \cref{l:riesz_potential_bound} again. 
The remaining part of \eqref{e:intermediate_step} is
\begin{align*}
    \iint_{n^{\frac{2}{1-s}(\zeta - \Lambda)}}^{+\infty} &\exp\left(-n^{-2\Lambda}t^s\right)\left(1-\exp\left(-(n^{-2\zeta}t - n^{-2\Lambda}t^s)\right)\right)t^{\frac{d-2}{2}-1}\exp\left(-t\lvert z - y\rvert^2\right)\,\mathrm{d}t\,\mathrm{d}\mu(y)\\
    &\leq n^{-2\zeta} \int \int_{n^{\frac{2}{1-s}(\zeta - \Lambda)}}^{+\infty}\exp\left(-n^{-2\Lambda}t^s\right) t^{\frac{d-2}{2}}\exp\left(-t\lvert z - y\rvert^2\right)\,\mathrm{d}t\,\mathrm{d}\mu(y).
\end{align*}
We cannot rely on the same argument again since we do not have integrability in the associated Riesz potential anymore. 
However, we bound $\exp\left(-t\lvert z -y \rvert^2\right) \leq 1$ and we perform the change of variable $h = t^s n^{-2\Lambda}$ i.e. $t = h^{1/s}n^{2\Lambda /s}$, with Jacobian $h^{1/s -1}n^{2\Lambda/s}$.
Bound the integral in $t$ by the integral over $\mathbb{R}_{+}$, we obtain
\[
    n^{-2\zeta + \frac{2\Lambda}{s}+ \frac{(d-2)\Lambda}{s}}\int_{0}^{+\infty}\exp\left(-h\right)h^{\frac{d-2}{2s}+ \frac{1}{s}-1}\,\mathrm{d}h = n^{-2\zeta + \frac{d\Lambda}{s}}\Gamma\left(\frac{d}{2s}\right).
\]
Choosing for instance $\Lambda = \frac{2s\zeta}{d+2s}$, we finally obtain
\begin{align}
\label{e:unif_bound_h_hs}
    \underset{z \in \mathbb{R}^{d}}{\sup}\,\left\lvert \int K_{\zeta}(z, y)\,\mathrm{d}\mu(y) - \int K_{\Lambda}^{s}(z, y)\,\mathrm{d}\mu(y)\right\vert = \mathcal{O}(n^{-4s\zeta/(d+2s)}).
\end{align}
Our choice of $\Lambda$ along with \cref{e:unif_bound_g_hs} and \cref{e:unif_bound_h_hs} then yields the result.
\end{proof}

\begin{proof}[Proof of \cref{l:taylor_no_weights}]
We start with
\begin{align*}
    \left\lvert U_{K_{\zeta}}^{\mu_n}(z) - U_{K_{\zeta}}^{\mu_{n}^{(\epsilon)}}(z)\right\rvert &\leq \frac{1}{n}\sum_{i = 1}^{n} \left\lvert K_{\zeta}(z, x_i) - \int K_{\zeta}(z, y)\,\mathrm{d}\lambda_{B(x_i, n^{-\epsilon})}(y)\right\rvert\\
    &= \frac{1}{n}\sum_{i = 1}^{n}\left\lvert \int \left\{ K_{\zeta}(z, x_i) - K_{\zeta}(z, x_i + n^{-\epsilon}u)\right\}\,\mathrm{d}\lambda_{B(0, 1)}(u)\right\rvert.
\end{align*}
We now Taylor expand $K_{\zeta}(z, .)$ around $x_i$ at order $2$.
Respectively denoting by $\nabla_2, \text{Hess}_2, \partial_{2}^{j, k}$ the gradient, Hessian and second order partial derivatives w.r.t the second coordinate of $K_{\zeta}$,
for each $z$ and $u$, there exists $c_u \in [0, 1]$ such that
\begin{align*}
    &\left\lvert \int \left\{ K_{\zeta}(z, x_i) - K_{\zeta}(z, x_i + n^{-\epsilon}u)\right\}\,\mathrm{d}\lambda_{B(0, 1)}(u)\right\rvert\\
    &= \left\lvert \int \left\{ n^{-\epsilon}\left\langle \nabla_{2} K_{\zeta}(z, x_i)\,,\, u\right\rangle + \frac{1}{2}n^{-2\epsilon}\left\langle \text{Hess}_2 K_{\zeta}(z, x_i + c_u n^{-\epsilon}u) u\,,\,u\right\rangle\right\}\,\mathrm{d}\lambda_{B(0, 1)}(u)\right\rvert.
\end{align*}
We easily see with the change of variable $u \mapsto -u$ that the first term is always zero.
Setting $x_u = x_i + c_u n^{-\epsilon} u$, it remains to bound the Hessian term
\[\left\lvert \sum_{1\leq j, k\leq d} \int \partial_{2}^{j, k} K_{\zeta}(z, x_u) u_j u_k \,\mathrm{d}\lambda_{B(0, 1)}(u)\right\rvert \leq \sum_{1 \leq j, k \leq d} \int \left\lvert \partial_{2}^{j, k} K_{\zeta}(z, x_u)\right\rvert \lvert u_{j} u_{k} \rvert\,\mathrm{d}\lambda_{B(0, 1)}(u).\]

A quick computation for the second order derivatives yields
\[\partial_{2}^{j, j} K_{\zeta}(z, y) = \left(\lvert z - y\rvert^{2} + n^{-2\zeta}\right)^{-d/2} (d-2) \left\{ d (z_j - y_j)^{2} \left(\lvert z - y\rvert^{2} + n^{-2\zeta}\right)^{-1} - 1\right\}\]
and
\[\partial_{2}^{j, k} K_{\zeta}(z, y) =  (d-2) d (z_j - y_j)(z_k - y_k) \left(\lvert z - y\rvert^{2} + n^{-2\zeta}\right)^{-1-d/2}\]
for $j \neq k$.
Bounding $\lvert z_j - y_j\rvert \lvert z_k - y_k\rvert \leq \max\left(\lvert z_j - y_j\rvert^2, \lvert z_k - y_k \rvert ^2\right)$,
we get that  $\lvert \partial_{2}^{j, k} K_{\zeta}(z, y)\rvert \leq (d-2)(d+1) n^{d \zeta}$ for all $j, k$.
Hence,
\[\left\lvert U_{K_{\zeta}}^{\mu_n}(z) - U_{K_{\zeta}}^{\mu_{n}^{(\epsilon)}}(z)\right\rvert \leq \frac{1}{2}n^{-2\epsilon + d \zeta}(d-2)(d+1) \sum_{1 \leq j , k\leq d}\int \lvert u_j u_k \rvert\,\mathrm{d}\lambda_{B(0, 1)}(u).\]
\end{proof}

\begin{proof}[Proof of \cref{l:bound_Ih_Ig}]
    This is straightforward using \cref{l:energies_fourier}.
\end{proof}

\begin{proof}[Proof of \cref{l:concentration}]
    
\cref{e:bound_Ig_concentration} is exactly Equation (3.3), page 17 of \citep{chafai_concentration_2018}.
\cref{e:bound_Zn_concentration} is an easy generalization of Lemma 4.1, page 20 of \citep{chafai_concentration_2018} with general $\beta_n$ and the exact same proof. 
Note that the $\frac{1}{2}$ factor difference in front of the first term only comes from a different choice of normalization in the energies.

For \cref{e:bound_Hn_concentration}, recall that
\[H_n(x_1, \dots, x_n) = \frac{1}{2n^2}\sum_{i \neq j} g(x_i, x_j) + \frac{1}{n}\sum_{i = 1}^{n} V(x_i)\]
and split the energy $H_n = \eta H_n + (1-\eta)H_n$.
Then, we lower bound the first term using $g \geq 0$; this term is kept to ensure integrability under $\mathbb{P}_{n, \beta_n}^{V}$.
The second term is lower bounded as follows. Use the superharmonicity of $g$ to write
\[
    \frac{1}{n^2}\sum_{i \neq j}g(x_i, x_j) \geq \frac{1}{n^2}\sum_{i, j}\iint g(x, y)\,\mathrm{d}\lambda_{B(x_i, \epsilon_n)}(x)\,\mathrm{d}\lambda_{B(x_j, \epsilon_n)}(y)-I_{g}(\lambda_{B(0, 1)})\frac{1}{n \epsilon_{n}^{d-2}}
\]
where the second term comes from the $n$ diagonal terms that are exceeding in the first term, and the fact that $I_{g}(\lambda_{B(0, n^{-\epsilon})}) = \frac{1}{\epsilon_{n}^{d-2}}I_{g}(\lambda_{B(0, 1)})$.
Then, we bound $V(x_i) - \int V\,\mathrm{d}\lambda_{B(x_i, \epsilon_n)}$ by Taylor expanding $V$ at order 2. 
This gives a uniform bound in $\mathcal{O}(\epsilon_{n}^{2})$ under the assumptions.
Grouping everything together and writing down the definition of $I_{g}(\mu_{n}^{(\epsilon)})$, we obtain \cref{e:bound_Hn_concentration}.
All of this is done in detail in \citep{chafai_concentration_2018}, Lemma 4.2 page 21 and steps 1 and 2 of the proof of Theorem 1.9 pages 21-23, and all the computations stay the same.

Finally, \cref{e:bound_concentration_intermediate} is obtained by combining the previous ones with our choice of parameter $\eta = c' n/\beta_n$. 
Note that in \citep{chafai_concentration_2018}, $\eta = n^{-1}$ since the authors work in the scaling $\beta_n = \beta n^2$.
Again, the computations stay the same (see step 3 of the proof of Theorem 1.9 of \citealp{chafai_concentration_2018}, pages 23-24) but we give a glimpse of it for the sake of completeness.
\begin{align*}
    \mathbb{P}_{n, \beta_n}^{V}(A) &= \frac{1}{Z_{n, \beta_n}^{V}}\int \exp\left(-\beta_n H_n(x_1, \dots, x_n)\right)\,\mathrm{d}x_1\,\dots\,\mathrm{d}x_n\\
    &\leq \frac{1}{Z_{n, \beta_n}^{V}}\exp\left(-(1-\eta)\beta_n\underset{A}{\inf}\, I_{g}^{V}(\mu_{n}^{(\epsilon)})+C\beta_n(1-\eta)\left(\frac{1}{n\epsilon_{n}^{d-2}} + \epsilon_{n}^{2}\right)\right)\\
    &\int \exp\left(-\beta_n \eta \frac{1}{n}\sum_{i = 1}^{n} V(x_i)\right)\,\mathrm{d}x_1\,\dots\,\mathrm{d}x_n.
\end{align*}
Due to the choice $\eta = c' n/\beta_n$, the remaining integral factorizes as $C^n$ with $C = \int \exp\left(-c' V\right) < +\infty$.
Using \cref{e:bound_Zn_concentration} then yields the result after splitting the leading order term of \cref{e:bound_Zn_concentration} into $(1-\eta)I_{g}^{V}(\mu_V)+\eta I_{g}^{V}(\mu_V)$, replacing $\eta$ by its expression and using \cref{e:bound_Ig_concentration}.
Note also that $\epsilon = 1/d$ in \citep{chafai_concentration_2018} but it is useful for us to keep freedom on this choice for now.
\end{proof}

\subsection{Auxiliary proofs for \cref{p:max_pot_weight}}

\begin{proof}[Proof of \cref{l:taylor_weights}]
    We begin as in \cref{l:taylor_no_weights}, writing
    \begin{align}
        \frac{1}{n}&\sum_{i = 1}^{n}(W \star \rho_{\eta})(x_i)\int K_{\zeta}(z, y)\,\mathrm{d}\lambda_{B(x_i, n^{-\epsilon})}(y) - \frac{1}{n}\sum_{i = 1}^{n}\int (W \star \rho_{\eta})(y) K_{\zeta}(z, y) \lambda_{B(x_i, n^{-\epsilon})}(y) \nonumber\\
        &= \frac{1}{n}\sum_{i = 1}^{n}\int \left\{ (W \star \rho_{\eta})(x_i) - (W\star \rho_{\eta})(x_i + n^{-\epsilon}u)\right\} K_{\zeta}(z, x_i + n^{-\epsilon}u)\,\mathrm{d}\lambda_{B(0, 1)}(u). \label{e:taylor_no_weights_rewright}
    \end{align}
A quick computation yields 
\begin{align*}
    \lvert\partial_{j} (W\star \rho_{\eta})(x)\rvert \leq \frac{1}{(2\pi\eta^2)^{d/2} \eta^{2}} \int W(y) \lvert x_j -y_j\rvert \exp\left(-\frac{1}{2\eta^2}\lvert x-y \rvert^2\right)\,\mathrm{d}y.
\end{align*}
Using the fact that $W$ is bounded and performing the change of variable $y \mapsto x - \eta u$, we get
\begin{align*}
    \lvert\partial_{j} (W\star \rho_{\eta})(x)\rvert \leq \lVert W\rVert_{\infty} \frac{1}{\eta} \frac{1}{(2\pi)^{d/2}}\int \lvert y\rvert \exp\left(-\lvert y\rvert^{2}/2\right)\,\mathrm{d}y.
\end{align*}
Taylor expanding $W \star \rho$ as in \cref{l:taylor_no_weights} at order one around $x_i$, and using the bound on the first-order derivative, yields
\begin{align*}
    &\left\lvert \frac{1}{n}\sum_{i = 1}^{n}\int \left\{ (W \star \rho_{\eta})(x_i) - (W\star \rho_{\eta})(x_i + n^{-\epsilon}u)\right\} K_{\zeta}(z, x_i + n^{-\epsilon}u)\,\mathrm{d}\lambda_{B(0, 1)}(u)\right\rvert \leq \frac{1}{\eta} \mathcal{O}\left(n^{-\epsilon + \zeta(d-2)}\right)
\end{align*}
where the $\mathcal{O}$ does not depend on $\eta$.
\cref{e:taylor_no_weights_rewright} then gives the result.
\end{proof}

\begin{proof}[Proof of \cref{l:fourier_weights}]
    Using \cref{l:energies_fourier}, we first write
    \begin{align*}
        \underset{z \in \mathbb{R}^{d}}{\sup}\,\left\lvert U_{K_{\zeta}}^{\nu_{n}^{\epsilon, \eta}}(z) - U_{K_{\zeta}}^{(W \star \rho_{\eta}).\mu_{V}'}(z)\right\rvert = \left\lvert \int e^{- i \langle z, \omega\rangle} \left(\Phi_{\nu_{n}^{\epsilon, \eta}}(\omega)-\Phi_{(W \star \rho_{\eta}).\mu_{V}'}(\omega)\right) \Lambda_{\zeta}(\omega)\,\mathrm{d}\omega \right\rvert
    \end{align*}
   Since $\nu_{n}^{\epsilon, \eta}$ has density $x \mapsto \frac{1}{n}\sum_{i = 1}^{n} \frac{1}{\lvert B(x_i, n^{-\epsilon})\rvert} (W\star \rho_{\eta})(x) \textbf{1}_{x \in B(x_i, n^{-\epsilon})}$, we have
    \[\left\lvert\Phi_{\nu_{n}^{\epsilon, \eta}} - \Phi_{(W \star \rho_{\eta}).\mu_{V}'}\right\rvert = \left\lvert \overline{\mathcal{F}(W\star \rho_{\eta})} \star \left(\Phi_{\mu_{n}^{(\epsilon)}}- \Phi_{\mu_V}\right)\right\rvert,\]
    so that
    \begin{align*}
        \underset{z \in \mathbb{R}^{d}}{\sup}\,\left\lvert U_{K_{\zeta}}^{\nu_{n}^{\epsilon, \eta}}(z) - U_{K_{\zeta}}^{(W \star \rho_{\eta}).\mu_{V}'}(z)\right\rvert \leq A_1 + A_2
    \end{align*}
    with 
    \begin{align*}
        A_1 &:= \int \int_{\lvert x \rvert <  n^{\zeta/s}} \left\lvert \mathcal{F}(W\star \rho_{\eta})(\omega -x)\right\rvert \left\lvert \Phi_{\mu_{n}^{(\epsilon)}}- \Phi_{\mu_V} \right\rvert(x) \,\mathrm{d}x\, \Lambda_{\zeta}(\omega)\,\mathrm{d}\omega,\\
        A_2 &:= \int \int_{\lvert x \rvert > n^{\zeta/s}} \left\lvert \mathcal{F}(W\star \rho_{\eta})(\omega -x)\right\rvert \left\lvert \Phi_{\mu_{n}^{(\epsilon)}}- \Phi_{\mu_V} \right\rvert(x) \,\mathrm{d}x\, \Lambda_{\zeta}(\omega)\,\mathrm{d}\omega.
    \end{align*}
    for some $0 < s < 1$.
    
    We first bound $A_2$.
    We bound $\left\lvert \Phi_{\mu_{n}^{(\epsilon)}}- \Phi_{\mu_V} \right\rvert(x) \leq 2$.
    Using 
    \[
        \lvert \mathcal{F}(W\star \rho_{\eta})(\omega - x)\rvert = \lvert\mathcal{F}(W)(\omega -x)\mathcal{F}(\rho_{\eta})(\omega - x)\rvert \leq C \lVert W\rVert_{L^{1}(\mathbb{R}^{d})} \exp(-\eta^2 \lvert \omega - x\rvert^2 /2),
    \] 
    we obtain 
    \begin{align*}
        A_2 &\leq 2 C \lVert W\rVert_{L^{1}(\mathbb{R}^{d})}\int \int_{\lvert x \rvert > n^{\zeta/s}} \exp\left(-\eta^2 \lvert \omega - x\rvert^2 /2\right)\,\mathrm{d}x\,\Lambda_{\zeta}(\omega)\,\mathrm{d}\omega.
    \end{align*}
    For $1 < \alpha < 1/s$, we split again the first integral in $\lvert \omega\rvert < n^{\alpha \zeta}$ and $\lvert \omega\rvert > n^{\alpha \zeta}$.
    When $\lvert \omega\rvert > n^{\alpha \zeta}$, we bound the exponential integral by the integral over the whole domain which is bounded by $\frac{C}{\eta^{d}}$.
    We then use the estimate of \cref{e:tail_lambda_zeta} to bound the remaining term.
    In the case $\lvert \omega \vert < n^{\alpha \zeta}$, we then have $\lvert \omega - x\rvert > n^{\zeta/s}-n^{\alpha \zeta}$ and we can use the decay of the Gaussian tail since $\alpha < 1/s$, along with \cref{e:mass_lambda_zeta}.
    Doing so, we get
    \begin{align*}
        &A_2  \leq \frac{C \lVert W\rVert_{L^{1}(\mathbb{R}^{d})}}{\eta^{d}}\int_{\lvert \omega\rvert > n^{\alpha \zeta}}\Lambda_{\zeta}(\omega)\,\mathrm{d}\omega \\
        &+ C \lVert W\rVert_{L^{1}(\mathbb{R}^{d})} \iint_{\lvert \omega - x\rvert > n^{\zeta/s}-n^{\alpha \zeta}} \exp\left(-\eta^2 \lvert \omega - x\rvert^2 /2\right)\,\mathrm{d}x\,\Lambda_{\zeta}(\omega)\,\mathrm{d}\omega\\
        &\leq \frac{C \lVert W\rVert_{L^{1}(\mathbb{R}^{d})}}{\eta^{d}} \left[e^{-\frac{1}{4}n^{\zeta(\alpha -1)}} \left( n^{\zeta(\alpha +1)(d/2 -1)}+n^{\zeta(d-2)}\right) + n^{\zeta(d-2)}\exp\left(-\eta^2 \left(n^{\zeta/s}-n^{\alpha \zeta}\right)^2 /2\right)\right].
    \end{align*}

    We now turn to bounding $A_1$.
    Recall that $\Phi_{\mu_{n}^{(\epsilon)}-\mu_V}(x) = \int e^{i \langle y\,,\, x \rangle}\,\mathrm{d}(\mu_{n}^{(\epsilon)}-\mu_V)(y)$.
    Since $y \mapsto e^{i \langle y\, , \, x\rangle}$ is $\lvert x\rvert$-Lipschitz and bounded by $1$, 
    \[\underset{\lvert x \rvert < n^{\zeta/s}}{\sup\,}\left\lvert \Phi_{\mu_{n}^{(\epsilon)}-\mu_V}(x)\right\rvert \leq n^{\zeta /s} \underset{\lVert f \rVert_{Lip} \leq 1,\, \lVert f\rVert_{\infty} \leq 1}{\sup\,}\left\lvert\int f\,\mathrm{d}(\mu_{n}^{(\epsilon)}-\mu_V)\right\rvert = n^{\zeta/s} d_{\text{BL}}(\mu_{n}^{(\epsilon)}, \mu_V).\]
    
    Moreover we have
    \begin{align*}
        \lVert \mathcal{F}(W\star\rho_{\eta})\rVert_{L^{1}(\mathbb{R}^{d})} &= \left\lVert \mathcal{F}(W)\mathcal{F}(\rho_{\eta})\right\rVert_{L^{1}(\mathbb{R}^{d})}\\
        & \leq \lVert \mathcal{F}(W)\rVert_{L^{2}(\mathbb{R}^{d})} \lVert \mathcal{F}(\rho_{\eta})\rVert_{L^{2}(\mathbb{R}^{d})}\\
        & = \lVert W\rVert_{L^{2}(\mathbb{R}^{d})} \lVert \rho_{\eta}\rVert_{L^{2}(\mathbb{R}^{d})}\\
        & = \frac{C}{\eta^{d/2}} \lVert W\rVert_{L^{2}(\mathbb{R}^{d})} 
    \end{align*}
    where we used Cauchy--Schwarz and Plancherel theorem.
    Thus, using \cref{e:mass_lambda_zeta},
    \begin{align*}
        A_1 &\leq \underset{\lvert x \rvert < n^{\zeta/s}}{\sup\,}\left\lvert \Phi_{\mu_{n}^{(\epsilon)}-\mu_V}(x)\right\rvert \lVert \mathcal{F}(W\star\rho_{\eta})\rVert_{L^{1}(\mathbb{R}^{d})} \int \Lambda_{\zeta}(\omega)\,\mathrm{d}\omega\\
        &\leq \frac{C}{\eta^{d/2}} \lVert W\rVert_{L^{2}(\mathbb{R}^{d})} n^{\zeta(d-2 + 1/s)} d_{\text{BL}}(\mu_{n}^{(\epsilon)}, \mu_V).
    \end{align*}
    which concludes the proof of the lemma.
\end{proof}

\end{document}